\DeclareMathOperator*{\argmin}{arg\,min}
\newtheorem{theorem}{Theorem}[section]
\newtheorem{lemma}{Lemma}[section]
\newcommand{\C}{\mathbb{C}}
\newtheorem{proposition}{Proposition}
\begin{document}

%

%

\twocolumn[

\aistatstitle{Efficient and passive learning of networked dynamical systems driven by non-white exogenous inputs}

\aistatsauthor{ Harish Doddi$^{*1}$ \And Deepjyoti Deka$^{*2}$ \And Saurav Talukdar$^3$ \And Murti Salapaka$^4$ }

\aistatsaddress{ $^1$Department of Mechanical Engineering, University of Minnesota Twin Cities\\
$^2$Theoretical Division T-5, Los Alamos National Laboratory\\ $^3$Google Inc.\\ $^4$Department of Electrical \& Computer Engineering, University of Minnesota Twin Cities\\
$^*$ Equal Contribution} ]

\begin{abstract}
We consider a networked linear dynamical system with $p$ agents/nodes. We study the problem of learning the underlying graph of interactions/dependencies from observations of the nodal trajectories over a time-interval $T$. We present a regularized non-casual consistent estimator for this problem and analyze its sample complexity over two regimes: (a) where the interval $T$ consists of $n$ i.i.d. observation windows of length $T/n$ (restart and record), and (b) where $T$ is one continuous observation window (consecutive). Using the theory of $M$-estimators, we show that the estimator recovers the underlying interactions, in either regime, in a time-interval that is logarithmic in the system size $p$. To the best of our knowledge, this is the first work to analyze the sample complexity of learning linear dynamical systems \emph{driven by unobserved not-white wide-sense stationary (WSS) inputs}. 
\end{abstract}

\section{INTRODUCTION}

A networked linear dynamical system (LDS) is a network of agents/nodes, each of whose state evolves over time (in discrete or continuous steps) as a \emph{linear} function of an external excitation and the states of its neighboring nodes in the network. The framework of LDS has been used to model dynamics in systems biology \citep{porreca2008structural,koh2009using}, financial markets \citep{sandefur1990discrete}, energy \citep{inchauspe2015dynamics}, transportation \citep{stathopoulos2003multivariate} and other critical networks \citep{ascione2013simplified,kroutikova2007state}. Learning the dependencies, or topology learning, in a networked LDS is crucial for inference of influence pathways and subsequent control for the corresponding networks. As such, strategies to recover the underlying network structure from nodal time-series in LDS have been researched and can be classified into two categories: active and passive. Active learning involves efficient manipulation or interventions of nodal dependencies and injecting exogenous inputs into the LDS to infer the edges in the network by identifying the resulting changes \citep{dankers2015errors}. Passive methods, on the other hand, use historical or streaming time-series of nodal states to infer the underlying topology. Our work falls within the domain of passive structure estimation. Very few works discuss learning such systems but are limited to the asymptotic regime (infinite sample limit). Examples include \citet{materassi2012problem,talukdar2015reconstruction,talukdar2020physics}.

\textbf{Prior Work:} Tractable passive topology learning in networked LDS and Vector Auto-Regressive processes (VAR) has been shown using the framework of $l_1$-regularized regression (\citep{basu2015regularized,loh2012high} and references therein), where the focus is on extending the results from the static Lasso or Graphical Lasso \citep{tibshirani1996regression,friedman2008sparse,meinshausen2006high} to one with correlated samples, by showing that properties such as Restricted strong convexity hold. A similar approach for continuous time stochastic differential equation has been studied in \citet{bento2010learning}. A graphical model for VAR processes, without performance guarantees, has been proposed in \citet{songsiri2010graphical}. Least squared regression based identification of unstable dynamical systems using a single trajectory has been studied in \citet{simchowitz2018learning,faradonbeh2018finite}. However, these algorithms rely on the assumption that unobserved exogenous inputs to the system are i.i.d. or white Gaussian noise, or that the exogenous inputs are observed \citep{fattahi2018data,fattahi2019learning}.

\textbf{Temporally correlated inputs:} Learning networks excited by temporally correlated inputs is necessary to extend prior work restricted to learning under i.i.d inputs. Examples of systems excited by colored inputs include power grids, thermal networks of buildings \citet{talukdar2020physics,materassi2010topological}, as well as time-series of air quality, stock market, and magnetoencephalography datasets \citep{dahlhaus2000graphical, tank2015bayesian, bach2004learning}. 

On learning networked LDS with temporally correlated but unobserved inputs, \citet{dahlhaus2000graphical, jung2015graphical, tank2015bayesian} relate the Conditional Independence Graph (CIG) to the support structure of the inverse Power Spectral Density (PSD) of the states. However this is insufficient for true topology recovery as the CIG includes additional edges, \cite{materassi2012problem}. \citet{talukdar2020physics} presents a consistent algorithm for exact recovery in this setting using non-causal regression (Wiener filter), that forms the starting point for the analysis in this article. \citet{quinn2015directed} recovers the underlying topology in networked LDS using the framework of directed mutual information. However, these works do not provide for guarantees in the finite sample regime, aside from numerical examples. 

The overarching goal of this work is thus to provide a structure learning algorithm for networked LDS driven by temporally-correlated inputs, with guarantees on its performance for finite lengths of state trajectories. We present a \emph{regularized Wiener filter estimator} for this problem and determine the observation window $T$ necessary to guarantee correct estimation over two regimes: (a) where $T$ consists of $n$ i.i.d. observation windows of length $T/n$ (restart and record), and (b) where $T$ is one continuous observation window (consecutive).

The rest of the article is organized as follows. In Section \ref{sec:main_results}, we describe the mathematical model of networked linear dynamical system and our consistent learning algorithm. The main results are presented in Theorems \ref{thm:theorem2.1}, \ref{thm:structure} and \ref{thm:theorem2.2}. Section \ref{sec:analysis} contains results on M-estimators used in the proof of our theorems, with sketches of proofs in Section \ref{sec:Main_theorem_proofs}. Section \ref{sec:results} contains simulation results, and Section \ref{sec:conclusions} summarizes the article and includes potential extensions and generalizations. 
\section{MAIN RESULTS} \label{sec:main_results}
Consider a graph $G= (V,E)$ of $p+1$ nodes in set $V = \{1,...,p+1\}$ and undirected edge set $E \subset V\times V$. We denote the set of two-hop neighbors in $G$ by set $E_M$, where $E_M=\{(i,j)| (ij) \in E \text{ or } \exists k, \text{ s.t } (ik), (jk) \in E\}$ (see Fig.~\ref{fig:Illustration}). Note that $E_M \setminus E$ is the set of `strict' two-hop neighbors in the graph $G$, that do not form edges in $E$.
 Each node $i\in V$ is associated with a real-valued scalar state variable $\{x_i(k), k \in \mathbb{Z}\}$ that evolves in discrete time \footnote{we discuss extension to continuous time and higher order models in Section \ref{sec:conclusions}} according to the following linear dynamical equation: 
 \small
\begin{align}\label{eqn:LDM_timedomain}
 x_i(k+1) = h_{ii}x_i(k) + \sum_{(ij) \in E, j\neq i} h_{ij}x_j(k)+ e_i(k), 
\end{align}
\normalsize
where, $\{e_i(k), k \in \mathbb{Z}\}$, is an exogenous input. While samples of $x_i(k)$ are correlated in time due to the system dynamics, prior work on guaranteed learning of networked LDS include only temporally uncorrelated or white excitations/inputs $e_i(k)$. In this work, we consider $e(k)_{k\in\mathbb{Z}} = [e_1(k)... e_{p+1}(k)]^T$ to be a zero-mean \textbf{Wide-Sense Stationary (WSS)} Gaussian process, uncorrelated across nodes, i.e., $\forall k_1, k_2, \tau \in \mathbb{Z}$, $\mathbb{E}[e(k_1)] = \mathbb{E}[e(k_2)] =0$, and $\mathbb{E}[e(k_1+\tau)e(k_1)^T] = \mathbb{E}[e(k_2+\tau)e(k_2)^T].$ The time-series vector $x(k)_{k \in \mathbb{Z}} = [x_1(k)...x_{p+1}(k)]^T \in \mathbb{R}^{p+1}$ is thus a zero mean jointly Gaussian WSS processes. 

The frequency domain representation of Eq.~\ref{eqn:LDM_timedomain} is obtained by taking the Z-transform ($\mathcal{Z}[.]$) on both sides of Eq.~\ref{eqn:LDM_timedomain}. Substituting $z= e^{\iota f}$ for a frequency $f \in [0,2\pi)$, and rearranging for $X_i(f) := \mathcal{Z}[x_i]\rvert_{z= e^{\iota f}}$, we obtain the following:
\small
\begin{align}\label{eqn:LDM_freqdomain}
 X_i(f) = &\sum_{(ij) \in E, j\neq i} H_{ij}(f)X_j(f)+ P_i(f), \text{ where, }\\
 H_{ij}(f) &:=[\mathcal{Z}[h_{ij}](z-\mathcal{Z}[h_{ii}])^{-1}]\rvert_{z= e^{\iota f}}, (ij)\in E, \nonumber \\
 P_i(f) &=[\mathcal{Z}[e_i](z-\mathcal{Z}[h_{ii}])^{-1}]\rvert_{z= e^{\iota f}}. \nonumber
\end{align}
\normalsize
Here, $H_{ij}(f)$ is a linear time-invariant filter. Note that each edge $(ij)\in E$ corresponds to non-zero transfer functions $H_{ij}$ and $H_{ji}$, that may be different. 

Given time-series of $x(k)$, we define the lagged correlation matrix
$R_x(\tau)$ for $\tau\in \mathbb{Z}$, and its Discrete Time Fourier Transform (DTFT), namely, power spectral density $\Phi_x$, at frequency $f$ as 
\small
\begin{align} 
 &R_x(\tau)= \mathbb{E}(x(\tau)x^{T}(0)), \nonumber \\
 &\Phi_x = \mathcal{F}\{R_x(\tau)\} = \lim_{m\rightarrow \infty} \sum_{\tau=-m}^{m} R_x(\tau)e^{-\iota f \tau}. \label{eq:PSD}
\end{align}
\normalsize

\begin{figure}[htb]
 \begin{center}
\includegraphics[width=0.8\columnwidth]{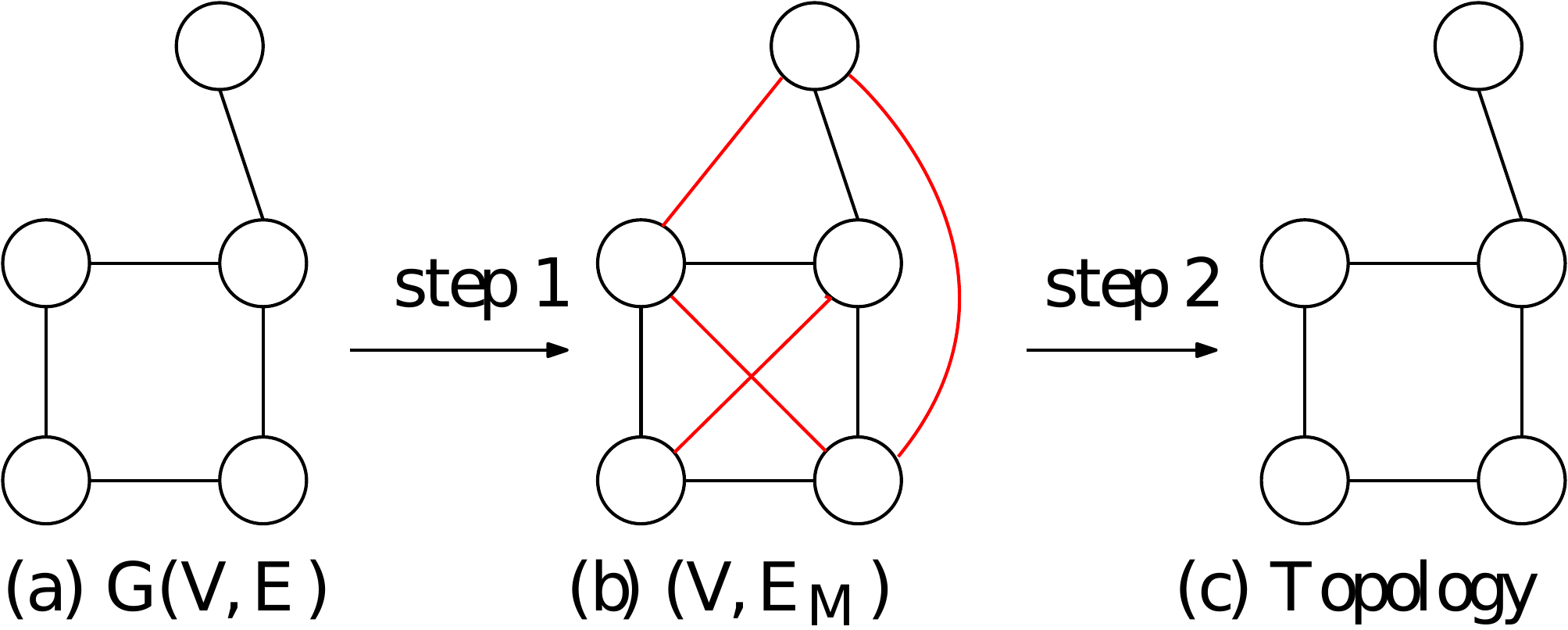} 
\caption{Topology Learning: In step $1$, the two-hop neighborhood set $E_M$ is estimated using Lemma \ref{talukdar_lemma1}(a). In step $2$, strict two-hop neighbors (red colored edges) are eliminated from $E_M$ using Lemma \ref{talukdar_lemma1}(b). 
	\label{fig:Illustration}}
 \end{center}
 \end{figure}
 \textbf{Topology Learning:} Consider $n$ state trajectories of all the nodes in $V$ for the graph $G = (V,E)$ excited by \emph{unobserved} WSS (temporally correlated) inputs, such that the $r^{th}$ state trajectory ($x^r$) has $N$ samples. Let $T=n\times N$ be equal to the total observation window. For the $r^{th}$ state trajectory, define the Discrete Fourier Transform (DFT)\footnote{computed at frequency $f = \frac{2\pi l}{N},~ l \in \{0,\cdots, N-1 \}$ unless explicitly mentioned} is
\small
\begin{align}\label{def:X_i}
 X^r_i &= \frac{1}{\sqrt{N}}\sum_{k=0}^{N-1} x_i^r(k) e^{-\iota f k}, X^r_{\overline{i}} = \frac{1}{\sqrt{N}}\sum_{k=0}^{N-1} x_{\overline{i}}^r(k) e^{-\iota f k}, 
\end{align}
\normalsize
where, $r \in\{1, \cdots,n \}$, $x_{\overline{i}}^r=$ $[x_1^r, \cdots,x_{i-1}^r,x_{i+1}^r$, $ \cdots,x_{p+1}^r]^T$.
Construct $\mathcal{Y} = [X_i^1, \cdots, X_i^n ]^T \in \C^n$ and $\mathcal{X} = [X_{\overline{i}}^1, \cdots, X_{\overline{i}}^n]^T \in \C^{n \times p}$ respectively. We assume that $\mathcal{X}$ and $\mathcal{Y}$ are column-normalized, that is,
\small
\begin{align} \label{def:columnNormalized}
 \frac{\|\mathcal{Y} \|_2}{\sqrt{n}} \leq 1, \quad \frac{\|\mathcal{X}(*,l) \|_2}{\sqrt{n}} \leq 1, \ \forall l \in \{1, \cdots,p\}. 
 \end{align}
 \normalsize
 
Column-normalization is a common data pre-processing step encountered in practice to scale the nodal state trajectories prior to regression. Eq. \ref{def:columnNormalized} is not necessary for Eq. \ref{eqn:LDM_timedomain} to be stable. For any quantity $\beta \in \mathbb{C}$, we use $\Re(\beta)$ and $\Im(\beta)$ to denote its real and imaginary components. 

We list the following result from \citet{talukdar2020physics} that enables consistent estimation of all edges in $E$ (as described in Figure \ref{fig:Illustration}), using nodal state trajectories.

\begin{lemma}[\citet{talukdar2020physics}]\label{talukdar_lemma1}
For $i \in V$ of a well-posed networked LDS, the Wiener filter $W_i$ in Eq.~\ref{eqn:RWF} satisfies (a) $W_i[j] \neq 0$ if and only if $(ij) \in E_M$ (b) for $(ij) \in E_M$, $\Im( {W}_i[j]) \neq 0$ if and only if $(ij)$ is a true edge in $G$.
\small
\begin{align} \label{eqn:RWF}
 W_i = \lim_{n,N \to \infty} \argmin_{\beta \in \mathbb{C}^p} \frac{1}{2n}\| \mathcal{Y} - \mathcal{X}\beta \|_2^2.
\end{align}
\normalsize
\end{lemma}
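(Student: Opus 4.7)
In the infinite-sample limit, the $\ell_2$ problem in Eq.~\ref{eqn:RWF} reduces at each frequency $f$ to the frequency-domain MMSE linear predictor of $X_i$ from $X_{\overline{i}}$. For jointly Gaussian zero-mean WSS processes, a block-matrix (Schur-complement) identity expresses each coefficient of this predictor, up to conjugation and a real positive scaling, in terms of the $(i,j)$ entry of the inverse power spectral density:
\begin{align*}
 W_i[j](f) \;\propto\; \overline{\Phi_x^{-1}(f)[i,j]}, \qquad j \neq i.
\end{align*}
Consequently, $W_i[j]$ and $\Phi_x^{-1}[i,j]$ share the same support, and $\Im(W_i[j]) = 0$ iff $\Im(\Phi_x^{-1}[i,j]) = 0$. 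Both claims of the lemma therefore reduce to a structural analysis of $\Phi_x^{-1}$.

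Writing the dynamics in matrix form as $(I-H)X = P$, where $H$ has zero diagonal, $H_{ij}(f) = h_{ij}/(e^{\iota f}-h_{ii})$ for $(ij) \in E$, and the input PSD $\Phi_P$ is \emph{diagonal} (because the $e_i$ are uncorrelated across nodes), I obtain the factorization $\Phi_x^{-1} = (I-H)^* \Phi_P^{-1} (I-H)$. For $i \neq j$ this yields $\Phi_x^{-1}[i,j] = \sum_k \overline{A_{ki}} D_{kk} A_{kj}$, with $A := I-H$ and $D := \Phi_P^{-1}$. The term indexed by $k$ survives only if both $A_{ki}$ and $A_{kj}$ are nonzero, i.e., if $k$ is adjacent-or-equal to both $i$ and $j$. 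This happens at $k=i$ or $k=j$ precisely when $(ij) \in E$, and at some $k \notin \{i,j\}$ precisely when $i,j$ share a common neighbor. Invoking well-posedness to rule out exact cancellations, I conclude that $\Phi_x^{-1}[i,j] \neq 0$ iff $(ij) \in E_M$, giving part (a).

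For part (b) the key observation is that each filter $H_{ki}(f) = h_{ki}/(e^{\iota f}-h_{kk})$ has a \emph{real} numerator and a denominator that depends only on $k$. Therefore every common-neighbor cross term is
\begin{align*}
 \overline{H_{ki}} H_{kj} \;=\; \frac{h_{ki}\, h_{kj}}{|e^{\iota f}-h_{kk}|^2} \;\in\; \mathbb{R}.
\end{align*}
For a strict two-hop pair $(ij) \in E_M \setminus E$, we have $H_{ij} = H_{ji} = 0$, so only these real common-neighbor contributions remain and $\Im(\Phi_x^{-1}[i,j]) = 0$. For a true edge $(ij) \in E$, the additional terms $-D_{ii}H_{ij}$ (from $k=i$) and $-D_{jj}\overline{H_{ji}}$ (from $k=j$) appear, and each carries a factor $1/(e^{\iota f}-h_{\cdot\cdot})$ whose imaginary part is proportional to $\sin f$ and thus nonzero at generic $f$.

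The main obstacle is ruling out pathological cancellations in part (b): the two edge contributions $-D_{ii}H_{ij}$ and $-D_{jj}\overline{H_{ji}}$ might in principle sum to a real value at isolated frequencies or even on a zero-measure set for finely tuned $(h_{ij}, h_{ji}, h_{ii}, h_{jj})$. I would resolve this by interpreting $\Im(W_i[j]) \neq 0$ in the ``for almost every $f$'' sense and by appealing to the well-posedness hypothesis on the LDS to exclude parameter configurations that force the imaginary part to vanish on a positive-measure set of frequencies. This non-degeneracy argument is the subtle point that distinguishes part (b) from the essentially combinatorial part (a) and that justifies using $\Im(W_i[j])$ to discriminate true edges from strict two-hop neighbors in Fig.~\ref{fig:Illustration}.
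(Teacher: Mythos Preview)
Your proposal is correct and matches the paper's approach: the paper does not give a self-contained proof but states (deferring to \citet{talukdar2020physics}) that $W_i[j]=-[\Phi_x^{-1}(i,i)]^{-1}\Phi_x^{-1}(i,j)$ and that the conclusion follows from ``algebraic properties of $\Phi_x^{-1}$'' derived from Eq.~\ref{eqn:LDM_freqdomain}. Your Schur-complement identification of $W_i$ with entries of $\Phi_x^{-1}$, the factorization $\Phi_x^{-1}=(I-H)^*\Phi_P^{-1}(I-H)$, and your real-versus-complex analysis of common-neighbor versus direct-edge terms are precisely these steps spelled out, including the non-degeneracy caveat that the paper absorbs into ``well-posedness.''
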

The proof of Lemma \ref{talukdar_lemma1} (see \citet{talukdar2020physics} for details) follows by showing that $W_i[j]=-[\Phi_x^{-1}(i,i)]^{-1}\Phi_x^{-1}(i,j)$. The result then follows from algebraic properties of $\Phi_x^{-1}$ (inverse power spectral density) derived from Eq.~\ref{eqn:LDM_freqdomain}. It is worth noting that, in the time-domain, Eq.~\ref{eqn:RWF} is equivalent to a \emph{non-causal} regression of the time-series, termed as ``Wiener filter'' \citep{materassi2012problem}. This is effectively a \emph{non-causal extension} of the connection between the inverse covariance matrix and the neighborhood regression used in learning static Gaussian graphical models \citep{friedman2008sparse,meinshausen2006high,ravikumar2008model}.

For the finite sample regime, we study the problem of estimating edges $\hat{E}$ such that $\mathbb{P}[\hat{E} =E] \geq 1-\epsilon$ for any user-defined threshold $\epsilon \in (0,.5)$. Estimating $\Phi_x$ and then inverting it requires significant amount of data in the high dimensional setting. Instead, we use a regularized version of Eq.~\ref{eqn:RWF} as our graph estimator.

\subsection{Regularized Wiener Filter Estimator}
We propose a Regularized Wiener Filter Estimator $\hat{W}_i$ for a node $i \in V$ as follows:
\small
\small\begin{align} \label{eqn:RWFE}
 \hat{W}_i(\lambda) = \argmin_{\beta \in \C^p} \frac{1}{2n}\|\mathcal{Y} - \mathcal{X}\beta \|_2^2 + \lambda \| \beta \|_1,
\end{align}
\normalsize
where, $\lambda >0$ is the regularization parameter. As $\beta \in \mathbb{C}^p$, $\|\beta\|_1$ is equal to the $1,2$-group norm over $[\Re(\beta)\ \Im(\beta)]$. For thresholds $\tau_1,\tau_2$, we construct sets 
\small
\begin{align}\label{eqn:thresholding1}
&\hat{E}_M:=\{(ij)| |\hat{W}_i[j]|+|\hat{W}_j[i]| \geq \tau_1 \},\nonumber \\
&\hat{E}:=\{(ij)|(ij) \in \hat{E}_M, |\Im(\hat{W}_i[j])|+|\Im(\hat{W}_j[i])| \geq \tau_2 \}. 
\end{align}
\normalsize
In the remaining of the article, we find sufficient conditions on $n,\ N$ and $\lambda$ and fix thresholds such that $\mathbb{P}[\hat{E} = E] \geq 1-\epsilon$, for given $\epsilon \in (0,0.5)$. 

We consider two settings for the state trajectories: 

(i) \textbf{Restart \& Record (i.i.d):} The $n$ trajectories of length $N$ each are independent. Here, we start recording and then stop recording after collecting $N$ measurements. For the next trajectory, we restart the recording again with a random state initialization and collect the measurements. Hence, it is a process of restart and record, and $\{x^r\}_{r=1}^N$ are i.i.d. trajectories. 

(ii) \textbf{Consecutive (non i.i.d):} In the second and more realistic setting, we consider the $n$ state trajectories to be consecutive, i.e., $\{x^r\}_{r=1}^N$ correspond to $N$-length intervals from a single larger trajectory of length $n\times N$. See Figure.~\ref{fig:recording} for the two settings considered in this article.

\begin{figure}[htb]
 \begin{center}
\includegraphics[width=.9\columnwidth]{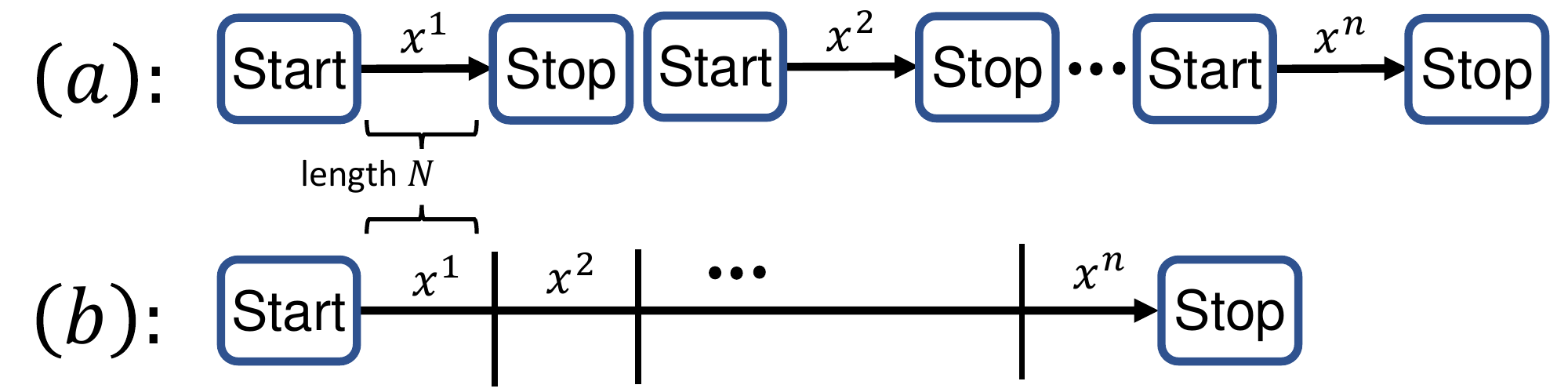} 
\caption{(a) i.i.d trajectories are generated using restart \& record (b) a single trajectory is generated for the non i.i.d, consecutive setting. 
\label{fig:recording}}
\end{center}
\end{figure}

\subsection{Main Results} \label{sec:main_results}
The error in topology learning (see Eq.~\ref{eqn:thresholding1}) arises due to the finite $N$ in computing $\mathcal{X}$ in Eq.~\ref{def:X_i}, as well as the finite $n$ in estimating $W_i$ in Eq.~\ref{eqn:RWFE}. For our analysis, we consider the following non-zero parameters of the LDS over graph $G =(V,E)$.\small
\begin{align} \label{eqn:constants}
L &= \lambda_{min}(\Phi_x^{-1}); \ U = \lambda_{max}(\Phi_x^{-1}); \ d= \max_{i \in V} \deg_{E_M}(i); \nonumber \\
 C&>0,\delta>1,\text{ s.t.}\|R_x(\tau) \|_2 \leq C\delta^{-|\tau|},\ \tau \in \mathbb{Z}; \nonumber \\
 &m_i =\min_{j|(ij)\in E} |\Im(W_i[j])|,\ m = \argmin_{i \in V} m_i.\end{align}
 \normalsize

Note that under persistently exciting inputs, $\Phi_e$ is a positive definite matrix almost surely at all frequencies \citep{materassi2012problem}. Further, $G$ is a connected network. Hence, under standard well-posedness assumptions, $(\mathbb{I}-H)$ and $\Phi^{-1}_P$ in Eq.~\ref{eqn:LDM_freqdomain} are full-ranked and $L\geq 0$. Using norm bounds for matrix products, $L$ and $U$ can be bounded in terms of maximum and minimum eigen-values of $(\mathbb{I} -H^*)(\mathbb{I} -H)$ and $\Phi^{-1}_P$. $C,\delta$ relate to the rate of decay of temporal correlation in the system states. Higher values of $C$ and $\delta^{-1}$ imply greater temporal correlation. $d$, the maximum degree due to edges in $E_M$, is upper-bounded by the square of the maximum nodal degree in $G = (V,E)$. 

The following two theorems bound the errors in estimating $W_i$ by regression (Eq.~\ref{eqn:RWFE}), for restart \& record (i.i.d), and consecutive (non-i.i.d) trajectories respectively. 

\begin{theorem}[\emph{restart \& record}- squared error] \label{thm:theorem2.1}
Let $\epsilon_1 >0$, $i \in V$, $4\sqrt{\frac{3\log(8p/\epsilon_1)}{nL}} \leq \lambda \leq \frac{m_i}{1536 U\sqrt{d}}$, $N \geq \frac{4CU\delta^{-1}}{(1-\delta^{-1})^2}$, and $n \geq max\{\frac{1}{c}\log{\frac{4c'}{\epsilon_1}}, {(3456)^2}(\frac{U}{L}+0.5) \log(2p) d, 3(6144)^2\frac{U^2}{L}d\log(\frac{8p}{\epsilon_1})(\frac{1}{m_i})^2 \}$ where the $n$ trajectories are i.i.d. Then $\| \hat{W}_i(\lambda)-W_i\|_2 \leq \frac{m_i}{2}$ holds with a probability of at least $1-\epsilon_1$. $c,\ c'$ are universal positive constants and $U,\ L,\ C,\ \delta,\ m_i,\ d$ are defined in Eq.~\ref{eqn:constants}. \end{theorem}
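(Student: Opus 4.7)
The plan is to apply the standard $M$-estimator / Lasso analysis template (the machinery promised in Section~\ref{sec:analysis}) to the complex-valued regression in Eq.~\ref{eqn:RWFE}. Two ingredients drive the argument: (a) a restricted strong convexity (RSC) / lower eigenvalue bound on the empirical Hessian $\tfrac{1}{n}\mathcal{X}^{*}\mathcal{X}$ on a cone of $d$-sparse vectors, and (b) a deviation bound on the ``noise gradient'' $\tfrac{1}{n}\mathcal{X}^{*}(\mathcal{Y}-\mathcal{X}W_i)$ in the dual norm (elementwise $\ell_{\infty}$, or, since $\beta \in \C^p$ makes $\|\beta\|_1$ a $(1,2)$-group norm over $[\Re(\beta)\ \Im(\beta)]$, a group dual norm). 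Given these with the stated constants, the deterministic Lasso bound yields $\|\hat{W}_i(\lambda)-W_i\|_2 \leq c\sqrt{d}\,\lambda/L$; combined with $\lambda \leq m_i/(1536 U\sqrt{d})$ this delivers the target $m_i/2$.

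The first step is to characterize the row distribution of $[\mathcal{Y}\ \mathcal{X}]$. Because the $n$ trajectories are i.i.d.\ and jointly Gaussian WSS, the rows are i.i.d.\ complex Gaussian vectors whose covariance at the fixed frequency $f$ is the windowed sum $\sum_{\tau=-(N-1)}^{N-1}\!\bigl(1-|\tau|/N\bigr)R_x(\tau)e^{-\iota f\tau}$. Invoking the geometric-decay hypothesis $\|R_x(\tau)\|_2 \leq C\delta^{-|\tau|}$, the gap between this finite-$N$ covariance and $\Phi_x(f)$ is of order $C\delta^{-1}/[N(1-\delta^{-1})^2]$, so the condition $N \geq 4CU\delta^{-1}/(1-\delta^{-1})^2$ is precisely what I need to keep the row spectrum in $[L/2,\,2U]$ and thus preserve the population spectral bounds of $\Phi_x$ and $\Phi_x^{-1}$ up to constants.

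With the row distribution pinned down, I would next concentrate $\tfrac{1}{n}\mathcal{X}^{*}\mathcal{X}$ around its expectation via a sub-Gaussian sample-covariance inequality (after real/imaginary splitting to use real Gaussian concentration). Restricting to vectors supported on $\leq d$ coordinates and applying a covering argument gives RSC with constant $\geq L/2$ on probability $\geq 1-2\exp(-c n)$ as soon as $n \gtrsim d\log(p)(U/L + 1/2)$, matching the middle term inside the $\max$ condition on $n$. For the gradient, the residual $\mathcal{Y}-\mathcal{X}W_i$ is the finite-sample innovation that is zero-mean complex Gaussian and, in population, orthogonal to $\mathcal{X}$ by the Wiener-filter optimality of $W_i$; its variance is controlled by $1/L$. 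A union bound over the $p$ complex coordinates then gives $\|\tfrac{1}{n}\mathcal{X}^{*}(\mathcal{Y}-\mathcal{X}W_i)\|_{\infty} \lesssim \sqrt{\log(p/\epsilon_1)/(nL)}$, which is exactly the lower bound $\lambda \geq 4\sqrt{3\log(8p/\epsilon_1)/(nL)}$ that makes $\lambda$ dominate twice the noise gradient.

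The main obstacle is the propagation of constants through the finite-$N$ DFT approximation and the column-normalization in Eq.~\ref{def:columnNormalized}: although rows are i.i.d., the row covariance is not $\Phi_x(f)$ but a Fej\'{e}r-windowed version, and one must simultaneously verify (i) the windowed spectrum remains in $[L/2,\,2U]$, (ii) the sub-Gaussian parameters used in both the sample-covariance and the gradient concentration inherit these bounds cleanly, and (iii) the column normalization rescales both the Hessian and the gradient by comparable factors so RSC and the gradient bound remain compatible. Once these are handled, the standard Lasso deterministic inequality combined with $\lambda \leq m_i/(1536 U\sqrt{d})$ and the third term $n \gtrsim U^2 d\log(p/\epsilon_1)/(L m_i^2)$ in the $\max$ yields $\|\hat{W}_i(\lambda)-W_i\|_2 \leq m_i/2$, and a final union bound over the three high-probability events (row-covariance deviation, RSC, gradient tail) costs at most $\epsilon_1$.
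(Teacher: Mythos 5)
Your proposal follows essentially the same route as the paper: i.i.d.\ complex Gaussian rows with a Fej\'er-windowed covariance controlled via the condition on $N$ (the paper's Lemma~\ref{lem:N_bound}), a restricted-eigenvalue bound on $\tfrac{1}{n}\mathcal{X}^H\mathcal{X}$ over the cone (Lemma~\ref{lem:REP}), an $\ell_\infty$ bound on the noise gradient fixing the lower bound on $\lambda$ (Lemma~\ref{lem:Lambda}), and the deterministic $M$-estimator inequality of \citet{negahban2012unified} (Proposition~\ref{prop:Proposition1}) combined with a union bound over the two events.

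One point needs correcting: you have swapped the roles of $L$ and $U$. In Eq.~\ref{eqn:constants} these are the extreme eigenvalues of $\Phi_x^{-1}$, so the row covariance $\hat{\Phi}_{\overline{i}}$ has spectrum in roughly $[\tfrac{1}{2U},\ \tfrac{1}{L}+\tfrac{1}{2U}]$, not $[L/2,\,2U]$, and the restricted-eigenvalue constant is $\kappa = \tfrac{1}{256U}$, not $L/2$. Consequently the deterministic bound is $\|\hat{W}_i-W_i\|_2 \leq \tfrac{3}{\kappa}\lambda\sqrt{d} = 768U\lambda\sqrt{d}$, which is what pairs with $\lambda \leq \tfrac{m_i}{1536U\sqrt{d}}$ to give $m_i/2$. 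Your stated bound $c\sqrt{d}\lambda/L$ does not close the final step against that upper bound on $\lambda$ (it would yield $cm_i/(1536UL)$), so as written the last line of your argument does not deliver $m_i/2$; with the roles of $L$ and $U$ restored it does, and everything else in your outline matches the paper.
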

\begin{theorem}[\emph{consecutive}- squared error] \label{thm:theorem2.2}
Let $\epsilon_1 >0$ such that $\epsilon_1 \geq \frac{8}{p^2}$ and $i \in V$. 
For $\ 4\sqrt{\frac{(3+ {24\sqrt{3}UC(\delta -1)^{-1}})\log(8p/\epsilon_1)}{nL}} \leq \lambda \leq \frac{m_i}{1536U\sqrt{d}}$, $N \geq \frac{4CU\delta^{-1}}{(1-\delta^{-1})^2}$, and $n \geq \max \{33^2 \log{p} [\frac{U}{L}+0.5+{4\sqrt{8}\frac{CU}{\delta -1}}]^2, 2\log (\frac{8p^2}{p^2 \epsilon_1-8}), (3+ {24\sqrt{3}UC(\delta -1)^{-1}})(6144)^2\frac{U^2}{L}d\log(\frac{8p}{\epsilon_1})(\frac{1}{m_i})^2 \}$, where the $n$ trajectories are non-i.i.d. Then $\| \hat{W}_i(\lambda)-W_i\|_2 \leq \frac{m_i}{2}$ holds with a probability of at least $1-\epsilon_1$, where $c,\ c'$ are universal positive constants. $U,\ L,\ C,\ \delta,\ m_i,\ d$ are defined in Eq.~\ref{eqn:constants}.
\end{theorem}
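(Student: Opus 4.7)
The plan is to follow the standard $\ell_1$-regularized $M$-estimator template, adapting the i.i.d.\ analysis of Theorem~\ref{thm:theorem2.1} to handle the dependence between consecutive trajectory segments. Two probabilistic events must hold with high probability: (i) a gradient-domination bound, $\lambda \ge 2\|\frac{1}{n}\mathcal{X}^*(\mathcal{Y}-\mathcal{X}W_i)\|_\infty$, controlling the noise in the loss at the true Wiener filter $W_i$; and (ii) a restricted strong convexity (RSC) bound on $\frac{1}{n}\mathcal{X}^*\mathcal{X}$ with constant of order $L$. Once both hold, the usual Lasso inequality from the $M$-estimator machinery in Section~\ref{sec:analysis} yields $\|\hat W_i(\lambda) - W_i\|_2 \le c\lambda \sqrt{d}/L$, and the third lower bound on $n$ in the theorem statement is tuned precisely so this is at most $m_i/2$.

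For the gradient bound, note that $\frac{1}{n}\mathcal{X}^*(\mathcal{Y}-\mathcal{X}W_i)$ remains a zero-mean complex Gaussian object (conditional on $N$), but is no longer an average of independent terms. I would write each of its entries as a bilinear form in the underlying Gaussian process $x$, and bound the variance by the trace of a Toeplitz-type covariance whose block diagonal reproduces the i.i.d.\ variance of Theorem~\ref{thm:theorem2.1}, and whose off-diagonal blocks are controlled by $\sum_{\tau\ne 0}\|R_x(\tau)\|_2 \le 2C\delta/(\delta-1)$ via the geometric decay assumption $\|R_x(\tau)\|_2 \le C\delta^{-|\tau|}$. A Hanson--Wright-style (or direct Gaussian $\chi^2$) tail bound then produces exactly the inflated threshold $\lambda \ge 4\sqrt{(3+24\sqrt{3}UC(\delta-1)^{-1})\log(8p/\epsilon_1)/(nL)}$ appearing in the hypothesis.

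For RSC, I would show that $\frac{1}{n}\mathcal{X}^*\mathcal{X}$ concentrates around its expectation in entrywise supremum norm. Its expectation is $O(1/N)$-close to $\Phi_x$ at the relevant DFT frequency (a standard Parseval/DTFT approximation under $N\ge 4CU\delta^{-1}/(1-\delta^{-1})^2$), so it is lower bounded by $L$ up to a small perturbation. Under dependence, controlling $\|\frac{1}{n}\mathcal{X}^*\mathcal{X} - \mathbb{E}[\frac{1}{n}\mathcal{X}^*\mathcal{X}]\|_\infty$ again reduces to a Hanson--Wright bound for correlated Gaussian quadratic forms, with the cross-segment covariance summable as $C/(\delta-1)$; this contributes the $4\sqrt{8}CU/(\delta-1)$ term to the effective conditioning constant and produces the first lower bound on $n$. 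The auxiliary $2\log(8p^2/(p^2\epsilon_1-8))$ term (and the constraint $\epsilon_1\ge 8/p^2$) arises from a union bound over the $O(p^2)$ entries interacting with a dependence-induced minimum failure probability of $\Theta(p^{-2})$ in the concentration step.

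The main obstacle, relative to the i.i.d.\ case, is exactly the concentration of the Gaussian bilinear form $\frac{1}{n}\mathcal{X}^*(\mathcal{Y}-\mathcal{X}W_i)$ and the quadratic form $\frac{1}{n}\mathcal{X}^*\mathcal{X}$ when the trajectory segments overlap in time. I expect the key technical tool to be a Hanson--Wright-type inequality in which the geometric decay $\|R_x(\tau)\|_2 \le C\delta^{-|\tau|}$ plays the role of a mixing condition: it yields summable cross-covariances that enter the effective variance as an additional $O(C/(\delta-1))$ term, degrading the constants but not the $n = \Omega(\log p)$ scaling. Every other step — the cone reduction for the $\ell_1$ penalty, the sparsity-pattern identification via $\tau_1,\tau_2$, and the passage from the $\ell_2$ filter bound to topology recovery — proceeds as in Theorem~\ref{thm:theorem2.1}.
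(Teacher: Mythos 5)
Your proposal is correct and follows the same architecture as the paper: verify the gradient-domination condition and restricted strong convexity separately, absorb the cross-segment dependence by summing $\|R_x(\tau)\|_2\le C\delta^{-|\tau|}$ over the off-diagonal blocks to inflate the relevant variance constants by $O(C/(\delta-1))$, and then invoke Proposition~\ref{prop:Proposition1} with $\kappa=1/(256U)$ so that the third lower bound on $n$ makes the $\lambda$-window nonempty and forces $768U\lambda\sqrt{d}\le m_i/2$. The differences are in the concentration machinery. For the gradient term the paper conditions on the column-normalized $\mathcal{X}$, whitens the error vector by $\mathcal{C}_1^{1/2}$ (with $\|\mathcal{C}_1\|_2\le \frac{3}{2L}+6\sqrt{3}\frac{U}{L}\frac{2C}{\delta-1}$ from Lemma~\ref{lem:C1_bound}), and applies Massart's Lipschitz concentration to a \emph{linear} Gaussian form, which is lighter than your Hanson--Wright bilinear treatment but lands on the same threshold. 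For RSC the paper does \emph{not} go through entrywise concentration of $\frac{1}{n}\mathcal{X}^H\mathcal{X}$: it applies a trace-deviation bound (Lemma I.2 of Negahban--Wainwright) directly to the quadratic form $\frac1n\|\mathcal{X}\Delta\|_2^2$, with $\|\mathbb{E}[Z_RZ_R^T]\|_2$ controlled via Lemma~\ref{lem:Q1Q2_upperbound}; your entrywise-plus-cone route ($\frac1n\|\mathcal{X}\Delta\|_2^2\ge \Delta^H\hat\Phi_{\overline{i}}\Delta - 16d\,\|\hat\Gamma-\Gamma\|_\infty\|\Delta\|_2^2$) would be uniform over the cone for free but would put an extra factor of $d$ into the first sample-size condition, which the stated bound $n\ge 33^2\log p[\cdots]^2$ does not carry. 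Finally, the constraint $\epsilon_1\ge 8/p^2$ and the $2\log(8p^2/(p^2\epsilon_1-8))$ term do not come from a union bound over $O(p^2)$ Gram entries as you suggest; they come from fixing $t=\sqrt{4\log p/n}$ in the quadratic-form deviation, which leaves an irreducible $4/p^2$ failure probability per application that must fit under $\epsilon_2-4/p^2$, with the residual $4\exp(-n/2)$ absorbed by the second lower bound on $n$.
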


Using Theorems \ref{thm:theorem2.1} and \ref{thm:theorem2.2}, we give the correctness of the thresholding procedure listed in Eq.~\ref{eqn:thresholding1}.

\begin{theorem}[structure learning] \label{thm:structure}
Let $\epsilon >0$, and $N \geq \frac{4CU\delta^{-1}}{(1-\delta^{-1})^2}$, with constants $U,\ L,\ C,\ \delta,\ m,\ d$ defined in Eq.~\ref{eqn:constants} and universal positive constants $c,c'$. Construct an undirected edge set $\hat{E}_M$ and $\hat{E}$ as per Eq.~\ref{eqn:thresholding1} with thresholds $\tau_1 =\tau_2=m$. Then $E = \hat{E}$ holds with a probability of at least $1- \epsilon$, if\\
(a) `restart \& record' (i.i.d.): $\frac{m}{1536 U\sqrt{d}} \geq \lambda \geq 4\sqrt{\frac{3\log(8p^2/\epsilon)}{nL}}$, and $n \geq max\{\frac{1}{c}\log{\frac{4c'p}{\epsilon}}, {(3456)^2}(\frac{U}{L}+0.5) \log(2p) d, 3(6144)^2\frac{U^2}{L}d\log{\frac{8p^2}{\epsilon}}\frac{1}{m^2} \}$. \\
(b) `consecutive' (non-i.i.d.): $\epsilon \geq \frac{8}{p}$, $\frac{m}{1536 U\sqrt{d}}\geq \lambda \geq 4\sqrt{\frac{(3+ {24\sqrt{3}UC(\delta -1)^{-1}})\log(8p^2/\epsilon)}{nL}}$, and $n \geq max\{ 33^2 \log{p} [\frac{U}{L}+0.5+{4\sqrt{8}\frac{CU}{\delta -1}}]^2, 2\log (\frac{8p^2}{p \epsilon-8}), (3+ {24\sqrt{3}UC(\delta -1)^{-1}})(6144)^2\frac{U^2}{L}d\log(\frac{8p^2}{\epsilon})\frac{1}{m^2} \}$.
\end{theorem}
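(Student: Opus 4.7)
The plan is to reduce Theorem \ref{thm:structure} to a union bound over nodes of the per-node squared-error guarantees in Theorems \ref{thm:theorem2.1}/\ref{thm:theorem2.2}, and then to read off exact recovery of $E_M$ and $E$ from the zero pattern of $W_i$ described by Lemma \ref{talukdar_lemma1}. Concretely, the two thresholding rules in Eq.~\ref{eqn:thresholding1} are designed so that an $\ell_2$-accuracy of $m/2$ on each $\hat W_i(\lambda)$ forces the correct combinatorial decision on every pair $(ij)$.

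First I would apply Theorem \ref{thm:theorem2.1} (for part (a)) or Theorem \ref{thm:theorem2.2} (for part (b)) at each node $i\in V$ with per-node tolerance $\epsilon_1 = \epsilon/p$. Under this substitution the factor $\log(8p/\epsilon_1)$ becomes $\log(8p^2/\epsilon)$ and matches the bounds on $\lambda$ and $n$ stated here; the $\epsilon_1$-dependent term $\tfrac{1}{c}\log\tfrac{4c'}{\epsilon_1}$ in Theorem \ref{thm:theorem2.1} becomes $\tfrac{1}{c}\log\tfrac{4c'p}{\epsilon}$; and the side condition $\epsilon_1\geq 8/p^2$ in Theorem \ref{thm:theorem2.2} becomes $\epsilon\geq 8/p$, which is exactly the extra hypothesis in part (b). Replacing the node-specific $m_i$ by the uniform $m=\min_i m_i$ only tightens both the upper bound on $\lambda$ and the $1/m_i^2$ factor in the lower bound on $n$, so every per-node requirement is implied by the uniform ones listed here. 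A union bound over the $p+1$ nodes then gives a good event $\mathcal{E}$ of probability at least $1-\epsilon$ on which $\|\hat W_i(\lambda)-W_i\|_2 \leq m_i/2$ for every $i$.

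Conditioning on $\mathcal{E}$, I would finish the argument by a deterministic case analysis on a pair $(ij)$, using that every coordinate of $\hat W_i-W_i$, and of $\Im(\hat W_i-W_i)$, is bounded in magnitude by $m/2$. If $(ij)\notin E_M$, Lemma \ref{talukdar_lemma1}(a) gives $W_i[j]=W_j[i]=0$, so $|\hat W_i[j]|+|\hat W_j[i]|\leq m=\tau_1$, rejecting $(ij)$ from $\hat E_M$ and hence from $\hat E$. If $(ij)\in E$, Lemma \ref{talukdar_lemma1}(b) yields $|\Im W_i[j]|\geq m_i\geq m$ and $|\Im W_j[i]|\geq m_j\geq m$, so the reverse triangle inequality gives both $|\hat W_i[j]|+|\hat W_j[i]|$ and $|\Im\hat W_i[j]|+|\Im\hat W_j[i]|$ at least $2m-m=m$, placing $(ij)$ in $\hat E$. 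Finally, if $(ij)\in E_M\setminus E$, Lemma \ref{talukdar_lemma1}(b) forces $\Im W_i[j]=\Im W_j[i]=0$, so $|\Im\hat W_i[j]|+|\Im\hat W_j[i]|\leq m=\tau_2$ and $(ij)$ is eliminated by the second threshold even if it survives the first. Combining the three cases yields $\hat E = E$ on $\mathcal{E}$.

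The main obstacle is not the combinatorial argument but the bookkeeping in the calibration step: one must verify that each lower bound on $\lambda$, $N$, and $n$ in Theorems \ref{thm:theorem2.1}/\ref{thm:theorem2.2} survives the substitutions $\epsilon_1\mapsto\epsilon/p$ and $m_i\mapsto m$, and is implied by the hypotheses of Theorem \ref{thm:structure}, without any hidden tightening when passing from $p$ to $p+1$ nodes. A minor subtlety worth flagging is that the thresholding inequalities in Eq.~\ref{eqn:thresholding1} are non-strict, so the exact boundary $|\hat W_i[j]|+|\hat W_j[i]|=m$ would nominally include $(ij)$; this is harmless because the boundary event has probability zero under the Gaussian model, and in any case can be absorbed into an arbitrarily small slack in the universal constants without changing the sample-complexity order.
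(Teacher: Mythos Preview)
Your proposal is correct and follows essentially the same route as the paper: set $\epsilon_1=\epsilon/p$, invoke Theorems \ref{thm:theorem2.1}/\ref{thm:theorem2.2} at every node, take a union bound over $V$, and then do the three-case combinatorial analysis on pairs $(ij)$ using Lemma \ref{talukdar_lemma1}. Your case analysis is in fact spelled out more carefully than the paper's, and your flags about the $p$ versus $p+1$ slack and the non-strict thresholds are apt (the paper absorbs the first via ``$\approx 1-\epsilon$ for large $p$'' and ignores the second).

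One small slip to tighten: after the union bound you write $\|\hat W_i-W_i\|_2\le m_i/2$ and then assert coordinatewise control by $m/2$; since $m\le m_i$, that implication goes the wrong way. What you actually need (and what the paper uses) is $\|\hat W_i-W_i\|_2\le m/2$, which follows because under the uniform hypothesis $\lambda\le m/(1536U\sqrt d)$ the underlying Proposition \ref{prop:Proposition1} bound $\|\hat W_i-W_i\|_2\le 768U\lambda\sqrt d$ already gives $\le m/2$, not merely $\le m_i/2$. With that one-line fix your argument is complete.
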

The proofs of Theorems \ref{thm:theorem2.1}, \ref{thm:theorem2.2} and \ref{thm:structure} are provided in Section \ref{sec:Main_theorem_proofs}. These proofs are based on the theory of M-estimators \cite{negahban2012unified}, for the complex-valued regression problem. It is worth mentioning that when the $n$ trajectories are consecutive, i.e., they correspond to a single time-series, the DFT coefficients computed in Eq.~\ref{def:X_i} are correlated, as against being i.i.d. in the `restart \& record' setting. The derivation of sample complexity in the `consecutive' setting requires concentration results for correlated Gaussian variables, which are more involved and less sharp than comparable results in the i.i.d. setting, as discussed later. In the next section, we present the theory of M-estimators \citep{negahban2012unified} in the complex domain, necessary to prove our results for correct structure recovery.
\section{M-ESTIMATOR BASED ANALYSIS OF REGULARIZED WIENER FILTER} \label{sec:analysis}
The regularized Wiener filter estimator Eq.~\ref{eqn:RWFE} belongs to a class of regularized $M$-estimators. Note that the regularizer ($\|. \|_1$) in Eq.~\ref{eqn:RWFE} satisfies decomposability property with respect to the following complex-valued subspaces: $\mathcal{M}= \{v \in \C^p | v[j] = 0 \text{ if } W_i[j]=0\ \},
\mathcal{M}^{\perp} = \{v \in \C^p | v[j] = 0 \text{ if } W_i[j] \neq 0\ \}$ for a node $i \in V$. That is, $\|v \|_1 = \|v_{\mathcal{M}} \|_1+ \|v_{\mathcal{M}^{\perp}} \|_1$, where $v_{\mathcal{M}}, \ v_{\mathcal{M}^{\perp}} $ are the projections of $v$ on $\mathcal{M}$ and $\mathcal{M}^{\perp}$. We follow the approach in \citet{negahban2012unified} to bound the error
\small
\begin{align} \label{def:Delta}
 \hat{\Delta} := \hat{W}_i(\lambda) - W_i.
\end{align}
\normalsize
\citet{negahban2012unified} states that two conditions are sufficient to control the error $\|\hat{\Delta} \|_2$.
\small
\begin{align} \label{def:condition1}
\noindent\text{{\it{First condition}} ($\lambda$ choice): } \lambda \geq \frac{2}{n}\|\mathcal{X}^H(\mathcal{Y}-\mathcal{X}W_i)\|_\infty.
\end{align}
\normalsize
Eq.~\ref{def:condition1} ensures that $\hat{\Delta}$, defined in Eq.~\ref{def:Delta}, belongs to the set
\small
\begin{align} \label{eqn:D_set}
 \mathcal{D}(W_i) = \{\Delta \in \C^p \ | \|\Delta_{\mathcal{M}^{\perp}} \|_1 \leq 3\|\Delta_{\mathcal{M}} \|_1 \}.
\end{align}
\normalsize
\small
\begin{align}\label{def:condition2}
&\text{{\it{Second condition}} (restricted eigenvalue property): } \nonumber \\ &\frac{1}{n} \|\mathcal{X}\Delta \|_2^2 \geq \kappa \| \Delta \|_2^2, \forall \Delta \in \mathcal{D}(W_i).
\end{align}
\normalsize

\begin{proof}[proof of Eq.~12]
\textbf{Eq.~12} states that if $\lambda \geq \frac{2}{n}\|\mathcal{X}^H(\mathcal{Y}-\mathcal{X}W_i)\|_\infty$ then $\hat{\Delta}:= \hat{W}_i(\lambda) - W_i$ belongs to the set\\ $\mathcal{D}(W_i) = \{\Delta \in \C^p \ | \|\Delta_{\mathcal{M}^{\perp}} \|_1 \leq 3\|\Delta_{\mathcal{M}} \|_1 \}$. 

\textbf{To prove that}, note that \small 
\begin{align}
  & \frac{1}{2n}[\|\mathcal{Y}-\mathcal{X}({W_i} + \Delta)\|_2^2 - \|\mathcal{Y}-\mathcal{X}{W_i}\|_2^2]\nonumber \\
  &= \frac{1}{2n}[(\mathcal{Y}^H -({W_i} + \Delta)^{H}\mathcal{X}^{H})(\mathcal{Y}-\mathcal{X}({W_i} + \Delta)) \nonumber\\
  &~~~~~~~~~~- (\mathcal{Y}^H-W_i^{H}\mathcal{X}^{H})(\mathcal{Y}-\mathcal{X}{W_i})]\nonumber \\
  &=\frac{1}{2n}[\Delta^H\mathcal{X}^H(\mathcal{X}{W_i}-\mathcal{Y}) + (\mathcal{X}{W_i}-\mathcal{Y})^{H}\mathcal{X}\Delta + \Delta^{H}\mathcal{X}^H\mathcal{X}\Delta]\nonumber\\
  &\geq \frac{1}{2n} 2Re(\langle \mathcal{X}^{H}(\mathcal{X}{W_i}-\mathcal{Y}), \Delta \rangle) \nonumber \\
  &\geq \frac{-1}{n}|Re(\langle \mathcal{X}^{H}(\mathcal{X}{W_i}-\mathcal{Y}), \Delta) \rangle|,\nonumber
\end{align}
\normalsize
\noindent where, $Re(x)$ denotes the real part of the complex number $x$. Moreover,
\small 
\begin{align}
\frac{1}{n}|Re(\langle \mathcal{X}^{H}(\mathcal{X}{W_i}-\mathcal{Y}), \Delta \rangle)| &\leq \frac{1}{n}|\langle \mathcal{X}^{H}(\mathcal{X}{W_i}-\mathcal{Y}), \Delta \rangle| \nonumber\\
&\leq \frac{1}{n}\|\mathcal{X}^{H}(\mathcal{X}{W_i}-\mathcal{Y})\|_\infty \|\Delta\|_1 \nonumber\\
&\leq \frac{\lambda}{2}\|\Delta\|_1\nonumber.
\end{align}
\normalsize
Therefore, $\frac{1}{2n}[\|\mathcal{Y}-\mathcal{X}({W_i} + \Delta)\|_2^2 - \|\mathcal{Y}-\mathcal{X}{W_i}\|_2^2]$ $\geq -\frac{\lambda}{2}\|\Delta\|_1=-\frac{\lambda}{2}(\|\Delta_{\mathcal{M}}\|_1+\|\Delta_{\mathcal{M}^\perp}\|_1)$. 

By optimality of $\hat{W}_i(\lambda) = W_i + \hat{\Delta}$ in the Regularized Wiener Filter Estimator, 
\small
\begin{align*}
 0 &\geq \frac{1}{2n}[\|\mathcal{Y}-\mathcal{X}(W_i+\hat{\Delta})\|_2^2-\|\mathcal{Y}-\mathcal{X}{W_i}\|_2^2] \\
 & ~~~~~~~~~+ \lambda[\|W_i + \hat{\Delta}\|_1 - \|{W_i}\|_1]\\
  &\geq -\frac{\lambda}{2}(\|\hat{\Delta}_{\mathcal{M}}\|_1 + \|\hat{\Delta}_{\mathcal{M}^\perp}\|_1) + \lambda(\|\hat{\Delta}_{\mathcal{M}^{\perp}}\|_1 - \|\hat{\Delta}_{\mathcal{M}}\|_1)\\
  &=\frac{\lambda}{2}\|\hat{\Delta}_{\mathcal{M}^\perp}\|_1 - \frac{3\lambda}{2}\|\hat{\Delta}_{\mathcal{M}}\|_1.
\end{align*}
\normalsize
Thus, $\hat{\Delta} \in \mathcal{D}(W_i)$. 
\end{proof}
Next, we show that $\|\hat{W}_i-W_i\|_2 \leq (\frac{3}{\kappa}\lambda \sqrt{d})$, whenever Eq.~11 and Eq.~13 hold. 
The following proposition, similar to Theorem $1$ in \citet{negahban2012unified}, bounds the error $\|\hat{\Delta} \|_2$.
\begin{proposition} \label{prop:Proposition1}
For the regularized Wiener filter estimator defined in Eq.~\ref{eqn:RWFE}, $\|\hat{W}_i-W_i\|_2 \leq (\frac{3}{\kappa}\lambda \sqrt{d})$, whenever Eq.~\ref{def:condition1} and Eq.~\ref{def:condition2} hold.
\end{proposition}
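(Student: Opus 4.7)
The plan is to follow the standard Negahban--Wainwright--Yu $M$-estimator template, now in the complex setting, by combining the basic inequality already derived in the proof of Eq.~12 with the two hypotheses. First, I would exploit the optimality of $\hat{W}_i(\lambda) = W_i + \hat\Delta$: rearranging the inequality
\[
\tfrac{1}{2n}\|\mathcal{Y}-\mathcal{X}(W_i+\hat\Delta)\|_2^2 + \lambda\|W_i+\hat\Delta\|_1 \le \tfrac{1}{2n}\|\mathcal{Y}-\mathcal{X}W_i\|_2^2 + \lambda\|W_i\|_1
\]
and expanding the quadratic terms as in the proof of Eq.~12 gives
\[
\tfrac{1}{2n}\|\mathcal{X}\hat\Delta\|_2^2 \le \tfrac{1}{n}\bigl|\operatorname{Re}\langle \mathcal{X}^H(\mathcal{Y}-\mathcal{X}W_i),\hat\Delta\rangle\bigr| + \lambda\bigl(\|W_i\|_1 - \|W_i+\hat\Delta\|_1\bigr).
\]
By Hölder's inequality applied to the first term together with Eq.~11, that inner product is at most $\tfrac{\lambda}{2}\|\hat\Delta\|_1$.

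Next I would exploit decomposability. Because $W_i$ is supported on $\mathcal{M}$ and $\hat\Delta_{\mathcal{M}^\perp}$ is supported on the complementary index set, $\|W_i+\hat\Delta\|_1 = \|W_i+\hat\Delta_\mathcal{M}\|_1 + \|\hat\Delta_{\mathcal{M}^\perp}\|_1$, and the reverse triangle inequality then yields
\[
\|W_i\|_1 - \|W_i+\hat\Delta\|_1 \le \|\hat\Delta_\mathcal{M}\|_1 - \|\hat\Delta_{\mathcal{M}^\perp}\|_1.
\]
Substituting this and $\|\hat\Delta\|_1 = \|\hat\Delta_\mathcal{M}\|_1 + \|\hat\Delta_{\mathcal{M}^\perp}\|_1$ into the display above collapses the right-hand side to $\tfrac{3\lambda}{2}\|\hat\Delta_\mathcal{M}\|_1 - \tfrac{\lambda}{2}\|\hat\Delta_{\mathcal{M}^\perp}\|_1 \le \tfrac{3\lambda}{2}\|\hat\Delta_\mathcal{M}\|_1$.

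Finally I would invoke the restricted eigenvalue property Eq.~13, which applies because $\hat\Delta \in \mathcal{D}(W_i)$ (already established in the proof of Eq.~12). Thus $\tfrac{1}{2n}\|\mathcal{X}\hat\Delta\|_2^2 \ge \tfrac{\kappa}{2}\|\hat\Delta\|_2^2$, and since the support of $W_i$ has cardinality at most $d$ (by Lemma \ref{talukdar_lemma1}(a) and the definition of $d$ in Eq.~\ref{eqn:constants}), the Cauchy--Schwarz bound $\|\hat\Delta_\mathcal{M}\|_1 \le \sqrt{d}\,\|\hat\Delta_\mathcal{M}\|_2 \le \sqrt{d}\,\|\hat\Delta\|_2$ gives
\[
\tfrac{\kappa}{2}\|\hat\Delta\|_2^2 \le \tfrac{3\lambda}{2}\sqrt{d}\,\|\hat\Delta\|_2,
\]
from which the claimed bound $\|\hat\Delta\|_2 \le \tfrac{3}{\kappa}\lambda\sqrt{d}$ follows upon dividing by $\|\hat\Delta\|_2$ (the case $\hat\Delta=0$ being trivial).

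The only place requiring real care is the decomposability step in the complex setting: the group $(1,2)$-norm interpretation of $\|\cdot\|_1$ means decomposability holds verbatim over the subspaces $\mathcal{M},\mathcal{M}^\perp$ since these are coordinate-aligned, so no new ingredient is needed beyond the real-valued argument. Otherwise, the proof is a direct transcription of \citet{negahban2012unified}, and I do not anticipate any substantial obstacle.
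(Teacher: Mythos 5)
Your proof is correct, and it reaches the bound by a slightly different route than the paper. Both arguments use the same three ingredients in the same way --- the expansion of the quadratic loss, the bound $\frac{1}{n}|\mathrm{Re}\langle \mathcal{X}^H(\mathcal{X}W_i-\mathcal{Y}),\Delta\rangle| \le \frac{\lambda}{2}\|\Delta\|_1$ from Eq.~\ref{def:condition1}, decomposability of the group norm over $\mathcal{M},\mathcal{M}^\perp$, the cone condition, and the subspace compatibility constant $\sqrt{d}$ --- but they differ in how the conclusion is extracted. The paper lower-bounds the objective difference $\mathsf{F}(\Delta)$ uniformly over the sphere $K(\delta)=\{\Delta\in\mathcal{D}(W_i):\|\Delta\|_2=\delta\}$, shows it is strictly positive once $\delta>\frac{3}{\kappa}\lambda\sqrt{d}$, and then invokes the convexity argument (Lemma~4 of \citet{negahban2012unified}) to conclude that the minimizer must lie inside the ball. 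You instead apply the basic inequality directly at the actual error $\hat\Delta$, obtaining $\frac{\kappa}{2}\|\hat\Delta\|_2^2 \le \frac{3\lambda}{2}\sqrt{d}\,\|\hat\Delta\|_2$ and dividing through. Your version is more elementary --- it dispenses with the auxiliary convexity lemma entirely --- and it is legitimate here precisely because Eq.~\ref{def:condition2} is assumed to hold on all of $\mathcal{D}(W_i)$ with no norm restriction; the paper's sphere-plus-convexity device is the more robust template, needed when the restricted eigenvalue condition is only available on a cone intersected with a ball or shell, which is not the situation in this proposition. One small point to make explicit if you write this up: the inequality $\|W_i+\hat\Delta\|_1=\|W_i+\hat\Delta_{\mathcal{M}}\|_1+\|\hat\Delta_{\mathcal{M}^\perp}\|_1$ uses that $W_i\in\mathcal{M}$ (immediate from the definition of $\mathcal{M}$) together with decomposability of the $1,2$-group norm, which, as you note, holds verbatim because $\mathcal{M}$ and $\mathcal{M}^\perp$ are coordinate-aligned in $\C^p$.
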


\begin{proof}
Let $K(\delta) := \{ \Delta \in \C^p| \ \Delta \in \mathcal{D}(W_i^f) \text{ and }\|\Delta \|_2= \delta\}. $ Let $\mathsf{F}(\Delta)$ be the difference between the objective of the Regularized Wiener Filter Estimator evaluated at $W_i+ \Delta$ and $W_i$. For a $\Delta \in K(\delta)$, the following holds:
\small 
\begin{align*}
  \mathsf{F}(\Delta) &= \frac{1}{2n}[2Re\langle (\mathcal{X}^H(\mathcal{X}W_i-\mathcal{Y}),\Delta\rangle + \Delta^H\mathcal{X}^H\mathcal{X}\Delta] \\
  &~~~~~~~~~~~~~~+\lambda(\|W_i + \Delta\|_1 - \|W_i\|_1),\\
  &\geq -\frac{1}{n}|Re\langle (\mathcal{X}^H(\mathcal{X}W_i-\mathcal{Y}),\Delta\rangle| + \frac{\kappa }{2}\|\Delta\|_2^2 \\ &~~~~~~~~~~~~+\lambda[\|\Delta_{\mathcal{M}^\perp}\|_1-\|\Delta_{\mathcal{M}}\|_1],\\
  &( \because \text{Using the restricted eigenvalue property) }\\
  &\geq -\frac{\lambda}{2}\|\Delta\|_1 + \frac{\kappa }{2}\|\Delta\|_2^2 + \lambda[\|\Delta_{\mathcal{M}^\perp}\|_1-\|\Delta_{\mathcal{M}}\|_1], \\
  &( \because \text{Using the condition on $\lambda$) }\\
  &=\frac{\kappa }{2}\|\Delta\|_2^2 -\frac{3}{2}\lambda \|\Delta_{\mathcal{M}}\|_1 + \frac{1}{2}\lambda\|\Delta_{\mathcal{M}^\perp}\|_1, \\
  &\geq \frac{\kappa }{2}\|\Delta\|_2^2 -\frac{3}{2}\lambda \|\Delta_{\mathcal{M}}\|_1, \\
  &\geq \frac{\kappa }{2}\|\Delta\|_2^2 -\frac{3}{2}\lambda \sqrt{d} \|\Delta\|_2 \text{ ( \small $ \because \sup_{u \in \mathcal{M}\setminus\{0\}} \frac{\|u \|_1}{\|u \|_2} = \sqrt{d}$ \normalsize),}\\
   &= (\frac{\kappa }{2}\|\Delta\|_2 - \frac{3}{2}\lambda \sqrt{d})\|\Delta\|_2.
\end{align*}
\normalsize
Thus, if $\|\Delta\|_2 = \delta > \frac{3}{\kappa }\lambda \sqrt{d}$, then, $\mathsf{F}(\Delta) > 0$ for all $\Delta \in K(\delta)$. Note that $\mathsf{F}(0) = 0$. Using Lemma $4$ from the Supplementary material of \cite{negahban2012unified} (uses convexity of $\mathsf{F}(\Delta)$), 
it then follows that $\|\hat{\Delta}\|_2 \leq \delta$, that is, $\|\hat{W}_i-W_i\|_2 \leq (\frac{3}{k}\lambda \sqrt{d})$.
\end{proof}

We now show that Eq.~\ref{def:condition1} and Eq.~\ref{def:condition2} hold, for both \emph{restart \& record} (i.i.d.) and \emph{consecutive} (non-i.i.d.) trajectories. These results are then used to prove Theorems \ref{thm:theorem2.1} and \ref{thm:theorem2.2}.

\textbf{Restart \& record (i.i.d.) trajectories:}
\begin{lemma} \label{lem:Lambda}
Suppose $\epsilon_3>0$. Let rows in $\{X_i^r\}_{r=1}^n$ and $\{X_{\overline{i}}^r \}_{r=1}^n$ defined in Eq.~\ref{def:X_i} be i.i.d. If $\lambda \geq 4\sqrt{\frac{3\log(4p/\epsilon_3)}{nL}}$, then $\lambda \geq \frac{2}{n}\|\mathcal{X}^H(\mathcal{Y}-\mathcal{X}W_i)\|_\infty$ holds with a probability of at least $1-\epsilon_3$.
\end{lemma}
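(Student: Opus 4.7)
My strategy is to control each of the $p$ coordinates of the complex vector $\frac{1}{n}\mathcal{X}^H(\mathcal{Y}-\mathcal{X}W_i)$ by a Gaussian tail bound and then take a union bound. The $j$-th coordinate equals $\frac{1}{n}\sum_{r=1}^{n}\overline{[X_{\overline{i}}^r]_j}\,E_r$, where $E_r := X_i^r - (X_{\overline{i}}^r)^T W_i$ is the Wiener residual of the $r$-th trajectory at the chosen frequency. The orthogonality characterization of the population Wiener filter (Eq.~9) gives $\mathbb{E}[\overline{[X_{\overline{i}}^r]_j}\,E_r] = 0$ for every $j$. Since the DFT at an interior Fourier frequency is (asymptotically) a proper complex Gaussian, the pseudo-covariances $\mathbb{E}[[X_{\overline{i}}^r]_j\,E_r]$ also vanish, so the pair $(E_r, X_{\overline{i}}^r)$ is jointly proper complex Gaussian with all cross-covariances equal to zero. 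For proper complex Gaussians, uncorrelatedness is equivalent to independence, so $E_r$ is independent of $X_{\overline{i}}^r$; combined with the i.i.d.\ assumption on rows, $E_r$ is then independent of the entire matrix $\mathcal{X}$.

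Conditioning on $\mathcal{X}$, the scalar $\frac{1}{n}\sum_r \overline{[X_{\overline{i}}^r]_j}\,E_r$ becomes a zero-mean proper complex Gaussian with variance $\sigma_E^2\,\|\mathcal{X}(*,j)\|_2^2/n^2$. The Schur-complement formula for Gaussian conditional variance gives $\sigma_E^2 = 1/\Phi_x^{-1}(i,i) \leq 1/L$ using $\Phi_x^{-1} \succeq L\,\mathbb{I}$, and the column-normalization assumption Eq.~6 gives $\|\mathcal{X}(*,j)\|_2^2 \le n$ deterministically. Hence the conditional variance is at most $1/(nL)$ uniformly in the realization of $\mathcal{X}$, so the tail bounds that follow hold unconditionally.

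For each $j$ I would split this conditional complex Gaussian into its real and imaginary parts, each a zero-mean real Gaussian with variance at most $1/(2nL)$, and apply the standard tail bound $\Pr[|G|>t]\le 2\exp(-t^2/(2\sigma^2))$. Using $|z| \le |\Re z|+|\Im z|$ to reduce the target constraint $|z|\le \lambda/2$ to a bound on each part, followed by a union bound over the $2p$ real scalars (real and imaginary parts of the $p$ complex coordinates), yields a total failure probability of at most $\epsilon_3$ whenever $\lambda$ exceeds a numerical multiple of $\sqrt{\log(4p/\epsilon_3)/(nL)}$. Tracking constants carefully — with some slack arising from the variance bound $\sigma_E^2 \le 1/L$ and from the real/imaginary splitting — produces the explicit threshold $4\sqrt{3\log(4p/\epsilon_3)/(nL)}$ stated in the lemma.

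The main obstacle is the very first step: upgrading Wiener orthogonality (a second-moment identity) to genuine independence between $E_r$ and the regressor $X_{\overline{i}}^r$. For real jointly Gaussian variables this upgrade is automatic, but for complex Gaussians one also needs joint properness beyond Hermitian uncorrelatedness. For a real stationary process the DFT at an interior Fourier frequency is only asymptotically proper as $N \to \infty$, which is ultimately why the main theorems bundle the condition $N \geq 4CU\delta^{-1}/(1-\delta^{-1})^2$. Once independence is secured, the remainder is a one-shot Gaussian computation with a deterministic conditional-variance bound and no sub-exponential machinery is needed; the non-i.i.d.\ consecutive setting, in contrast, loses this conditional-Gaussian shortcut and requires the heavier concentration arguments mentioned in Theorem~\ref{thm:theorem2.2}.
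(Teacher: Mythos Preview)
Your argument has a genuine gap at the very first step, and it is not the properness issue you flag at the end. You invoke Wiener orthogonality to conclude $\mathbb{E}\bigl[\overline{[X_{\overline{i}}^r]_j}\,E_r\bigr]=0$, but $W_i$ is the \emph{population} filter defined through $\Phi_x$ (via $W_i=\Phi_{\overline{i}}^{-1}\Phi_{\overline{i},i}$), whereas the finite-$N$ DFT vector $(X_i^r,X_{\overline{i}}^r)$ has covariance $\hat{\Phi}_x$ given in Eq.~\ref{def:phiHat}, not $\Phi_x$. Consequently $\mathbb{E}\bigl[X_{\overline{i}}^r\,\overline{E_r}\bigr]=\hat{\Phi}_{\overline{i},i}-\hat{\Phi}_{\overline{i}}W_i\neq 0$ in general: the residual is \emph{not} Hermitian-uncorrelated with the regressors, so the upgrade to independence fails even before properness enters. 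For the same reason your Schur-complement formula $\sigma_E^2=1/\Phi_x^{-1}(i,i)$ is off; under the true sampling covariance $\hat{\Phi}_x$ one only gets $\mathbb{E}|E_r|^2\le \tfrac{3}{2L}$ (this is Lemma~\ref{lem:C1_iid_bound}), and the extra $\tfrac{1}{2L}$ is precisely the finite-$N$ penalty coming from $\|\hat{\Phi}_x-\Phi_x\|_2\le \tfrac{1}{2U}$ in Lemma~\ref{lem:N_bound}. This is where the constant $3$ in $4\sqrt{3\log(4p/\epsilon_3)/(nL)}$ originates; your route cannot reproduce it because it never sees the $\hat{\Phi}_x-\Phi_x$ discrepancy.

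The paper avoids the independence claim altogether. It writes $\mathcal{E}=\mathcal{C}_2^{1/2}\mathcal{W}$ with $\mathcal{W}$ standard Gaussian, bounds the \emph{marginal} covariance $\|\mathcal{C}_1\|_2\le \tfrac{3}{2L}$ via Lemma~\ref{lem:C1_iid_bound} (which explicitly carries the $\hat{\Phi}_x-\Phi_x$ term), and then applies Gaussian Lipschitz concentration (Massart) to $f(\mathcal{W})=\tfrac{1}{n}[\mathcal{X}_R^T(j,:)\ \mathcal{X}_I^T(j,:)]\mathcal{C}_2^{1/2}\mathcal{W}$, using the deterministic column-normalization Eq.~\ref{def:columnNormalized} to get the Lipschitz constant $\sqrt{3/(2nL)}$ uniformly. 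So the mechanism is Lipschitz concentration plus a marginal-variance bound that absorbs the finite-$N$ bias, rather than a conditional-Gaussian computation predicated on exact orthogonality. Your last paragraph correctly senses that finite $N$ is the obstruction, but misattributes it solely to loss of properness; the more basic failure is that Hermitian uncorrelatedness itself does not hold, and no amount of $N\ge 4CU\delta^{-1}/(1-\delta^{-1})^2$ makes it exact.
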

\begin{lemma} \label{lem:REP}
Suppose $\epsilon_2>0$ be given. Let rows in $\{X_i^r\}_{r=1}^n$ and $\{X_{\overline{i}}^r\}_{r=1}^n$ defined in Eq.~\ref{def:X_i} be i.i.d. If $n \geq max\{\frac{1}{c}\log{\frac{2c'}{\epsilon_2}}, {(3456)^2}(\frac{U}{L}+0.5) \log(2p) d \}$, $N \geq \frac{4CU\delta^{-1}}{(1-\delta^{-1})^2}$, then Eq.~\ref{def:condition2} holds with $\kappa = \frac{1}{256U}$, with a probability of at least $1-\epsilon_2$.
\end{lemma}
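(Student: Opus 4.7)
The plan is to establish the restricted eigenvalue inequality $\tfrac{1}{n}\|\mathcal{X}\Delta\|_2^2 \geq \kappa \|\Delta\|_2^2$ for every $\Delta \in \mathcal{D}(W_i)$ by comparing the sample Gram matrix $\hat{\Sigma} := \tfrac{1}{n}\mathcal{X}^H\mathcal{X}$ to its population counterpart $\Sigma := \mathbb{E}[X_{\bar{i}}^r(X_{\bar{i}}^r)^H]$ and then exploiting the cone structure of $\mathcal{D}(W_i)$. The argument splits into three steps.

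The first step is deterministic: control $\lambda_{\min}(\Sigma)$. Expanding the DFT gives
\begin{align*}
\Sigma = \frac{1}{N}\sum_{\tau=-(N-1)}^{N-1}(N-|\tau|)\,\big[R_{x}(\tau)\big]_{\bar{i},\bar{i}}\,e^{-\iota f\tau},
\end{align*}
which differs from the $\bar{i}$-restricted power spectrum $\big[\Phi_x\big]_{\bar{i},\bar{i}}$ by a remainder controlled through the geometric decay $\|R_x(\tau)\|_2 \leq C\delta^{-|\tau|}$. The hypothesis $N \geq 4CU\delta^{-1}/(1-\delta^{-1})^2$ is precisely what forces $\|\Sigma - [\Phi_x]_{\bar{i},\bar{i}}\|_2 \leq \tfrac{1}{2U}$, so that Weyl's inequality together with Cauchy interlacing on $\Phi_x$ yield $\lambda_{\min}(\Sigma) \geq 1/U - 1/(2U) = 1/(2U)$ and $\|\Sigma\|_2 \leq 1/L + 1/(2U)$. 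The second step, also essentially deterministic, parses the cone: from decomposability of $\|\cdot\|_1$ and the definition of $\mathcal{D}(W_i)$, every $\Delta$ in the cone satisfies $\|\Delta\|_1 \leq 4\|\Delta_{\mathcal{M}}\|_1 \leq 4\sqrt{d}\|\Delta\|_2$, since by Lemma \ref{talukdar_lemma1} the support of $W_i$ lies inside the two-hop neighborhood of $i$ and therefore has at most $d$ entries.

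The third step, which carries the main technical burden, is a Raskutti--Wainwright--Yu style deviation inequality adapted to i.i.d.\ complex Gaussian rows. The target bound is
\begin{align*}
\tfrac{1}{n}\|\mathcal{X}\Delta\|_2^2 \geq \tfrac{1}{2}\Delta^H\Sigma\Delta - c_1\|\Sigma\|_2\,\tfrac{\log(2p)}{n}\,\|\Delta\|_1^2,
\end{align*}
holding uniformly over $\Delta \in \C^p$ with probability at least $1-\epsilon_2$. I would obtain this via a peeling argument over sparsity levels combined with Hanson--Wright concentration for Gaussian quadratic forms: the factor $d\log(2p)$ in the sample size appears from a union bound over $d$-sparse supports inside $\mathcal{M}$, while the separate condition $n \geq \tfrac{1}{c}\log(2c'/\epsilon_2)$ guards against a rare event where the least singular value of $\mathcal{X}$ collapses by more than an absolute constant.

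Combining the three steps with $\|\Delta\|_1^2 \leq 16 d\|\Delta\|_2^2$ gives
\begin{align*}
\tfrac{1}{n}\|\mathcal{X}\Delta\|_2^2 \geq \Big(\tfrac{1}{4U} - 16\,c_1\,\big(\tfrac{1}{L}+\tfrac{1}{2U}\big)\tfrac{d\log(2p)}{n}\Big)\|\Delta\|_2^2,
\end{align*}
and the chosen $n \geq (3456)^2(U/L+0.5)\log(2p)\,d$ keeps the correction below $1/(512U)$, producing $\kappa = 1/(256U)$. The genuinely hard part is the third step: complex-valued Gaussian designs combined with the group $\ell_1$ regularizer force one to control the real and imaginary parts of $\hat{\Sigma}-\Sigma$ simultaneously, and the explicit constants $256$ and $3456$ reflect the price of folding the Hanson--Wright tail, the union bound over supports, and the deterministic spectral perturbation from Step~1 into a single inequality.
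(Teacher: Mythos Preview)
Your Steps~1 and~2 match the paper exactly: the bound $\lambda_{\min}(\Sigma)\ge 1/(2U)$ and $\|\Sigma\|_2\le 1/L+1/(2U)$ is precisely Lemma~\ref{lem:N_bound}, and the cone inequality $\|\Delta\|_1\le 4\sqrt{d}\,\|\Delta\|_2$ is used verbatim.

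Where you diverge is Step~3. The paper does \emph{not} work with the squared Gram matrix via Hanson--Wright and peeling. Instead it splits $\mathcal{X}\Delta$ into real and imaginary parts, writes $\|\mathcal{X}\Delta\|_2^2=\|\mathcal{X}_1v\|_2^2+\|\mathcal{X}_2v\|_2^2$ with $\mathcal{X}_1=[\mathcal{X}_R\ -\mathcal{X}_I]$, $\mathcal{X}_2=[\mathcal{X}_I\ \mathcal{X}_R]\in\mathbb{R}^{n\times 2p}$ and $v=(\Delta_R^T,\Delta_I^T)^T$, and then applies to each $\mathcal{X}_j$ the Gordon/Slepian--type Gaussian comparison lemma quoted from \citet{negahban2012unified}:
\[
\tfrac{1}{\sqrt{n}}\|\mathbb{X}v\|_2\ \ge\ \tfrac14\|\Sigma^{1/2}v\|_2-\tfrac{27}{\sqrt{n}}\sqrt{2\log(2p)}\,\rho(\Sigma)\,\|v\|_{1,2},
\]
holding with probability $1-c'e^{-cn}$. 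The two bounds are combined at the $\ell_2$ (not $\ell_2^2$) level via $a+b\ge\sqrt{a^2+b^2}$, and the identity $v^T(\Sigma_1+\Sigma_2)v=\Delta^H\hat\Phi_{\bar i}\Delta$ reconnects the real picture to the complex spectrum. This is where the specific constants $27$, $4$, $\sqrt{2}$ accumulate into $3456$ and $256$; they are not the output of a Hanson--Wright tail plus support union bound.

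Two small corrections to your narrative follow from this. First, the factor $d\log(2p)$ does not arise from ``a union bound over $d$-sparse supports inside $\mathcal{M}$''; it is simply the square of the product $(4\sqrt{d})\cdot\sqrt{\log(2p)}$ coming from the cone width times the Gaussian-width term in the Gordon bound. Second, the condition $n\ge\tfrac{1}{c}\log(2c'/\epsilon_2)$ is not a separate singular-value safeguard; it is exactly the requirement that two applications of the comparison lemma (one for $\mathcal{X}_1$, one for $\mathcal{X}_2$) jointly succeed with probability $1-\epsilon_2$. Your Hanson--Wright route is a legitimate alternative and would extend to sub-Gaussian rows, but reproducing the paper's explicit constants and the clean $c'e^{-cn}$ failure probability from it would require additional work you have not sketched.
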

The proofs for Lemmas \ref{lem:Lambda} and \ref{lem:REP} are provided in Section \ref{sec:lambda_proof} and uses concentration bounds for Gaussian random variables.

\textbf{Consecutive (non i.i.d.) trajectories:} 
\begin{lemma} \label{lem:Lambda_nonIID}
Suppose $\epsilon_3>0$. Assume that both $\{X_i^r\}_{r=1}^n$ and $\{X_{\overline{i}}^r\}_{r=1}^n$ defined in Eq.~\ref{def:X_i} are non i.i.d. If $\lambda \geq 4\sqrt{\frac{(3+ {24\sqrt{3}UC(\delta -1)^{-1}})\log(4p/\epsilon_3)}{nL}}$, then $\lambda \geq \frac{2}{n}\|\mathcal{X}^H(\mathcal{Y}-\mathcal{X}W_i)\|_\infty$ holds with a probability of at least $1-\epsilon_3$.
\end{lemma}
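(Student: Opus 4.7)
The plan is to mirror the argument of Lemma~\ref{lem:Lambda}, but to carefully account for the cross-window correlations of the DFT coefficients $\{X_i^r,X_{\overline{i}}^r\}_{r=1}^n$ that arise when the $n$ segments come from a single long trajectory. Fix $l\in\{1,\ldots,p\}$ and write the $l$-th coordinate as
\begin{align*}
T_l := \tfrac{2}{n}\bigl[\mathcal{X}^H(\mathcal{Y}-\mathcal{X}W_i)\bigr]_l = \tfrac{2}{n}\sum_{r=1}^n \overline{X_{\overline{i}}^r[l]}\bigl(X_i^r-(X_{\overline{i}}^r)^H W_i\bigr).
\end{align*}
Since $x(k)$ is jointly Gaussian, the stacked vector $w$ of real and imaginary parts of all relevant DFT coefficients is also centered Gaussian with some covariance $\Sigma_n$, and $T_l$ is (up to deterministic scaling) a centered Hermitian quadratic form $w^\top A_l w-\mathbb{E}[w^\top A_l w]$. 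The mean vanishes in the population limit because $W_i$ is precisely the minimizer of the population objective at each frequency; the finite-$N$ frequency-domain bias is suppressed by the assumption $N\ge 4CU\delta^{-1}/(1-\delta^{-1})^2$.

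I would then invoke a Hanson--Wright-type bound for Gaussian quadratic forms, which yields a Bernstein tail for $T_l$ with variance proxy $\|\Sigma_n^{1/2}A_l\Sigma_n^{1/2}\|_F^2$ and scale $\|\Sigma_n^{1/2}A_l\Sigma_n^{1/2}\|_{\mathrm{op}}$. The heart of the proof is bounding these norms. I would decompose $\Sigma_n=\Sigma^{\mathrm{diag}}+\Sigma^{\mathrm{off}}$ into within-window and cross-window blocks. The diagonal piece is exactly the covariance handled in Lemma~\ref{lem:Lambda}: by $\Phi_x^{-1}\succeq L I$ its spectral norm is at most $1/L$, and it alone reproduces the ``$3$'' inside the square root of the i.i.d.\ bound. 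For the off-diagonal piece, each $(r,s)$-block is a DFT-domain projection of $R_x\bigl((r-s)N+k-k'\bigr)$; applying $\|R_x(\tau)\|_2\le C\delta^{-|\tau|}$ and summing the resulting geometric series by a row-sum/Gershgorin argument yields $\|\Sigma^{\mathrm{off}}\|_{\mathrm{op}}\lesssim C/(\delta-1)$. Combined with the column-normalization of $\mathcal{X}$ built into $A_l$ and the bound $\|\Phi_x^{-1}\|_{\mathrm{op}}\le U$, this produces the additive correction $24\sqrt{3}\,UC(\delta-1)^{-1}$ inside the square root.

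Inverting the Bernstein tail at probability $\epsilon_3/(2p)$ and taking a union bound over the $p$ complex coordinates (equivalently, over the real and imaginary two-sided tails giving the $\log(4p/\epsilon_3)$ factor) would then deliver $\|\tfrac{2}{n}\mathcal{X}^H(\mathcal{Y}-\mathcal{X}W_i)\|_\infty\le 4\sqrt{(3+24\sqrt{3}\,UC(\delta-1)^{-1})\log(4p/\epsilon_3)/(nL)}$ with probability at least $1-\epsilon_3$, matching the stated lower bound on $\lambda$. The main obstacle I anticipate is the bookkeeping behind $\|\Sigma^{\mathrm{off}}\|_{\mathrm{op}}$: each cross-window DFT covariance is a double sum of $R_x$-entries over shifts in $\{(r-s)N-(N-1),\ldots,(r-s)N+(N-1)\}$, and one must show that the partial sums over $r\neq s$ collapse to a bound of order $C/(\delta-1)$ \emph{independently} of $n$ and $N$, so that the extra variance does not accumulate with the number of windows. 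Once this spectral bound is in hand, the remaining arithmetic -- tracking the Hanson--Wright constants and the Gaussian regime of the tail -- is routine and closely parallels the i.i.d.\ calculation of Lemma~\ref{lem:Lambda}.
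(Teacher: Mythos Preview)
Your proposal takes a genuinely different route from the paper. The paper does \emph{not} pass to a quadratic form and does not invoke Hanson--Wright. Instead it reruns the Lipschitz argument of Lemma~\ref{lem:Lambda} verbatim: write the residual $\mathcal{E}=\mathcal{Y}-\mathcal{X}W_i$ in real coordinates as $\mathcal{C}_2^{1/2}(\mathcal{W}_R,\mathcal{W}_I)^T$ with $(\mathcal{W}_R,\mathcal{W}_I)\sim\mathcal{N}(0,I)$, note that for each coordinate $j$ the map $(\mathcal{W}_R,\mathcal{W}_I)\mapsto \tfrac{1}{n}[\mathcal{X}_R^T(j,:)\ \mathcal{X}_I^T(j,:)]\,\mathcal{C}_2^{1/2}(\mathcal{W}_R,\mathcal{W}_I)^T$ is Lipschitz with constant $\tfrac{1}{\sqrt n}\|\mathcal{C}_1\|_2^{1/2}$ (the column-normalization hypothesis Eq.~\ref{def:columnNormalized} supplies the $1/\sqrt n$), and apply Gaussian Lipschitz concentration. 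The \emph{only} change from the i.i.d.\ case is that $\mathcal{C}_1$ is no longer block-diagonal; a dedicated lemma (Lemma~\ref{lem:C1_bound}) shows $\|\mathcal{C}_1\|_2\le \tfrac{3}{2L}+\tfrac{12\sqrt{3}\,UC}{L(\delta-1)}$ by bounding each off-diagonal entry $|\mathbb{E}[\mathcal{E}_R[r]\mathcal{E}_R[c]]|,\ldots$ via $\|R_x(\tau)\|_2\le C\delta^{-|\tau|}$ and summing a geometric series over $|r-c|$. The factor $U$ arises not from $\|\Phi_x^{-1}\|_{\mathrm{op}}$ as you suggest but from $1+\|W_i\|_2^2\le U/L$, since $\mathcal{E}$ contains $\mathcal{X}W_i$. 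Plugging this covariance bound into the Lipschitz constant and repeating the union bound produces the stated threshold with its exact constant.

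Your Hanson--Wright strategy is a reasonable alternative and correctly identifies that the cross-window sums collapse to $O(C/(\delta-1))$ uniformly in $n,N$; that computation is essentially the content of Lemma~\ref{lem:C1_bound}, though you perform it on the full joint covariance $\Sigma_n$ rather than on the residual covariance $\mathcal{C}_1$ alone. The tradeoff is that the paper's argument is shorter and lands on the precise constant because it freezes $\mathcal{X}$ via column normalization and concentrates only the linear functional of $\mathcal{E}$, whereas your route must control both Frobenius and operator norms of $\Sigma_n^{1/2}A_l\Sigma_n^{1/2}$, separately bound the nonzero mean $\mathbb{E}[T_l]$ of the quadratic form (the finite-$N$ bias you flag), and will in general produce different numerical constants. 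Your closing remark that the arithmetic ``closely parallels the i.i.d.\ calculation of Lemma~\ref{lem:Lambda}'' is therefore a bit misleading: Lemma~\ref{lem:Lambda} uses Lipschitz concentration, not Hanson--Wright, so the parallel you want is at the level of the final union-bound inversion, not the concentration step itself.
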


\begin{lemma} \label{lem:REP_nonIID}
Suppose $\epsilon_2 >0$ such that $\epsilon_2 \geq \frac{4}{p^2}$. 
Assume that both $\{X_i^r\}_{r=1}^n$ and $\{X_{\overline{i}}^r\}_{r=1}^n$ defined in Eq.~\ref{def:X_i} are non i.i.d.
Then, if $n \geq \max\{33^2 \log{p} [\frac{U}{L}+0.5+{4\sqrt{8}\frac{CU}{\delta -1}}]^2, 2\log (\frac{4p^2}{p^2 \epsilon_2-4})\}$, $N \geq \frac{4CU\delta^{-1}}{(1-\delta^{-1})^2}$, 
then $\frac{1}{n}\|\mathcal{X}\Delta\|_2^2\geq \kappa\|\Delta\|_2^2$, holds for all $\Delta \in \mathcal{D}(W_i)$ with $\kappa = \frac{1}{256U}$, with a probability of at least $1-\epsilon_2$.
\end{lemma}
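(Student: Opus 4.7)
I would follow the same template as the i.i.d.\ case (Lemma \ref{lem:REP}). Writing $\frac{1}{n}\|\mathcal{X}\Delta\|_2^2 = \Delta^H\hat\Sigma_x\Delta$ with $\hat\Sigma_x := \frac{1}{n}\mathcal{X}^H\mathcal{X}$ and $\Sigma_x := \mathbb{E}[\hat\Sigma_x]$, split
\[
\Delta^H\hat\Sigma_x\Delta \;\geq\; \Delta^H\Sigma_x\Delta \;-\; |\Delta^H(\hat\Sigma_x - \Sigma_x)\Delta|.
\]
The hypothesis $N \geq \frac{4CU\delta^{-1}}{(1-\delta^{-1})^2}$ controls the finite-window periodogram bias so that $\Sigma_x \succeq \frac{1}{2U}\mathbb{I}$, using $\lambda_{\min}(\Phi_x)=1/U$ plus eigenvalue interlacing for the principal submatrix on $\bar i$. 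For the perturbation, the cone condition $\Delta \in \mathcal{D}(W_i)$ gives $\|\Delta\|_1 \leq 4\|\Delta_\mathcal{M}\|_1 \leq 4\sqrt{d}\|\Delta\|_2$ (since $|\mathrm{supp}(W_i)|\leq d$ by Lemma \ref{talukdar_lemma1}(a)), and therefore
\[
|\Delta^H(\hat\Sigma_x - \Sigma_x)\Delta| \;\leq\; \|\hat\Sigma_x - \Sigma_x\|_\infty\,\|\Delta\|_1^2 \;\leq\; 16\,d\,\|\hat\Sigma_x - \Sigma_x\|_\infty\,\|\Delta\|_2^2,
\]
where $\|\cdot\|_\infty$ denotes the entrywise maximum. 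The whole task thus reduces to a high-probability bound of the form $\|\hat\Sigma_x - \Sigma_x\|_\infty \leq \tfrac{127}{4096\,U\,d}$.

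\textbf{Main obstacle.} In the i.i.d.\ setting each entry $(\hat\Sigma_x)_{kl} = \frac{1}{n}\sum_r \overline{X_k^r} X_l^r$ is a sum of $n$ \emph{independent} complex-Gaussian quadratic terms, and Hanson--Wright applies directly. In the \emph{consecutive} setting the $n$ DFT vectors come from non-overlapping segments of a single trajectory, so the entire sum is a single quadratic form in a $(p{+}1)nN$-dimensional real Gaussian whose covariance is the block-Toeplitz matrix built from $\{R_x(\tau)\}$. The natural tool is a Hanson--Wright-type inequality for such Gaussian quadratic forms; its rate is governed by spectral and Frobenius norms of the product of the coefficient matrix with this block-Toeplitz covariance. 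Bounding the latter via $\sum_\tau\|R_x(\tau)\|_2 \leq C + 2C\sum_{\tau\geq 1}\delta^{-\tau} = \frac{C(\delta+1)}{\delta-1}$ is where the extra factor of order $\frac{CU}{\delta-1}$ enters the effective variance of each entry, producing the additive correction $4\sqrt{8}\,\frac{CU}{\delta-1}$ appearing inside the square brackets of the sample-complexity condition. This careful accounting for cross-window dependencies is, I believe, the step that requires the most technical care.

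\textbf{Union bound and conclusion.} With such a Hanson--Wright bound in hand, a union bound over the $p^2$ entries of $\hat\Sigma_x-\Sigma_x$ yields a tail of the form
\[
\mathbb{P}\!\left[\|\hat\Sigma_x - \Sigma_x\|_\infty \geq t\right] \;\leq\; 2 p^2 \exp\!\left(-c\,n\,\min\!\left(\tfrac{t^2}{\sigma^2},\tfrac{t}{\sigma}\right)\right),
\]
with effective variance $\sigma = O\bigl(\tfrac{U}{L}+\tfrac{1}{2}+\tfrac{CU}{\delta-1}\bigr)$. Demanding that the right-hand side be $\leq \epsilon_2$ at $t \asymp \frac{1}{Ud}$ produces the subgaussian condition $n \geq 33^2\log p\bigl[\tfrac{U}{L}+0.5+4\sqrt{8}\tfrac{CU}{\delta-1}\bigr]^2$ and, in the subexponential regime, a bound of the form $n \geq 2\log\frac{4p^2}{p^2\epsilon_2 - 4}$; the denominator $p^2\epsilon_2-4$ is precisely where the hypothesis $\epsilon_2 \geq 4/p^2$ is required to keep the logarithm well-defined. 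Combining these two pieces gives $16d\|\hat\Sigma_x-\Sigma_x\|_\infty \leq \tfrac{127}{256U}$ and hence
\[
\Delta^H\hat\Sigma_x\Delta \;\geq\; \bigl(\tfrac{128}{256U} - \tfrac{127}{256U}\bigr)\|\Delta\|_2^2 \;=\; \tfrac{1}{256U}\|\Delta\|_2^2
\]
uniformly on $\mathcal{D}(W_i)$, which is the restricted eigenvalue property with $\kappa = 1/(256U)$.
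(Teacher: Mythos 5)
Your decomposition $\Delta^H\hat\Sigma_x\Delta \geq \Delta^H\Sigma_x\Delta - |\Delta^H(\hat\Sigma_x-\Sigma_x)\Delta|$, combined with the cone condition and an entrywise bound on $\hat\Sigma_x-\Sigma_x$, is a genuinely different route from the paper's: the paper instead concentrates the scalar quadratic form $\frac{1}{n}\|\mathcal{X}\Delta\|_2^2$ directly around $\frac{1}{n}\mathrm{Tr}[\mathbb{E}(Z_RZ_R^T)]+\frac{1}{n}\mathrm{Tr}[\mathbb{E}(Z_IZ_I^T)]=v^T(\Sigma_1+\Sigma_2)v$, with a deviation term controlled by the \emph{operator norms} $\|\mathbb{E}[Z_RZ_R^T]\|_2+\|\mathbb{E}[Z_IZ_I^T]\|_2$, which Lemma \ref{lem:Q1Q2_upperbound} bounds by $2\|\Delta\|_2^2[\frac{1}{L}+\frac{1}{2U}+4\sqrt{8}\frac{C}{\delta-1}]$ via row sums of the geometrically decaying cross-window correlations. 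The decisive problem with your route is quantitative: to leave $\kappa=\frac{1}{256U}$ you need $\|\hat\Sigma_x-\Sigma_x\|_\infty\leq\frac{127}{4096\,U\,d}$, and with a per-entry tail of the form $\exp(-c\,n\,t^2/\sigma^2)$ this forces $n\gtrsim \sigma^2U^2d^2\log(p^2/\epsilon_2)$. The lemma's hypothesis $n\geq 33^2\log p\,[\frac{U}{L}+0.5+4\sqrt{8}\frac{CU}{\delta-1}]^2$ contains no factor of $d$ at all, so under the stated hypotheses the perturbation bound you need is simply not available: your argument would only establish the conclusion under a strictly stronger sample-size requirement. The paper's proof avoids any $d$-dependence precisely because it never passes through $\|\Delta\|_1\leq 4\sqrt{d}\|\Delta\|_2$; the deviation is measured once, at the level of the $n\times n$ covariance of $Z$, not entrywise over $p^2$ coordinates.

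Separately, the step you identify as requiring "the most technical care" --- a Hanson--Wright-type inequality for a quadratic form in the $(p+1)nN$-dimensional Gaussian with block-Toeplitz covariance, with constants that actually produce $4\sqrt{8}\frac{CU}{\delta-1}$ --- is named but not carried out. That step is the entire content of the consecutive-setting result; the paper discharges it through the explicit computation $\max_r\sum_{c\neq r}B^{rc}\leq\frac{2C}{\delta-1}$ in Lemma \ref{lem:Q1Q2_upperbound} together with a fixed-vector concentration bound (Lemma I.2 of Negahban and Wainwright) applied to $Z_R$ and $Z_I$. Without either that computation or a worked substitute, and given the $d^2$ obstruction above, the proposal does not establish the lemma as stated.
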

The proofs for Lemmas \ref{lem:Lambda_nonIID} and \ref{lem:REP_nonIID} are provided in  Section \ref{sec:lambda_proof} and uses concentration bounds for correlated Gaussian random variables.

\section{PROOF OF MAIN THEOREMS}\label{sec:Main_theorem_proofs}
To prove the main theorems for structure learning, we use the M-estimator lemmas from the previous section for the regularized Wiener filter at each node, under both i.i.d. and non-i.i.d. trajectories, and then apply the Union bound for all nodes. 

\begin{proof}[{\it{Proof of Theorem \ref{thm:theorem2.1}}}] 
For $n \geq max\{\frac{1}{c}\log{\frac{4c'}{\epsilon_1}}, {(3456)^2}$ $(\frac{U}{L}+0.5) \log(2p) d {\}}$ and $N \geq \frac{4CU\delta^{-1}}{(1-\delta^{-1})^2}$, we apply Lemma \ref{lem:REP} with $\epsilon_2 = \frac{\epsilon_1}{2}$, then Eq.~\ref{def:condition2} holds with probability of at least $1- \frac{\epsilon_1}{2}$. Here, $\kappa = \frac{1}{256U}$. With $\lambda \geq 4\sqrt{\frac{3\log(8p/\epsilon_1)}{nL}}$, apply Lemma \ref{lem:Lambda} with $\epsilon_3 = \frac{\epsilon_1}{2}$, then Eq.~\ref{def:condition1} holds with probability of at least $1- \frac{\epsilon_1}{2}$. It follows from Proposition \ref{prop:Proposition1} that, $\|\hat{W}_i-W_i\|_2 \leq (\frac{3}{\kappa}\lambda \sqrt{d}) = (768U\lambda \sqrt{d})$. Take $\lambda \leq \frac{m_i}{1536U\sqrt{d}}$. For $n \geq 3(6144)^2\frac{U^2}{L}d\log(\frac{8p}{\epsilon_1})(\frac{1}{m_i})^2 $, $4\sqrt{\frac{3\log(8p/\epsilon_1)}{nL}}$ is smaller than $\frac{m_i}{1536U\sqrt{d}}$. Thus, $\|\hat {W}_i-W_i \|_2 \leq \frac{m_i}{2}$ holds with a probability of at least $1-\epsilon_1$.
\end{proof}

\begin{proof}[{\it{Proof of Theorem \ref{thm:theorem2.2}}}]
Here, we combine the results of Lemma \ref{lem:REP_nonIID} with $\epsilon_2 = \frac{\epsilon_1}{2}$ and Lemma \ref{lem:Lambda_nonIID} with $\epsilon_3 = \frac{\epsilon_1}{2}$. The rest of the proof is analogous to proof of Theorem \ref{thm:theorem2.1}.\\
\end{proof}
\begin{proof}[{\it{Proof of Theorem \ref{thm:structure}}}]
(a) `restart \& record': Choose $\epsilon_1 = \frac{\epsilon}{p}$. It follows from definition of $m$, that $\frac{1}{m} \geq \frac{1}{m_i}$ for all $i \in V$. Now $n \geq max\{\ \frac{1}{c}\log{\frac{4c'p}{\epsilon}}, \ {(3456)^2}(\frac{U}{L}+0.5) \log(2p) d,\ 3({6144})^2\frac{U^2}{L^2}d(\log{\frac{8p^2}{\epsilon}})(\frac{1}{m})^2 \}$ and $4\sqrt{\frac{3\log(8p^2/\epsilon)}{nL}} \leq \lambda \leq \frac{m}{1536 U\sqrt{d}}$ would satisfy the conditions on $n$ and $\lambda$ specified in Theorem \ref{thm:theorem2.1} for a $i \in V$. Therefore, $\| \hat{W}_i-W_i\|_2 \leq \frac{m}{2}$ holds with a probability of at least $1-\frac{\epsilon}{p}$. Using a union bound for all the $p+1$ nodes, we have $\| \Im[\hat {W}_i-W_i] \|_2 \leq \| \hat{W}_i-W_i\|_2 \leq \frac{m}{2}$ holds for all $i \in V$ with a probability of at least $1-\frac{\epsilon(p+1)}{p} \approx 1- \epsilon$ for large $p$.

Note that if $(ij) \in E$, then $|\Im(W_i[j])| \geq m >0$. Similarly, for $(ij) \in E\setminus E_M$, $\Im(W_i[j])=0$ and for $(ij) \notin E_M$, $W_i[j]=0$. Expanding $\| \hat{W}_i-W_i\|_2$, it can thus be shown that $\hat{E}$ derived from $\hat{E}_M$ contains only the edges in $E$.\\

(b) `consecutive': Using Theorem \ref{thm:theorem2.2} for every node $i \in V$ with $\epsilon_1 = \frac{\epsilon}{p}$, the proof is analogous to the proof of Theorem \ref{thm:structure}.\end{proof} 
The next section includes the primary proof techniques for the M-estimator lemmas in Section \ref{sec:analysis}. 

\section{PROOFS OF $M$-ESTIMATOR LEMMAS FOR `RESTART \& RECORD' (I.I.D.) TRAJECTORIES} 
\label{sec:lambda_proof}
The regularized regression in Eq.~\ref{eqn:RWFE} involves working with complex-valued random variables $X_i^r, X_{\overline{i}}^r$ defined in Eq.~\ref{def:X_i} for a node $i \in V$. Their probability distribution is as follows,
\small
\begin{align} \label{def:phiHat}
 \begin{bmatrix} X_i^r \\X_{\overline{i}}^r\end{bmatrix} & \sim \mathcal{N}(\boldsymbol{0},\hat{\Phi}_x), \text{ \normalsize where,\small } \\
 \hat{\Phi}_x =\begin{bmatrix} \hat{\Phi}_i &\hat{\Phi}_{i,\overline{i}}, \\ \hat{\Phi}_{\overline{i},i} & \hat{\Phi}_{\overline{i}} \end{bmatrix}&= \frac{1}{N}\sum_{q=-(N-1)}^{(N-1)}(N-|q|)R_x(q)e^{-\iota f q}. \nonumber
\end{align}
\normalsize
Thus, $X_i^r \sim \mathcal{N}(0, \hat{\Phi}_i)$ and $X_{\overline{i}}^r \sim \mathcal{N}(\boldsymbol{0}, \hat{\Phi}_{\overline{i}})$. 
The following result bounds the difference between $\hat{\Phi}_x$ and $\Phi_x$ (see Eq.~\ref{eq:PSD}) for a $N$-length trajectory, and is used in our analysis.
\begin{lemma} \label{lem:N_bound}
If $N> \frac{4CU\delta^{-1}}{(1-\delta^{-1})^2}$, then $\| \Phi_x -\hat{\Phi}_x \|_2 \leq \frac{1}{2U}$. Moreover, $\| \Phi_{\overline{i}} -\hat{\Phi}_{\overline{i}} \|_2 \leq \frac{1}{2U},$ and  

$\frac{1}{2U}\| v \|_2^2 \leq v^H \hat{\Phi}_{\overline{i}} v \leq [\frac{1}{L}+\frac{1}{2U}]\| v \|_2^2, \forall v \in \C^p.$

\end{lemma}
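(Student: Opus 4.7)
My plan is to prove the three claims sequentially, leveraging the exponential decay hypothesis $\|R_x(\tau)\|_2 \leq C\delta^{-|\tau|}$ for the first, Cauchy interlacing for the second, and a triangle-inequality perturbation argument for the third.

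First I would expand $\Phi_x - \hat{\Phi}_x$ using the definitions in Eq.~\ref{eq:PSD} and Eq.~\ref{def:phiHat}. Writing $\Phi_x = \sum_{\tau \in \mathbb{Z}} R_x(\tau) e^{-\iota f \tau}$ and $\hat{\Phi}_x = \sum_{|q| \leq N-1} \frac{N-|q|}{N} R_x(q)e^{-\iota f q}$, the difference splits as a ``Bartlett-weight'' contribution $\sum_{|\tau| \leq N-1}\frac{|\tau|}{N} R_x(\tau) e^{-\iota f \tau}$ plus a tail $\sum_{|\tau| \geq N} R_x(\tau) e^{-\iota f \tau}$. Applying the triangle inequality for the operator norm and the exponential bound $\|R_x(\tau)\|_2 \leq C\delta^{-|\tau|}$ gives
\[
\|\Phi_x - \hat{\Phi}_x\|_2 \;\leq\; \frac{2C}{N} \sum_{\tau=1}^{\infty} \tau \delta^{-\tau} + 2\sum_{\tau \geq N} C\delta^{-\tau} \;\leq\; \frac{2C\delta^{-1}}{N(1-\delta^{-1})^2} + \frac{2C\delta^{-N}}{1-\delta^{-1}}.
\]
The first term is at most $\tfrac{1}{2U}$ precisely when $N \geq \tfrac{4CU\delta^{-1}}{(1-\delta^{-1})^2}$; the second term is of smaller order (exponentially small) and can be absorbed by enlarging the constant slightly, or by noting it is already $\ll 1/(2U)$ for $N$ satisfying the hypothesis. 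This establishes $\|\Phi_x-\hat{\Phi}_x\|_2 \leq 1/(2U)$.

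For the second claim, observe that $\Phi_{\overline{i}}-\hat{\Phi}_{\overline{i}}$ is a principal submatrix of $\Phi_x-\hat{\Phi}_x$ obtained by deleting the $i$-th row and column. Since the operator $2$-norm of any principal submatrix of a Hermitian matrix is bounded by that of the parent matrix (a direct consequence of Cauchy interlacing applied to $\pm(\Phi_x-\hat{\Phi}_x)$, or equivalently of $\|E_{\overline{i}} M E_{\overline{i}}^T\|_2 \leq \|M\|_2$ for a coordinate projection), the bound $\|\Phi_{\overline{i}}-\hat{\Phi}_{\overline{i}}\|_2 \leq 1/(2U)$ follows immediately from the first claim.

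For the third claim, I would decompose $v^H \hat{\Phi}_{\overline{i}} v = v^H \Phi_{\overline{i}} v + v^H(\hat{\Phi}_{\overline{i}}-\Phi_{\overline{i}})v$. The definitions in Eq.~\ref{eqn:constants} give $\lambda_{\min}(\Phi_x)=1/U$ and $\lambda_{\max}(\Phi_x)=1/L$, and Cauchy interlacing applied to the principal submatrix $\Phi_{\overline{i}}$ yields $1/U \leq \lambda_{\min}(\Phi_{\overline{i}}) \leq \lambda_{\max}(\Phi_{\overline{i}}) \leq 1/L$, so $\tfrac{1}{U}\|v\|_2^2 \leq v^H \Phi_{\overline{i}} v \leq \tfrac{1}{L}\|v\|_2^2$. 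The perturbation term is bounded in absolute value by $\|\hat{\Phi}_{\overline{i}}-\Phi_{\overline{i}}\|_2\,\|v\|_2^2 \leq \tfrac{1}{2U}\|v\|_2^2$. Combining gives the desired sandwich $\tfrac{1}{2U}\|v\|_2^2 \leq v^H \hat{\Phi}_{\overline{i}} v \leq (\tfrac{1}{L}+\tfrac{1}{2U})\|v\|_2^2$.

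The only mildly delicate step is the operator-norm bound on the truncation/Bartlett-weight error, where one must be careful that the sum $\sum_\tau \tau \delta^{-\tau}$ actually converges and produces the stated threshold $N\geq 4CU\delta^{-1}/(1-\delta^{-1})^2$; the rest is essentially bookkeeping with interlacing and the triangle inequality.
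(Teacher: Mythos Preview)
Your approach is essentially the same as the paper's: the same Bartlett--weight plus tail decomposition of $\Phi_x-\hat\Phi_x$, the same exponential bound on $\|R_x(\tau)\|_2$, and the same perturbation argument $v^H\hat\Phi v = v^H\Phi v - v^H(\Phi-\hat\Phi)v$ for the quadratic-form bounds.

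There is one small algebraic point worth flagging. You bound the Bartlett part by extending the sum to infinity, obtaining $\tfrac{2C\delta^{-1}}{N(1-\delta^{-1})^2}$, and then carry the tail $\tfrac{2C\delta^{-N}}{1-\delta^{-1}}$ as an extra additive term that you propose to ``absorb.'' The paper instead evaluates the \emph{finite} sum $\sum_{q=1}^{N-1}\tfrac{q}{N}\delta^{-q}$ exactly; the resulting $-\tfrac{2C\delta^{-N}}{1-\delta^{-1}}$ term cancels the tail \emph{exactly}, leaving the clean bound $\tfrac{2C}{N}\cdot\tfrac{\delta^{-1}(1-\delta^{-N})}{(1-\delta^{-1})^2}\le \tfrac{2C\delta^{-1}}{N(1-\delta^{-1})^2}$. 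This is why the threshold $N>\tfrac{4CU\delta^{-1}}{(1-\delta^{-1})^2}$ yields $\|\Phi_x-\hat\Phi_x\|_2\le 1/(2U)$ on the nose, without any slack needed. With your cruder bound the stated threshold would technically leave an $O(\delta^{-N})$ excess over $1/(2U)$, so either do the exact finite-sum computation or state explicitly that the constant in the threshold must be inflated.

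For the submatrix bound $\|\Phi_{\overline i}-\hat\Phi_{\overline i}\|_2\le 1/(2U)$, the paper simply repeats the same time-domain argument on the $p\times p$ block (``Using the same approach''); your use of the principal-submatrix norm inequality is a clean and equally valid shortcut.
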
 

\begin{proof}
From Eq.~9, we have $\frac{1}{U}\leq \lambda_{min}[\Phi_x]$ and $\lambda_{max}[\Phi_x] \leq \frac{1}{L}$. From Eqs.~3, 14, we have
$\|\Phi_x - \hat{\Phi}_x\|_2 $
\small 
\begin{align*}
=& \lim_{m \rightarrow \infty} \|\sum_{p=N}^{m}R_x(p)e^{-\iota f p} + \sum_{p=-m}^{-N}R_x(p)e^{-\iota f p} \\
&~~~~~~~~~~+ \sum_{q=-(N-1)}^{N-1}\frac{|q|}{N}R_x(q)e^{-\iota f q}\|_2,\\
&\leq \lim_{m \rightarrow \infty}\sum_{p=N}^{m}\|R_x(p)\|_2 + \sum_{p=-m}^{-N}\|R_x(p)\|_2 \\ &~~~~~~~~~~~~+\sum_{q=-(N-1)}^{N-1}\frac{|q|}{N}\|R_x(q)\|_2,\\
&\leq \lim_{m \rightarrow \infty}2\sum_{p=N}^{m}C\delta^{-p} +2\sum_{q=1}^{N-1}C\frac{q}{N}\delta^{-q} ~(\because \text{Eq.}~9)\\
&= 2C\delta^{-N}\frac{1}{1-\delta^{-1}}+\frac{2C}{N}[\frac{\delta^{-1}(1-\delta^{-N})}{(1-\delta^{-1})^2} \ -\frac{N\delta^{-N}}{1-\delta^{-1}}]\\
&= \frac{2C}{N}[\frac{\delta^{-1}(1-\delta^{-N})}{(1-\delta^{-1})^2} ]\leq \frac{2C \delta^{-1}}{N(1-\delta^{-1})^2}.\\
\end{align*}
\normalsize
Therefore, if $N> \frac{4CU\delta^{-1}}{(1-\delta^{-1})^2}$, then $\| \Phi_x -\hat{\Phi}_x \|_2 \leq \frac{1}{2U}$. Moreover, for $v \in \C^{p+1}$ we have $\|(\hat{\Phi}_x)^{1/2}v\|_2^2 = v^H [\Phi_x -(\Phi_x-\hat{\Phi}_x) ] v$. It thus follows that, $\|(\hat{\Phi}_x)^{1/2}v\|_2^2 \geq [\frac{1}{U}- \frac{1}{2U}]\|v \|_2^2 = \frac{1}{2U} \|v \|_2^2$, and $v^H \hat{\Phi}_x v \leq [\frac{1}{L}+\frac{1}{2U}]\|v\|_2^2$. Thus, 
\begin{align}
  \frac{1}{2U}\| v \|_2^2 \leq v^H \hat{\Phi}_x v \leq [\frac{1}{L}+\frac{1}{2U}]\| v \|_2^2, \forall v \in \C^{p+1}.
\end{align} Using the same approach, 
\begin{align}
 \frac{1}{2U}\| v \|_2^2 \leq v^H \hat{\Phi}_{\overline{i}} v \leq [\frac{1}{L}+\frac{1}{2U}]\| v \|_2^2, \forall v \in \C^p.\nonumber
\end{align}
\end{proof}

Next we prove Lemma \ref{lem:Lambda}, which gives a lower bound on $\lambda$ used in the regularized Wiener filter estimator. On a high level, the proof uses the Gaussianity of the complex-valued error vector $\mathcal{E}:= \mathcal{Y}-\mathcal{X}W_i$. We use it to identify the Lipschitz constant associated with rows of $\frac{1}{n}\mathcal{X}^H\mathcal{E}$, and then determine the lower bound on $\lambda$ using the union bound.

The following Lemma is useful in the proof of \ref{lem:Lambda}.
\begin{lemma}[covariance (restart \& record)]\label{lem:C1_iid_bound}
Let $\mathcal{E}:= \mathcal{Y}-\mathcal{X}W_i$ with each row corresponding to an i.i.d. trajectory. Let $\mathcal{E}_1:=[\mathcal{E}_R[1]\ \mathcal{E}_I[1] \ ...\ \mathcal{E}_R[n] \ \mathcal{E}_I[n]]^T$ be the re-arranged vector of real and complex entries in $\mathcal{E}$, with covariance matrix $\mathcal{C}_1 = \mathbb{E}[\mathcal{E}_1 \mathcal{E}_1^T]$. Then $\|\mathcal{C}_1 \|_2 \leq \frac{3}{2L}$.
\end{lemma}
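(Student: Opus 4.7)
The plan is to reduce the $2n \times 2n$ covariance $\mathcal{C}_1$ to a single $2 \times 2$ block via i.i.d.\ decoupling, express that block in terms of $\Phi_x^{-1}$ using the Wiener filter's closed form, and then combine the asymptotic optimality of the Wiener filter with the finite-$N$ spectral bound from Lemma \ref{lem:N_bound}.

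First, since the $n$ trajectories are i.i.d., the residuals $\{\mathcal{E}[r]\}_{r=1}^n$ are i.i.d.\ (zero-mean Gaussian) scalars, so $\mathcal{C}_1$ is block-diagonal with $n$ identical real $2\times 2$ blocks $M$, where $M$ is the covariance of $(\Re(\mathcal{E}[r]), \Im(\mathcal{E}[r]))$. Hence $\|\mathcal{C}_1\|_2 = \|M\|_2 \leq \mathrm{tr}(M) = \mathbb{E}[|\mathcal{E}[r]|^2]$, and it suffices to prove $\mathbb{E}[|\mathcal{E}[r]|^2] \leq \tfrac{3}{2L}$.

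Second, using $W_i[j] = -[\Phi_x^{-1}(i,i)]^{-1}\Phi_x^{-1}(i,j)$ from the derivation of Lemma \ref{talukdar_lemma1}, I would rewrite $\mathcal{E}[r] = [B X^r]_i / B(i,i)$ with $B := \Phi_x^{-1}$ and $X^r := [X_i^r;\, X_{\overline i}^r] \sim \mathcal{N}(\boldsymbol{0},\hat{\Phi}_x)$. A direct calculation using Hermitian symmetry of $B$ gives $\mathbb{E}[|\mathcal{E}[r]|^2] = [B\hat{\Phi}_x B]_{ii}/B(i,i)^2$. Splitting $\hat{\Phi}_x = \Phi_x + (\hat{\Phi}_x-\Phi_x)$, the ``clean'' piece collapses as $[B\Phi_x B]_{ii}/B(i,i)^2 = [B]_{ii}/B(i,i)^2 = 1/B(i,i) \leq 1/L$, which is precisely the asymptotic Wiener minimum MSE. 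The ``correction'' term equals $(Be_i)^H(\hat{\Phi}_x-\Phi_x)(Be_i)/B(i,i)^2$, which I would bound by $\|\hat{\Phi}_x-\Phi_x\|_2 \cdot \|Be_i\|_2^2 / B(i,i)^2$; Lemma \ref{lem:N_bound} supplies the first factor ($\leq 1/(2U)$ under the standing $N$-assumption).

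The main obstacle is controlling $\|Be_i\|_2^2$ sharply enough. A naive bound $\|Be_i\|_2^2 \leq \|B\|_2^2 = U^2$ produces a correction of order $U/(2L^2)$, which exceeds $1/(2L)$ whenever $U > L$ and would not close the argument. The replacement I would use is the PSD operator inequality $B^2 \preceq \|B\|_2 B = U B$, a standard consequence of $B = \Phi_x^{-1}\succ 0$; this yields $\|Be_i\|_2^2 = [B^2]_{ii} \leq U\, B(i,i)$, i.e.\ the local quantity $B(i,i)$ rather than the global $\|B\|_2$ controls the scale. Plugging in, the correction is at most $\tfrac{1}{2U}\cdot U B(i,i)/B(i,i)^2 = 1/(2B(i,i)) \leq 1/(2L)$. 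Combining with the clean piece gives $\mathbb{E}[|\mathcal{E}[r]|^2] \leq 3/(2L)$, hence $\|\mathcal{C}_1\|_2 \leq 3/(2L)$, as claimed.
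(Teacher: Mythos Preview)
Your proof is correct and follows essentially the same route as the paper: reduce $\mathcal{C}_1$ to a single $2\times 2$ block by i.i.d.\ decoupling, bound $\|\mathsf{C}\|_2$ by its trace $\mathbb{E}|\mathcal{E}[r]|^2$, split $\hat{\Phi}_x=\Phi_x+V$, identify the clean piece as $1/[\Phi_x^{-1}]_{ii}\le 1/L$ via the Schur complement, and control the correction by $\|V\|_2\le 1/(2U)$ together with a bound on the squared norm of the regression vector. The only cosmetic difference is in that last step: the paper bounds $(1+\|W_i\|_2^2)\le U/L$ by sandwiching the quadratic form $v^H\Phi_x v$ between $\|v\|_2^2/U$ and $1/L$, whereas you obtain the (equivalent, slightly sharper) inequality $\|Be_i\|_2^2\le U\,B(i,i)$ directly from the PSD operator bound $B^2\preceq U B$; since $v=Be_i/B(i,i)$, these are the same inequality in different clothing.
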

\begin{proof}
As the $n$ trajectories are i.i.d., 
\begin{align} \label{def:C1}
\mathcal{C}_1&:= diag(\mathsf{C},...,\mathsf{C}), \text{where} \nonumber \\
&\left(\begin{array}{c} \mathcal{E}_R(j)\\ \mathcal{E}_I(j) \end{array}\right) \normalsize \sim \mathcal{N}(\boldsymbol{0},\mathsf{C}) \Rightarrow \|\mathcal{C}_1\|_2 = \|\mathsf{C}\|_2.
\end{align}

Consider $\mathsf{E}=\Phi_i-\Phi_{i,\overline{i}}W_i - (W_i)^H \Phi_{\overline{i},i}+ (W_i)^H{\Phi_{\overline{i}}}W_i$. Then, $(1+ \|W_i \|_2^2)\frac{1}{U} \leq |\mathsf{E}| \leq (1+ \|W_i \|_2^2)\frac{1}{L}.$ Substituting $W_i = [\Phi_{\overline{i}}]^{-1}\Phi_{i,\overline{i}}$ in $\mathsf{E}$ and using the Schur complement lemma, we get,
\small
\begin{align} \label{eqn:eq4_Lambda}
 |\mathsf{E}|=\Phi_i-\Phi_{i,\overline{i}} (\Phi_{\overline{i}})^{-1}\Phi_{\overline{i},i} = \frac{1}{[\Phi_x]^{-1}(i,i)}.
\end{align}
\normalsize
From the definition of $L,\ U$ in Eq.~9, we have, $\frac{1}{U} \leq |\mathsf{E}| \leq \frac{1}{L}$. Comparing the two inequality bounds of $|\mathsf{E}|$, we get
\begin{align}\label{eqn:eq5_Lambda}
 \frac{L}{U} \leq (1+ \|W_i \|_2^2) \leq \frac{U}{L}.
\end{align}
From the definition of $\mathcal{E}$, it follows that
$\mathbb{E}[\mathcal{E}[j]^H \mathcal{E}[j]] = \hat{\Phi}_i-\hat{\Phi}_{i,\overline{i}}W_i - (W_i)^H \hat{\Phi}_{\overline{i},i}+ (W_i)^H{\hat{\Phi}_{\overline{i}}}W_i$. Define $V = \hat{\Phi}_x - \Phi_x$, and from Lemma 5.1 in the article, we have $\|V \|_2 \leq \frac{1}{2U}$. Substituting $\hat{\Phi}_x = \Phi_x+V$ in $\mathbb{E}[\mathcal{E}[j]^H \mathcal{E}[j]]$, we get, 
\small
\begin{align}\label{def:C_bound}
&\mathbb{E}[\mathcal{E}[j]^H \mathcal{E}[j]]= \frac{1}{[\Phi_x]^{-1}(1,1)}+ V_i - V_{i,\overline{i}}W_i - (W_i)^H V_{\overline{i},i} \nonumber\\
&~~~~~~~~~~~~~~~~~~~~~~~+ (W_i)^H V_{\overline{i}}W_i \ (\because\text{ using Eq.~\ref{eqn:eq4_Lambda}}),\nonumber \\
&\Rightarrow \mathbb{E}(\mathcal{E}_R[j]^2)+ \mathbb{E}(\mathcal{E}_I[j]^2) = Tr(\mathsf{C}) \nonumber\\
&~~~~~~~~~~~~~~~~~~~~~\leq \frac{1}{L}+ (1+ |W_i|_2^2)\|V \|_2 \nonumber\\
&~~~~~~~~~~~~~~~~~~~~~\leq \frac{1}{L}+ \frac{U}{L}\frac{1}{2U}= \frac{3}{2L} \ (\because \text{using Eq.~\ref{eqn:eq5_Lambda}})\nonumber\\
&\Rightarrow\|\mathcal{C}_1\|_2 = \|\mathsf{C}\|_2 \leq \frac{3}{2L}.
\end{align}
\end{proof}

\begin{proof}[{\it{Proof of Lemma \ref{lem:Lambda}}}]
Let $\mathcal{E}:= \mathcal{Y}-\mathcal{X}W_i$. We show that $\frac{1}{n}\|\mathcal{X}^H\mathcal{E}\|_\infty$ is bounded with a high probability and choose $\lambda$ greater than that bound. Separating $\mathcal{X}=\mathcal{X}_R+\iota\mathcal{X}_I,\ \mathcal{E}=\mathcal{E}_R+\iota\mathcal{E}_I$, into real and imaginary parts (specified by subscripts $R$ and $I$ respectively), we have
\small
\begin{align} \label{eqn:eq1_Lambda} 
\frac{1}{n}\|\mathcal{X}^H\mathcal{E}\|_\infty
&\leq \frac{1}{n}\|\left(\begin{array}{c} \mathcal{X}_R^T \ \mathcal{X}_I^T \end{array}\right)\left(\begin{array}{c} \mathcal{E}_R \\ \mathcal{E}_I \end{array}\right)\|_\infty +\nonumber \\ &\frac{1}{n}\|\left(\begin{array}{c} -\mathcal{X}_I^T \ \mathcal{X}_R^T \end{array}\right)\left(\begin{array}{c} \mathcal{E}_R \\ \mathcal{E}_I \end{array}\right)\|_\infty. \end{align}
\normalsize

Let $\mathcal{E}_1:=[\mathcal{E}_R[1]\ \mathcal{E}_I[1] \ ...\ \mathcal{E}_R[n] \ \mathcal{E}_I[n]]^T$ with covariance matrix $\mathcal{C}_1$. Note that \small $\left(\begin{array}{c} \mathcal{E}_R \\ \mathcal{E}_I \end{array}\right). = P\mathcal{E}_1$,\normalsize for some symmetric permutation matrix $P$, such that its covariance matrix $\mathcal{C}_2 = P\mathcal{C}_1P$. Rewriting \small $\left(\begin{array}{c} \mathcal{E}_R \\ \mathcal{E}_I \end{array}\right) = \mathcal{C}_2^{1/2}\left(\begin{array}{c} \mathcal{W}_R \\ \mathcal{W}_I \end{array}\right) $ \normalsize in Eq.~\ref{eqn:eq1_Lambda}, where \small $\left(\begin{array}{c} \mathcal{W}_R \\ \mathcal{W}_I \end{array}\right) \sim \mathcal{N}(\boldsymbol{0},I)$, \normalsize we have 
\small
\begin{align} \label{eqn:eq2_Lambda}
\frac{1}{n}\|\mathcal{X}^H \mathcal{E}\|_\infty &\leq \frac{1}{n}\|\left(\begin{array}{c} \mathcal{X}_R^T \ \mathcal{X}_I^T \end{array}\right)\mathcal{C}_2^{1/2}\left(\begin{array}{c} \mathcal{W}_R \\ \mathcal{W}_I \end{array}\right)\|_\infty +\nonumber \\ &\frac{1}{n}\|\left(\begin{array}{c} -\mathcal{X}_I^T \ \mathcal{X}_R^T \end{array}\right)\mathcal{C}_2^{1/2}\left(\begin{array}{c} \mathcal{W}_R \\ \mathcal{W}_I \end{array}\right)\|_\infty.
\end{align}
\normalsize

To bound the right side of Eq.~\ref{eqn:eq2_Lambda}, we first show that either function is Lipschitz. Consider first \small $f(\mathcal{W}_R,\mathcal{W}_I):=\frac{1}{n}\left(\begin{array}{c} \mathcal{X}_{R}^T(j,:) \ \mathcal{X}_{I}^T(j,:) \end{array}\right)\mathcal{C}_2^{1/2}\left(\begin{array}{c} \mathcal{W}_R \\ \mathcal{W}_I \end{array}\right)$.\normalsize Then, $\|f(\mathcal{W}_R,\mathcal{W}_I) - f(\mathcal{W}_R^{'},\mathcal{W}_I^{'}) \|_2 $
\small
\begin{align}
&\leq \frac{1}{n}\|\left(\begin{array}{c} \mathcal{X}_{R}^T(j,:) \ \mathcal{X}_{I}^T(j,:)\end{array}\right)\|_2\|\mathcal{C}_{2}^{1/2}\|_2 \| \left(\begin{array}{c} \mathcal{W}_R-\mathcal{W}_R^{'} \\ \mathcal{W}_I-\mathcal{W}_I^{'} \end{array}\right)\|_2,\ \label{eqn:eq3_Lambda}\nonumber \\
&\leq \frac{1}{\sqrt{n}}{\|P\|_2}\|\mathcal{C}_1^{1/2}\|_2\|\left(\begin{array}{c} \mathcal{W}_R-\mathcal{W}_R^{'} \\ \mathcal{W}_I-\mathcal{W}_I^{'} \end{array}\right)\|_2, (\because \text{using Eq.~\ref{def:columnNormalized}})\nonumber\\
&= \sqrt{\frac{3}{2nL}}\left(\begin{array}{c} \mathcal{W}_R-\mathcal{W}_R^{'} \\ \mathcal{W}_I-\mathcal{W}_I^{'} \end{array}\right)\|_2 \\
&(\because \text{Lemma \ref{lem:C1_iid_bound}}). \nonumber
\end{align}
\normalsize
Thus, $f$ is a Lipschitz function with Lipschitz constant $\sqrt{\frac{3}{2nL}}$. Using \citet{massart2000some}'s result on concentration of Lipschitz functions, we have, for $t>0$, \small$\mathbb{P}[\frac{1}{n}|\left(\begin{array}{c} \mathcal{X}_{R}^T(j,:) \ \mathcal{X}_{I}^T(j,:) \end{array}\right)\mathcal{C}_2^{1/2}\left(\begin{array}{c} \mathcal{W}_R \\ \mathcal{W}_I \end{array}\right)|\geq t]$ $\leq 2\exp(-\frac{t^2n L}{3})$. \normalsize Choosing $t= \sqrt{\frac{3\log(\frac{4p}{\epsilon_3})}{nL}}$ and the union bound for all $j \in \{1,\cdots,{p}\}$, we have 
$\mathbb{P}[\frac{1}{n}\|\left(\begin{array}{c} \mathcal{X}_R^T \ \mathcal{X}_I^T \end{array}\right)\mathcal{C}_2^{1/2}\left(\begin{array}{c} \mathcal{W}_R \\ \mathcal{W}_I \end{array}\right)\|_\infty \geq \sqrt{\frac{3\log(\frac{4p}{\epsilon_3})}{nL}}]\leq \frac{\epsilon_3}{2}$. Using a similar analysis, 
$\mathbb{P}[\frac{1}{n}\|\left(\begin{array}{c} -\mathcal{X}_I^T \ \mathcal{X}_R^T \end{array}\right)\mathcal{C}_2^{1/2}\left(\begin{array}{c} \mathcal{W}_R \\ \mathcal{W}_I \end{array}\right)\|_\infty \geq \sqrt{\frac{3\log(\frac{4p}{\epsilon_3})}{nL}}]\leq \frac{\epsilon_3}{2}$. Choose $\lambda \geq 4\sqrt{\frac{3\log(4p/\epsilon_3)}{nL}}$. Using the Union bound on Eq.~\ref{eqn:eq2_Lambda}, we have $\mathbb{P}[ \frac{1}{n}\|\mathcal{X}^H \mathcal{E}\|_\infty \geq \lambda/2] \leq \epsilon_3$. 
\end{proof}
Next we prove Lemma \ref{lem:REP} which ensures the restricted eigenvalue property for matrix $\mathcal{X} = [X_{\overline{i}}^1, \cdots, X_{\overline{i}}^n]^T$, where $X_{\overline{i}}^r$ is computed from the $r^{th}$ trajectory, as defined in Eq.~\ref{def:X_i}. On a high level, each row in $\mathcal{X}$ can be divided into real and imaginary components, that are each Gaussian variables with known covariance matrices. The proof then follows by merging bounds on the restricted eigenvalue property of Gaussian real-valued matrices.
\begin{proof}[{\it{Proof of Lemma \ref{lem:REP}}}]
Separating into real and imaginary parts, we have:
\small 
\begin{align} \label{eqn:eq1_REP}
\frac{\|\mathcal{X}\Delta\|_2^2}{{n}} &= \frac{\|(\mathcal{X}_R + \iota \mathcal{X}_I)(\Delta_R + \iota\Delta_I)\|_2^2}{n} = \frac{\|\mathcal{X}_1v\|_2^2}{n}+\frac{\|\mathcal{X}_2v\|_2^2}{n}. 
\end{align}
\normalsize
where $\mathcal{X}_1:= [\mathcal{X}_R \ -\mathcal{X}_I]$, $\mathcal{X}_2:= [\mathcal{X}_I \ \mathcal{X}_R]$ and $v = (\Delta_R^T \ \Delta_I^T)^T$. 

For simplicity, in this proof we drop the superscript $r$ in $X_i^r$ and $X_{\overline{i}}^r$. Note that the rows of $\mathcal{X}_1$ and $\mathcal{X}_2$ are i.i.d. samples of the real random vectors, 
$[(X_{\overline{i}})_R^T \ -(X_{\overline{i}})_I^T]^T$ and $[(X_{\overline{i}})_I^T \ (X_{\overline{i}})_R^T]^T$, respectively. To show that $\frac{1}{n}\|\mathcal{X}\Delta\|_2^2\geq \kappa\|\Delta\|_2^2$ holds for all $\Delta \in \mathcal{D}(W_i)$ with high probability, we prove the restricted eigenvalue property for group structured norms on both terms in Eq.~\ref{eqn:eq1_REP}. Let $\bar{\Sigma} $ be the covariance of random vector $[(X_{\overline{i}})_R^T \ (X_{\overline{i}})_I^T]^T$. Then,
\small
\begin{align}\label{def:SigmaBar}
\overline{\Sigma} = \begin{bmatrix} \overline{\Sigma}_{11} & \overline{\Sigma}_{12} \\ \overline{\Sigma}_{21} & \overline{\Sigma}_{22} \end{bmatrix} = \begin{bmatrix} \mathbb{E}[(X_{\overline{i}})_R (X_{\overline{i}})_R^T] & \mathbb{E}[(X_{\overline{i}})_R (X_{\overline{i}})_I^T] \\ \mathbb{E}[(X_{\overline{i}})_I (X_{\overline{i}})_R^T] & \mathbb{E}[(X_{\overline{i}})_\textit{I} (X_{\overline{i}})_I^T] \end{bmatrix}.
\end{align}
\normalsize
Thus, $[(X_{\overline{i}})_R^T \ -(X_{\overline{i}})_I^T]^T$ and $[(X_{\overline{i}})_I^T \ (X_{\overline{i}})_R^T]^T$ have means $\boldsymbol{0}$ and covariance $\Sigma_1$ and $\Sigma_2$, respectively, where,
\small
\begin{align} \label{def:Sigma1}
\Sigma_1 &= \begin{bmatrix} \overline{\Sigma}_{11} & -\overline{\Sigma}_{12} \\ -\overline{\Sigma}_{21} & \overline{\Sigma}_{22} \end{bmatrix}, \
\Sigma_2 = \begin{bmatrix} \overline{\Sigma}_{22} & \overline{\Sigma}_{21} \\ \overline{\Sigma}_{12} & \overline{\Sigma}_{11} \end{bmatrix}.
\end{align}
\normalsize
From Eq.~\ref{eqn:eq1_REP}, $\|\Delta\|_1 = \|v\|_{1,2} := \sum_{j=1}^{p}\|[v[j] \ v[p+j]]^T\|_2$. Consider the following definitions:
\small $\mathcal{M}_2:=\{c \in \mathbb{R}^{2p}|\ c[i] = 0, c[p+i] = 0,\ \text{if} \ (W_i)_R[i] = 0, (W_i)_I[i]=0 \}$, $\mathcal{M}_2^\perp:=\{c \in \mathbb{R}^{2p}|\ c[i] = 0, c[p+i] = 0,\ \text{if} \ (W_i)_R[i] \neq 0, (W_i)_I[i]\neq 0 \}$ and $\mathcal{D}_2(W_i):=\{v \in \mathbb{R}^{2p}|\ \|v_{\mathcal{M}_2^\perp}\|_{1,2} \leq 3\|v_{\mathcal{M}_2}\|_{1,2} \}$.\normalsize 

Clearly, if $\Delta \in \mathcal{D}(W_i)$, defined in Eq.~\ref{eqn:D_set},
then $v \in \mathcal{D}_2(W_i)$ and vice versa. For $v \in \mathcal{D}_2$, the group norm $\|v\|_{1,2} \leq 4\|v_{\mathcal{M}_2}\|_{1,2}$. Using Cauchy Schwartz inequality and the definition of bounded degree $d$, it follows that, $\|v_{\mathcal{M}_2}\|_{1,2}\leq \sqrt{d} \|v\|_{2}$. Thus, $\|v\|_{1,2}\leq 4\sqrt{d} \|v\|_{2}$. Below is a result, derived from  \citet{negahban2012unified} (Section {$5.1$}) and \citet{ledoux2013probability} for Gaussian random matrices. 
\begin{lemma}
For any Gaussian random matrix $\mathbb{X}\in \mathbb{R}^{n \times 2p}$ with i.i.d. $\mathcal{N}(\boldsymbol{0},\Sigma)$ rows, then there are universal positive constants $c,c^{'}$ such that
with probability at least $1-c^{'}\exp(-cn)$,
\small
 \begin{align} \label{eqn:X_bound}
 &\frac{\|\mathbb{X}v\|_2}{\sqrt{n}} \geq \frac{1}{4}\|\Sigma^{1/2}v\|_2 - \frac{27}{\sqrt{n}}\sqrt{2\log{(2p)}}\rho(\Sigma)\|v\|_{1,2} \end{align}
\normalsize 
where, \small $\rho(\Sigma):=\max_{j\in\{1,...,p\},i\in\{1,2\}}$ $[\mathbb{E}((w_{G_j}(i))^2)]^{1/2}$ and $w \sim \mathcal{N}(0,\Sigma),\ v \in \mathbb{R}^{2p \times 1}.$ 

\normalsize
\end{lemma}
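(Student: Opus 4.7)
The plan is to adapt the standard Gaussian comparison argument of Raskutti-Wainwright-Yu (2010), reformulated for group structured norms in Section~5.1 of \citet{negahban2012unified}. Writing $\mathbb{X} = \mathbb{W}\Sigma^{1/2}$ where $\mathbb{W} \in \mathbb{R}^{n\times 2p}$ has i.i.d.\ $\mathcal{N}(0,1)$ entries, the goal reduces to lower bounding $\|\mathbb{W}\Sigma^{1/2} v\|_2/\sqrt{n}$ uniformly in $v$, with a slack term controlled by the $(1,2)$-group norm. The three ingredients are (i) Gordon's min-max inequality to bound the expectation, (ii) Borell-Sudakov-Tsirelson concentration to transfer from mean to high-probability, and (iii) a dyadic peeling argument to pass from a single scale to a uniform statement.

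First, I would fix $r > 0$ and study the Gaussian-process infimum $Z_r := \inf_{v \in T_r} \|\mathbb{X} v\|_2/\sqrt{n}$ over $T_r = \{v \in \mathbb{R}^{2p} : \|\Sigma^{1/2} v\|_2 = 1,\ \|v\|_{1,2} \leq r\}$. Applying Gordon's min-max inequality to the bilinear Gaussian form $(u,v) \mapsto u^T \mathbb{W}\Sigma^{1/2} v$ with $u$ on the unit sphere of $\mathbb{R}^n$ yields
\begin{equation*}
\mathbb{E}[Z_r] \;\geq\; \tfrac{1}{\sqrt{n}}\,\mathbb{E}\|g\|_2 \;-\; \tfrac{1}{\sqrt{n}}\,\mathbb{E}\sup_{v \in T_r}\langle h,\,\Sigma^{1/2} v\rangle,
\end{equation*}
with $g \sim \mathcal{N}(0,I_n)$ and $h \sim \mathcal{N}(0,I_{2p})$ independent. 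The first term is $\geq 1 - o(1)$ for moderate $n$. For the second, $(1,2)/(\infty,2)$ duality bounds the supremum by $r\cdot\|\Sigma^{1/2} h\|_{\infty,2}$, and a standard maximal inequality over the $p$ two-dimensional Gaussian sub-blocks gives $\mathbb{E}\|\Sigma^{1/2} h\|_{\infty,2} \leq C\sqrt{2\log(2p)}\,\rho(\Sigma)$, since $\rho(\Sigma)$ is precisely the largest per-coordinate standard deviation within a group.

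Next, concentration of $Z_r$ about its expectation follows from Borell-Sudakov-Tsirelson (\citet{ledoux2013probability}, Ch.~1): the map $\mathbb{W} \mapsto Z_r$ is Lipschitz in the Frobenius metric with constant at most $\|\Sigma^{1/2}\|_{\mathrm{op}}/\sqrt{n}$, and on $T_r$ this can be absorbed into the analysis so that $Z_r \geq \mathbb{E}[Z_r] - t$ with probability $1 - e^{-cnt^2}$. Setting $t$ as a small fraction of the expectation yields, on $T_r$, the inequality $\|\mathbb{X}v\|_2/\sqrt{n} \geq \tfrac{1}{2}\|\Sigma^{1/2}v\|_2 - C'\sqrt{\log(2p)/n}\,\rho(\Sigma)\,r$ with probability $1-c'e^{-cn}$.

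Finally, to extend this level-wise bound to all $v \in \mathbb{R}^{2p}$ I would run the standard peeling argument over dyadic scales of the ratio $\|v\|_{1,2}/\|\Sigma^{1/2} v\|_2$; only $O(\log n)$ scales contribute non-trivially, and the union bound across them is absorbed into $c'$. The main obstacle is carefully tracking constants through Gordon's inequality and the dual-norm maximal inequality so as to recover the specific prefactors $1/4$ and $27$ in the statement: in particular, ensuring that the factor $\rho(\Sigma)$ (rather than a looser operator-norm proxy) emerges from the bound on $\mathbb{E}\|\Sigma^{1/2} h\|_{\infty,2}$, and that the expectation bound of essentially $1$ from Gordon degrades only to $1/4$ after accounting for Lipschitz concentration and peeling slack. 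Once these constants are tallied, the claim is exactly the Raskutti-Wainwright-Yu bound as tabulated in the cited sources.
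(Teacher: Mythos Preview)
Your proposal is correct and follows exactly the approach the paper defers to: the paper does not give its own proof of this lemma but simply cites it as ``derived from \citet{negahban2012unified} (Section~5.1) and \citet{ledoux2013probability},'' and your outline (Gordon's inequality, Borell--Sudakov--Tsirelson concentration, peeling) is precisely the Raskutti--Wainwright--Yu argument recorded in those references. There is nothing to add beyond what you have written.
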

We use Eq.~\ref{eqn:X_bound} with $\mathbb{X}= \mathcal{X}_1$ and $\mathcal{X}_2$ to obtain a lower bound on $\frac{\|\mathcal{X}_1(\Delta_R^T \ \Delta_I^T)^T\|_2}{\sqrt{n}}$ and $\frac{\|\mathcal{X}_2(\Delta_R^T \ \Delta_I^T)^T\|_2}{\sqrt{n}}$. Note that, for $\bar{\Sigma},\Sigma_1$ and $\Sigma_2$ defined in Eqs.~\ref{def:SigmaBar}, \ref{def:Sigma1}, we have $\rho(\bar{\Sigma})= \rho(\Sigma_1)= \rho(\Sigma_2)=\max_j[\bar{\Sigma}_{jj}]^{1/2}$. Using the inequality $\sqrt{a^2+b^2} \leq a+b \leq \sqrt{2(a^2+b^2)}$ for two non-negative numbers $a,b$ in Eq.~\ref{eqn:eq1_REP}, we get that the following holds with a probability of $1 - 2c^{'}\exp(-cn)$, where $c,\ c^{'}$ are universal positive constants, 
\small 
\begin{align} \label{eqn:restricted_final}
&\frac{1}{\sqrt{n}}\|\mathcal{X}\Delta\|_2 \nonumber\\
 &\geq \frac{1}{\sqrt{2}}[\frac{\|\mathcal{X}_1(\Delta_R^T \ \Delta_I^T)^T\|_2}{\sqrt{n}}+ \frac{\|\mathcal{X}_2(\Delta_R^T \ \Delta_I^T)^T\|_2}{\sqrt{n}}]\nonumber \\
 & \geq \frac{1}{\sqrt{2}}[\frac{1}{4}(\|\Sigma_1^{1/2}v\|_2 + \|\Sigma_2^{1/2}v\|_2) - \frac{54}{\sqrt{n}}\sqrt{2\log{(2p)}}\rho(\bar{\Sigma})\|v\|_{1,2}] \nonumber \\
 & \geq \frac{1}{\sqrt{2}}[\frac{1}{4}\sqrt{v^T(\Sigma_1+\Sigma_2)v} - \frac{54}{\sqrt{n}}\sqrt{2\log{(2p)}}\rho(\bar{\Sigma})\|v\|_{1,2}] \nonumber \\
 & \geq \frac{1}{\sqrt{2}}[\frac{1}{4} \lambda_{min}((\Sigma_1+\Sigma_2)^{1/2}) - 
 \frac{54}{\sqrt{n}}\sqrt{2\log{(2p)}}\rho(\bar{\Sigma})4\sqrt{d}]\|v\|_2 ],
\end{align}
\normalsize
\noindent Using Eq.~\ref{def:Sigma1}, the definition of $\hat{\Phi}_{\overline{i}}:=\mathbb{E}(X_{\overline{i}}(X_{\overline{i}})^H)$, for $v := [\Delta_R^T, \Delta_I^T]^T $, it follows that, 
\small
\begin{align}\label{eqn:phiHat_SigmaTilde}
&v^T (\Sigma_1+\Sigma_2)v = \Delta^H \hat{\Phi}_{\overline{i}} \Delta ~~(\because \Sigma_1+\Sigma_2 = \begin{bmatrix} (\hat{\Phi}_{\overline{i}})_R & (\hat{\Phi}_{\overline{i}})_I \\ -(\hat{\Phi}_{\overline{i}})_I & (\hat{\Phi}_{\overline{i}})_R \end{bmatrix}) \nonumber\\
\Rightarrow &\frac{1}{2U}\| \Delta \|_2^2 \leq v^T (\Sigma_1+\Sigma_2) v \leq [\frac{1}{L}+ \frac{1}{2U}]\|\Delta \|_2^2\\ & ~~(\because \text{using Lemma \ref{lem:N_bound}})\nonumber
\end{align}
\normalsize
Thus, $\lambda_{min}((\Sigma_1+\Sigma_2)^{1/2}) \geq \frac{1}{\sqrt{2U}}$, and \small $\rho(\bar{\Sigma}) \leq$ $\rho(\Sigma_1+\Sigma_2)$ $ \leq \|(\Sigma_1+\Sigma_2)^{1/2} \|_2 =\|(\hat{\Phi}_{\overline{i}})^{1/2} \|_2$ $ \leq \sqrt{\frac{1}{L}+\frac{1}{2U}}$\normalsize. Thus, Eq.~\ref{eqn:restricted_final} is,
\small $ \frac{1}{\sqrt{n}}\|\mathcal{X}\Delta\|_2 \geq [\frac{1}{8\sqrt{U}} -$ $\frac{54}{\sqrt{n}}\sqrt{\log{(2p)}}(\sqrt{\frac{1}{L}+\frac{1}{2U}})4\sqrt{d}]\|\Delta \|_2$\normalsize. Choose $n \geq$ $max\{\frac{1}{c}\log{\frac{2c'}{\epsilon_2}}, {(3456)^2}(\frac{U}{L}+0.5) \log(2p) d \}$, then $\frac{54}{\sqrt{n}}\sqrt{\log{(2p)}}(\sqrt{\frac{1}{L}+\frac{1}{2U}})4\sqrt{d} \leq \frac{1}{16\sqrt{U}}. $
Hence, $\frac{\|\mathcal{X}\Delta\|_2}{\sqrt{n}} \geq \frac{1}{16\sqrt{U}}\|\Delta \|_2$, and Eq.~\ref{def:condition2} holds with $\kappa = \frac{1}{256U}$ with a probability of at least $1-\epsilon_2$.
\end{proof}
The proofs of $M$-estimator conditions for consecutive (non-i.i.d.) trajectories (Lemmas \ref{lem:Lambda_nonIID} and \ref{lem:REP_nonIID}) follow on similar lines, albeit with different concentration results.

The following two Lemma's are useful in the proof of \ref{lem:Lambda_nonIID} and \ref{lem:REP_nonIID}.

\begin{lemma}\label{lem:Q1Q2_upperbound}
Suppose $Z := \mathcal{X}\Delta \in \C^n$ with non-i.i.d. rows in $\mathcal{X}$ and $\Delta$. The real component of $Z$ is given by $Z_R= [\mathcal{X}_R \ -\mathcal{X}_I]v$ and the imaginary component is $Z_I= [\mathcal{X}_I \ \mathcal{X}_R]v$, for $v = [\Delta_R^T, \Delta_I^T]^T$. If $N> \frac{4CU\delta^{-1}}{(1-\delta^{-1})^2}$, then $ \|\mathbb{E}[Z_R Z_R^T] \|_2+\|\mathbb{E}[Z_I Z_I^T] \|_2 \leq 2\| \Delta \|_2^2[\frac{1}{L}+\frac{1}{2U} +{4\sqrt{8}\frac{C}{\delta-1}}]$.
\end{lemma}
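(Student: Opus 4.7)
The plan is to reduce the bound on $\|\mathbb{E}[Z_R Z_R^T]\|_2 + \|\mathbb{E}[Z_I Z_I^T]\|_2$ to the spectral norms of the two complex covariance matrices $\mathbb{E}[ZZ^H]$ and $\mathbb{E}[ZZ^T]$, then exploit the block-Toeplitz structure arising from stationarity of the single consecutive trajectory, together with the exponential decay $\|R_x(\tau)\|_2 \leq C\delta^{-|\tau|}$ from Eq.~\ref{eqn:constants}. Writing $Z_R = (Z+\bar{Z})/2$ and $Z_I = (Z-\bar{Z})/(2\iota)$ and expanding the outer products gives
\[
\mathbb{E}[Z_R Z_R^T] = \tfrac{1}{2}\Re(\mathbb{E}[ZZ^H]+\mathbb{E}[ZZ^T]),\qquad \mathbb{E}[Z_I Z_I^T] = \tfrac{1}{2}\Re(\mathbb{E}[ZZ^H]-\mathbb{E}[ZZ^T]).
\]
Since $\|\Re(M)\|_2 \leq \|M\|_2$, the triangle inequality reduces the target to $\|\mathbb{E}[ZZ^H]\|_2 + \|\mathbb{E}[ZZ^T]\|_2$.

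Because the $n$ blocks are consecutive windows of one WSS trajectory, both $n \times n$ covariance matrices are Toeplitz: the $(r,s)$ entry $\mathbb{E}[Z_r\bar{Z}_s] = \Delta^T K_{r-s}\bar{\Delta}$ (and similarly $\Delta^T K'_{r-s}\Delta$ for $\mathbb{E}[ZZ^T]$) depends only on $\tau = r-s$, where $K_\tau := \mathbb{E}[X_{\overline{i}}^r (X_{\overline{i}}^{r-\tau})^H]$ and $K'_\tau := \mathbb{E}[X_{\overline{i}}^r (X_{\overline{i}}^{r-\tau})^T]$. The spectral norm of such a Toeplitz matrix is bounded by the absolute row-sum of its generating sequence, yielding $\|\mathbb{E}[ZZ^H]\|_2 \leq \|\Delta\|_2^2 \sum_\tau \|K_\tau\|_2$, and analogously for $\mathbb{E}[ZZ^T]$. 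For $\tau = 0$, $K_0 = \hat{\Phi}_{\overline{i}}$ and Lemma~\ref{lem:N_bound} gives $\|K_0\|_2 \leq \frac{1}{L}+\frac{1}{2U}$; the same bound applies to $K'_0$ by a parallel argument using the PSD decomposition $\mathbb{E}[X_R X_R^T] + \mathbb{E}[X_I X_I^T] = \Re(\hat{\Phi}_{\overline{i}})$.

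For $|\tau|\geq 1$, substituting the DFT definition (Eq.~\ref{def:X_i}) with the consecutive-block indexing $x^r(k) = x((r-1)N+k)$ gives
\[
K_\tau = \tfrac{1}{N}\sum_{u,v=0}^{N-1}\bigl(R_x(\tau N + u - v)\bigr)_{\overline{i}\overline{i}}\,e^{-\iota f(u-v)},
\]
so $\|K_\tau\|_2 \leq \tfrac{C}{N}\sum_{u,v=0}^{N-1}\delta^{-|\tau N + u - v|}$. For $|\tau|\geq 1$ the lag $\tau N + u - v$ is sign-definite with magnitude at least $(|\tau|-1)N + 1$; changing variables to $k = \tau N + u - v$ with multiplicity at most $N$ bounds the double sum by $N \cdot \delta^{-(|\tau|-1)N-1}/(1-\delta^{-1}) = \frac{N\delta^{-(|\tau|-1)N}}{\delta-1}$, so the $1/N$ prefactor cancels and $\|K_\tau\|_2 \leq \frac{C\delta^{-(|\tau|-1)N}}{\delta-1}$. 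Summing the geometric series in $\tau$ and invoking the hypothesis $N > \frac{4CU\delta^{-1}}{(1-\delta^{-1})^2}$ (which forces $1/(1-\delta^{-N})$ to be bounded by a universal constant) yields $\sum_{\tau \neq 0}\|K_\tau\|_2 = O(\frac{C}{\delta-1})$. An identical argument handles $K'_\tau$ (only the phase $e^{-\iota f(u+v)}$ changes, not the modulus bound), and combining the diagonal and off-diagonal contributions for both $\mathbb{E}[ZZ^H]$ and $\mathbb{E}[ZZ^T]$ produces the stated bound, with the $4\sqrt{8}$ constant absorbing the $\Re(\cdot)$-conversion, the $1/(1-\delta^{-N})$ correction, and the doubling across the two covariance matrices.

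The main obstacle is keeping the bound on $\|K_\tau\|_2$ sharp enough that it does not inherit a factor of $N$: a naive termwise bound $\sum_{u,v}\delta^{-|\tau N+u-v|} \leq N^2\delta^{-(|\tau|-1)N-1}$ yields a useless $O(CN)$ estimate per $\tau$. The key point is that the $N$ pairs $(u,v)$ mapping to each lag $k$, combined with the rapid geometric decay $\delta^{-|k|}$ starting at $|k|\geq (|\tau|-1)N+1$, contribute only $N$ (not $N^2$) to the sum, which exactly cancels the $1/N$ normalization of the DFT; this yields the $\frac{1}{\delta-1}$ scaling in the final bound rather than a growing or $\frac{1}{(\delta-1)^2}$-type dependence.
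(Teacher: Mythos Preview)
Your complex-variable decomposition $Z_R Z_R^T=\tfrac12\Re(ZZ^H+ZZ^T)$, $Z_I Z_I^T=\tfrac12\Re(ZZ^H-ZZ^T)$ is a genuinely different route from the paper's. The paper never passes through $ZZ^H$ or $ZZ^T$; it works entirely in real coordinates, bounding each entry $|\mathbb{E}[Z_RZ_R^T](r,c)|$ by $\|\Delta\|_2^2\sqrt{8}\,B^{rc}$ where $B^{rc}=\tfrac1N\sum_{q=-(N-1)}^{N-1}(N-|q|)\|R_x(q+(r-c)N)\|_2$, and then sums $\max_r\sum_{c\neq r}B^{rc}$. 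Both approaches ultimately rely on the same Toeplitz row-sum bound and the geometric decay $\|R_x(\tau)\|_2\le C\delta^{-|\tau|}$, so they are close cousins; your complex reduction is more compact algebraically, while the paper's real expansion keeps the diagonal term automatically equal to $v^T(\Sigma_1+\Sigma_2)v$ and so plugs directly into Lemma~\ref{lem:N_bound}.

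Two places in your write-up are loose, however. First, the claim that ``the same bound applies to $K'_0$'' via the PSD identity $\mathbb{E}[X_RX_R^T]+\mathbb{E}[X_IX_I^T]=\Re(\hat\Phi_{\overline i})$ only controls $\Re(K'_0)=\mathbb{E}[X_RX_R^T]-\mathbb{E}[X_IX_I^T]$; the imaginary part $\Im(K'_0)=\mathbb{E}[X_RX_I^T]+\mathbb{E}[X_IX_R^T]$ needs a separate argument (it can be bounded via $\|\bar\Sigma\|_2\le\|\Sigma_1+\Sigma_2\|_2$, but then you pick up an extra constant factor). The paper avoids this entirely because its diagonal term is already $v^T(\Sigma_1+\Sigma_2)v$. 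Second, your off-diagonal estimate uses multiplicity $\le N$ and lands at $\sum_{\tau\neq0}\|K_\tau\|_2\le \tfrac{2C}{(\delta-1)(1-\delta^{-N})}$; the assertion that the hypothesis on $N$ makes $1/(1-\delta^{-N})$ a universal constant is not correct (take $C$ small and $\delta$ near $1$). If instead you keep the exact multiplicity $N-|m|$ for $m=u-v$, your $\|K_\tau\|_2$ bound becomes precisely the paper's $B^{r,r-\tau}$, and the paper's cancellation $(1+S_a)\cdot\tfrac{2\delta^{-N}}{1-\delta^{-N}}=\tfrac{2}{\delta-1}$ then removes the $1/(1-\delta^{-N})$ factor cleanly. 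With those two fixes your approach recovers the stated constants; as written it proves a bound of the same form but with unspecified (and not obviously universal) constants.
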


The proof of the above two Lemma's in Appendix.

\begin{lemma}[covariance (consecutive)]\label{lem:C1_bound}
Let $\mathcal{E}:= \mathcal{Y}-\mathcal{X}W_i$ with non-i.i.d. trajectories per row. Let $\mathcal{E}_1:=[\mathcal{E}_R[1]\ \mathcal{E}_I[1] \ ...\ \mathcal{E}_R[n] \ \mathcal{E}_I[n]]^T$ be the re-arranged vector of real and complex entries in $\mathcal{E}$ with covariance matrix $\mathcal{C}_1 = \mathbb{E}[\mathcal{E}_1 \mathcal{E}_1^T]$ Then $\|\mathcal{C}_1 \|_2 \leq \frac{3}{2L}+ {6\sqrt{3}}\frac{U}{L}\frac{2C}{\delta -1}$.
\end{lemma}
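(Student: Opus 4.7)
The plan is to bound $\|\mathcal{C}_1\|_2$ by decomposing it according to its natural $2\times 2$ block structure, matching the diagonal contribution to the i.i.d. analysis of Lemma \ref{lem:C1_iid_bound} and controlling the additional off-diagonal blocks via the exponential decay $\|R_x(\tau)\|_2 \leq C\delta^{-|\tau|}$. Since each $\mathcal{E}[j] = X_i^j - (X_{\overline{i}}^j)^T W_i$ is a linear functional of the DFT coefficients from segment $j$, the matrix $\mathcal{C}_1 \in \mathbb{R}^{2n\times 2n}$ splits into $2\times 2$ blocks indexed by segment pairs $(j,k)$; in the i.i.d. case only the $j=k$ blocks survive, whereas in the consecutive case the $j\neq k$ blocks reflect the correlation between two segments of one long WSS trajectory.

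First, I would bound the diagonal blocks. By wide-sense stationarity, each segment has the same marginal distribution as in the restart-and-record setting, so the argument of Lemma \ref{lem:C1_iid_bound} transfers verbatim and shows that the trace of each $2\times 2$ diagonal block is at most $\frac{3}{2L}$. This yields the $\frac{3}{2L}$ term in the target bound.

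Next, I would bound the off-diagonal blocks. Expanding $\mathbb{E}[\mathcal{E}[j]\,\mathcal{E}[k]^*]$ for $j\neq k$ gives four terms of the form $\mathbb{E}[X_\alpha^j (X_\beta^k)^*]$ with $\alpha,\beta$ ranging over $\{i\}\cup \overline{i}$, each of which, by Eq.~\ref{def:X_i}, equals
\[
\frac{1}{N}\sum_{t_1=0}^{N-1}\sum_{t_2=0}^{N-1} \mathbb{E}[x_\alpha(jN+t_1)\,x_\beta(kN+t_2)]\,e^{-\iota f (t_1-t_2)}.
\]
Applying $\|R_x(\tau)\|_2 \leq C\delta^{-|\tau|}$ entrywise and summing over $t_1,t_2$ for a fixed separation $|j-k|$ gives a bound that, after aggregating over $k\neq j$, collapses into a geometric series whose tail evaluates to $\sum_{\tau\geq 1} C\delta^{-\tau} = \frac{C}{\delta-1}$. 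Gathering the four cross terms and invoking the bound $1+\|W_i\|_2^2 \leq U/L$ from Eq.~\ref{eqn:eq5_Lambda} supplies the $\frac{U}{L}\cdot \frac{2C}{\delta-1}$ factor.

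Finally, I would combine the two contributions via a block-row-sum bound $\|\mathcal{C}_1\|_2 \leq \max_j \sum_k \|[\mathcal{C}_1]_{jk}\|_2$ (valid for symmetric $\mathcal{C}_1$ through $\|\mathcal{C}_1\|_2 \leq \sqrt{\|\mathcal{C}_1\|_1\,\|\mathcal{C}_1\|_\infty}$), adding the single diagonal contribution to the tail of off-diagonal terms to obtain $\frac{3}{2L} + 6\sqrt{3}\frac{U}{L}\frac{2C}{\delta-1}$. The main obstacle is the bookkeeping in the off-diagonal estimate: because $R_x(\tau)$ is matrix-valued, the DFT normalization $1/N$ interacts nontrivially with the triangular multiplicities from the $(t_1,t_2)$ double sum, and one must verify that the $N$-dependence cancels across segment separations $|j-k|$ so that only the geometric decay $\delta^{-(|j-k|-1)N}$ in $|j-k|$ survives. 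I expect Lemma \ref{lem:Q1Q2_upperbound}, which delivers a structurally identical bound for $\mathcal{X}\Delta$ under non-i.i.d.\ rows, to serve as a close template for these estimates.
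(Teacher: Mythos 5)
Your proposal follows essentially the same route as the paper's proof: a block row-sum bound on $\|\mathcal{C}_1\|_2$, the diagonal blocks handled by stationarity and the argument of Lemma \ref{lem:C1_iid_bound} to give $\frac{3}{2L}$, and the off-diagonal blocks bounded by the quantities $B^{rc}$ from Lemma \ref{lem:Q1Q2_upperbound} together with $1+\|W_i\|_2^2 \leq U/L$, whose sum over $c\neq r$ collapses to $\frac{2C}{\delta-1}$ exactly as in Eq.~\ref{eqn:gamma_sumBound1}. The only detail you leave implicit is the $6\sqrt{3}=2\sqrt{27}$ constant, which in the paper arises from bounding the spectral norm of the $3\times 3$ block cross-correlation matrix and combining the $\mathcal{E}_R$ and $\mathcal{E}_I$ rows; this is bookkeeping rather than a gap.
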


\begin{proof}[{\it{Proof of Lemma \ref{lem:Lambda_nonIID}}}]
The approach for the proof is identical to the proof of Lemma \ref{lem:Lambda} from the article, with few changes. Define $\mathcal{E}:= \mathcal{Y}-\mathcal{X}W_i$. Let $\mathcal{C}_1$ be the covariance matrix of the vector $\mathcal{E}_1:=[\mathcal{E}_R[1]\ \mathcal{E}_I[1] \ ...\ \mathcal{E}_R[n] \ \mathcal{E}_I[n]]^T$. The trajectories aren't independent and $\mathcal{C}_1$ is no more block-diagonal here. An upper bound for $\|\mathcal{C}_1 \|_2$ for this case is provided in Lemma \ref{lem:C1_bound}. Using that, the Lipschitz constant of $f(\mathcal{W}_R, \mathcal{W}_I)$ in Eq.~17 becomes {$\sqrt{\frac{3+ {24\sqrt{3}UC(\delta -1)^{-1}}}{2nL}}$}. Following Lemma \ref{lem:Lambda} of the article, $\lambda \geq 4\sqrt{\frac{(3+ {24\sqrt{3}UC(\delta -1)^{-1}})\log(4p/\epsilon_3)}{nL}}$ gives the result.\end{proof}
\begin{proof}[{\it{Proof of Lemma \ref{lem:REP_nonIID}}}]
Let $Z := \mathcal{X}\Delta \in \C^n$. Its real and imaginary components are $Z_R= [\mathcal{X}_R \ -\mathcal{X}_I]v$ and $Z_I= [\mathcal{X}_I \ \mathcal{X}_R]v$ where $v =[\Delta_R^T \ \Delta_I^T]^T$. We find the lower bounds on $\frac{1}{n}\|Z_R \|_2^2, \frac{1}{n}\|Z_I \|_2^2$ and then combine them to obtain a lower bound of $\frac{1}{n}\|Z \|_2^2 $. Applying Lemma I.2 from the Supplementary material of Negahban and Wainwright (2011) on $Z_R$ and $Z_I$, we have, with individual probability at least $1- [2\exp(-\frac{n(t- \frac{2}{\sqrt{n}})^2}{2})+ 2\exp(-\frac{n}{2})]$ for all $t \geq \frac{2}{\sqrt{n}}$,  
 \small \begin{align}\label{eqn:Z1_bound}
  \frac{1}{n}\|Z_R \|_2^2 &\geq \frac{1}{n}Tr[\mathbb{E}(Z_R Z_R^T)] - 4t\|\mathbb{E}(Z_R Z_R^T) \|_2, \nonumber\\
  \frac{1}{n}\|Z_I \|_2^2 &\geq \frac{1}{n}Tr[\mathbb{E}(Z_I Z_I^T)] - 4t\|\mathbb{E}(Z_I Z_I^T) \|_2.
 \end{align}
 \normalsize
Note that the diagonal values of $\mathbb{E}(Z_R Z_R^T)$ are all equal to $v^T \Sigma_1 v$, and those of $\mathbb{E}(Z_I Z_I^T)$ are equal to $v^T \Sigma_2 v$ with $\Sigma_1, \Sigma_2$ defined in Eq.~20. Using this with Lemma \ref{lem:Q1Q2_upperbound}, we get, $\frac{1}{n}\|Z \|_2^2 = \frac{1}{n}\|Z_R \|_2^2 + \frac{1}{n}\|Z_I \|_2^2 \geq$ 
\small
\begin{align} \label{eqn:Z_bound}
  & v^T(\Sigma_1+\Sigma_2)v- 8t\| \Delta \|_2^2[\frac{1}{L}+\frac{1}{2U} +{4\sqrt{8}\frac{C}{\delta-1}}] \nonumber \\
&\geq \frac{1}{2U}\| \Delta \|_2^2 - 8t\| \Delta \|_2^2[\frac{1}{L}+\frac{1}{2U} +{4\sqrt{8}\frac{C}{\delta-1}}], ~(\because \text{Eq.~23})
\end{align}
\normalsize
holds with probability of at least $1- [4\exp(-\frac{n(t- \frac{2}{\sqrt{n}})^2}{2})+ 4\exp(-\frac{n}{2})]$ for $t \geq \frac{2}{\sqrt{n}}$. Choose $t = \sqrt{\frac{4\log{p}}{n}}$($>>\frac{2}{\sqrt{n}}$ for a large $n$). Then for $n \geq 33^2 \log{p} [\frac{U}{L}+0.5+{4\sqrt{8}\frac{CU}{\delta -1}}]^2$, $\frac{1}{n}\|\mathcal{X}\Delta \|_2^2 \geq \frac{1}{256U}\| \Delta \|_2^2$ holds with probability at least $1- [\frac{4}{p^2}+ 4\exp(-\frac{n}{2})]$. Since $p \geq \sqrt{\frac{4}{\epsilon_2}}$, the statement holds whenever $n \geq 2\log (\frac{4p^2}{p^2 \epsilon_2-4})$. \end{proof}

\section{NUMERICAL RESULTS} \label{sec:results}
We demonstrate the numerical implementation of recovering topology on a Desktop PC with Intel Xeon E5-1620 Processor (8x 3.7 GHz) and 32 GB RAM. We considered a two-dimensional square grid $G$ with $p+1$ nodes and generate samples for Eq.~\ref{eqn:LDM_timedomain} with exogenous input $e(k) = 5s_p[w(k)- 0.3w(k-1)]$, where $w(k),\ w(k-1)$ are sampled from a standard Normal distribution, $s_p \in \mathbb{R}^{(p+1) \times (p+1)}$ is a diagonal matrix containing constants; $h \in \mathbb{R}^{(p+1) \times (p+1)}$ is a weighted adjacency matrix. The scaling $s_p$ is chosen such that $\mathcal{X}$ and $\mathcal{Y}$ in the estimator are column-normalized. 

We reconstruct the topology with a probability of at least $1-\epsilon$, where $\epsilon=0.05$. The numerical experiments are conducted in MATLAB R2020b. 

For a choice of $n$, we generate $n$ trajectories, either independently (restart \& record) or taken as consecutive intervals of a larger trajectory (consecutive). Each trajectory is of length $N = \frac{4CU\delta^{-1}}{(1-\delta^{-1})^2}$, rounded to the nearest integer. From the trajectories, we compute the samples $\{X_i^r,X_{\overline{i}}^r \}_{r=1}^n$ for all the nodes $i \in V$ at frequency $f = \frac{2\pi}{N}$. 

The Regularized Wiener Filter Estimator is solved using CVXR \citep{grant2014cvx}, with $\lambda = 4\sqrt{\frac{3\log(8p^2/\epsilon)}{nL}}$ if trajectories are i.i.d, $\lambda = 4\sqrt{\frac{(3+ {24\sqrt{3}UC(\delta -1)^{-1}})\log(8p^2/\epsilon)}{nL}}$ if trajectories are non-i.i.d. The chosen value of $\lambda$ correspond to the minimum sufficient condition present in Theorem's \ref{thm:structure}, for the i.i.d. and consecutive settings, respectively.

After solving for $\hat{W}_i$, we construct $\hat{E}= \{(i,j) | |\Im(\hat{W}_i[j])|+|\Im(\hat{W}_j[i])|\geq m \}$ ($m$ defined in Eq.~\ref{eqn:constants}). The relative error in reconstructing the topology is defined as the sum of false positive and false negatives. $n_{min}$ is the minimum value of $n$ such that relative error is zero for $45$ out of $45$ random trials. 

The values of $n_{min}$ for various values of $p$ and $\delta$ for the i.i.d. and non-i.i.d. cases are shown in Figure.~\ref{fig:Plot1}. In Figure \ref{fig:Plot1}(b), the correlation strength $\delta^{-1}$ of the trajectories is high, and consequently $N$ is large. For small $\delta^{-1}$, the length of each trajectory can be reduced significantly for reconstructing the topology. For example, in Figure \ref{fig:Plot1}(c), $N$ is much smaller. 
Further, $n_{min}$ is of the order of $\approx 10^7$ rather than a more conservative estimate of $\approx 10^{16}$ as provided by the main theorems.


\begin{figure}[!htb]
 \begin{center}
 \begin{tabular}{lll}
 \begin{tabular}{c}(a)\\\\\end{tabular}~~~~~~~~\includegraphics[width=0.3\columnwidth]{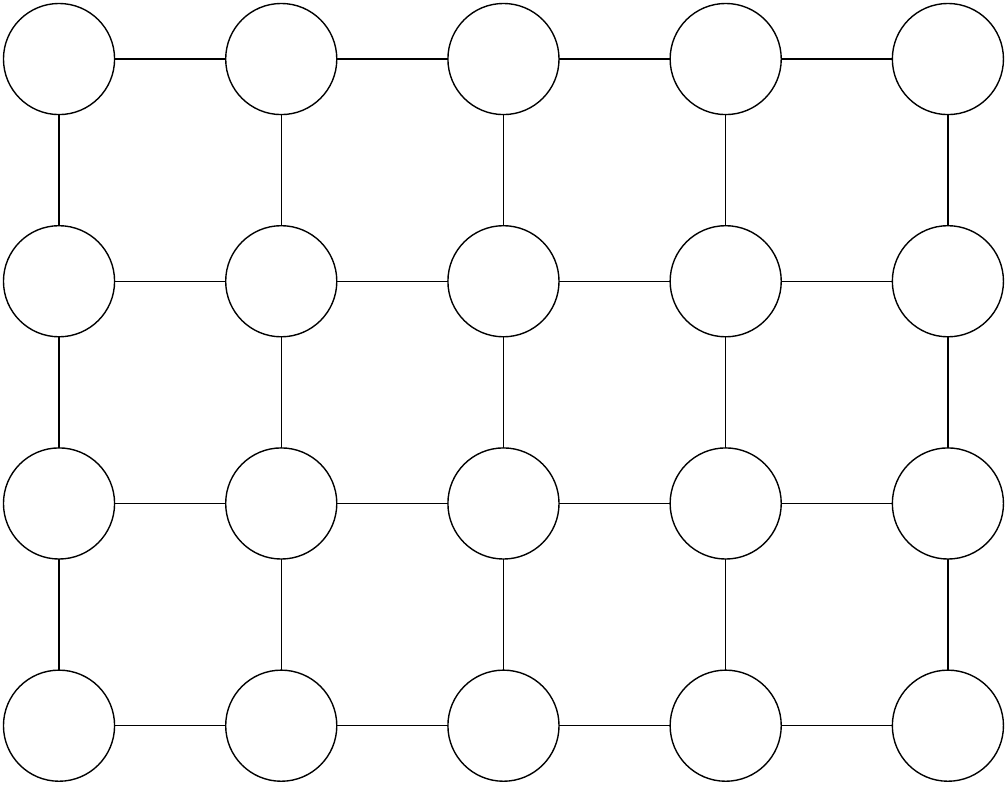}\\
 \begin{tabular}{c}(b)\\\\\\\end{tabular}~~~\includegraphics[width=0.7\columnwidth]{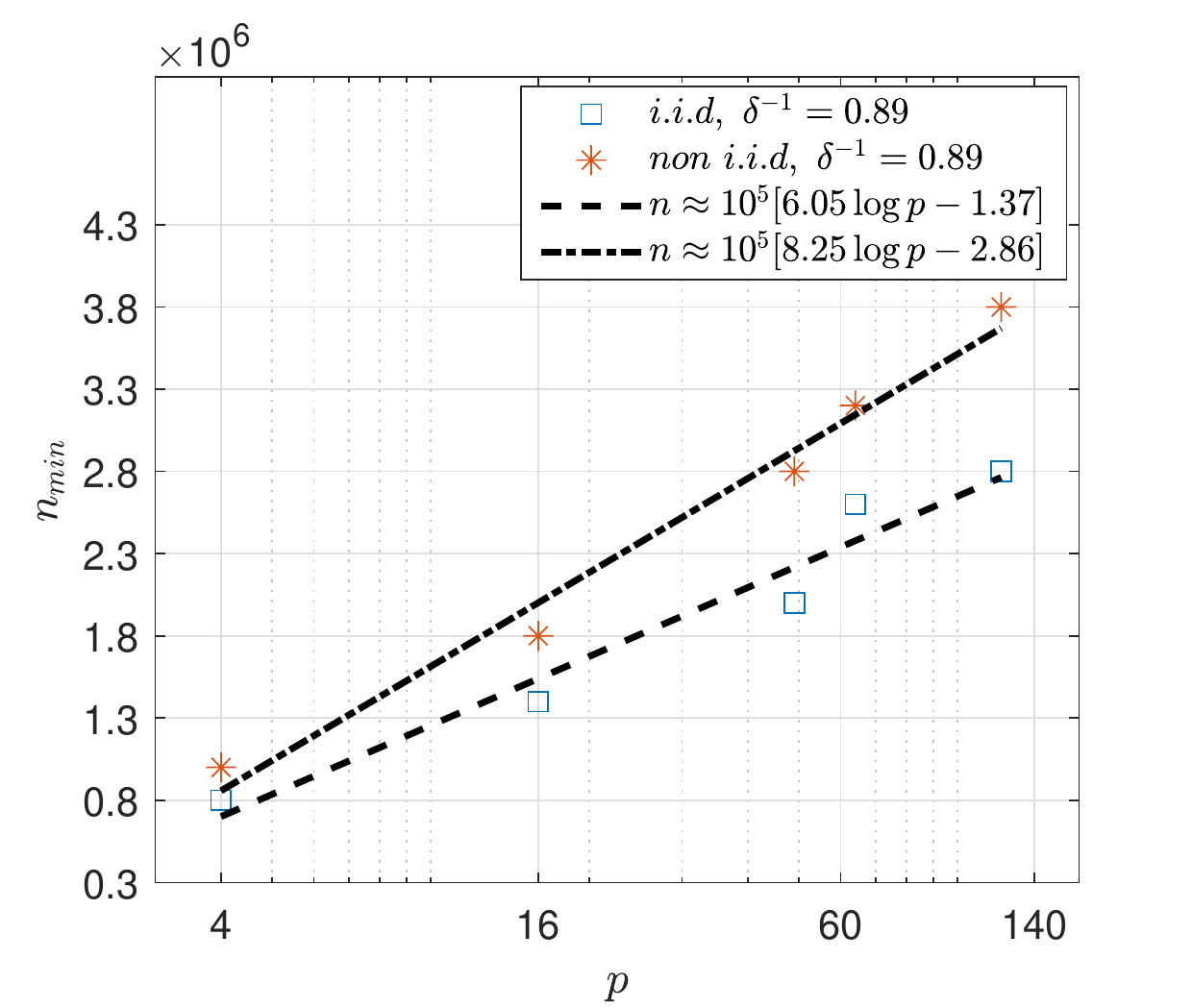}\\
 \begin{tabular}{c}(c)\\\\\\\end{tabular}~~~\includegraphics[width=0.7\columnwidth]{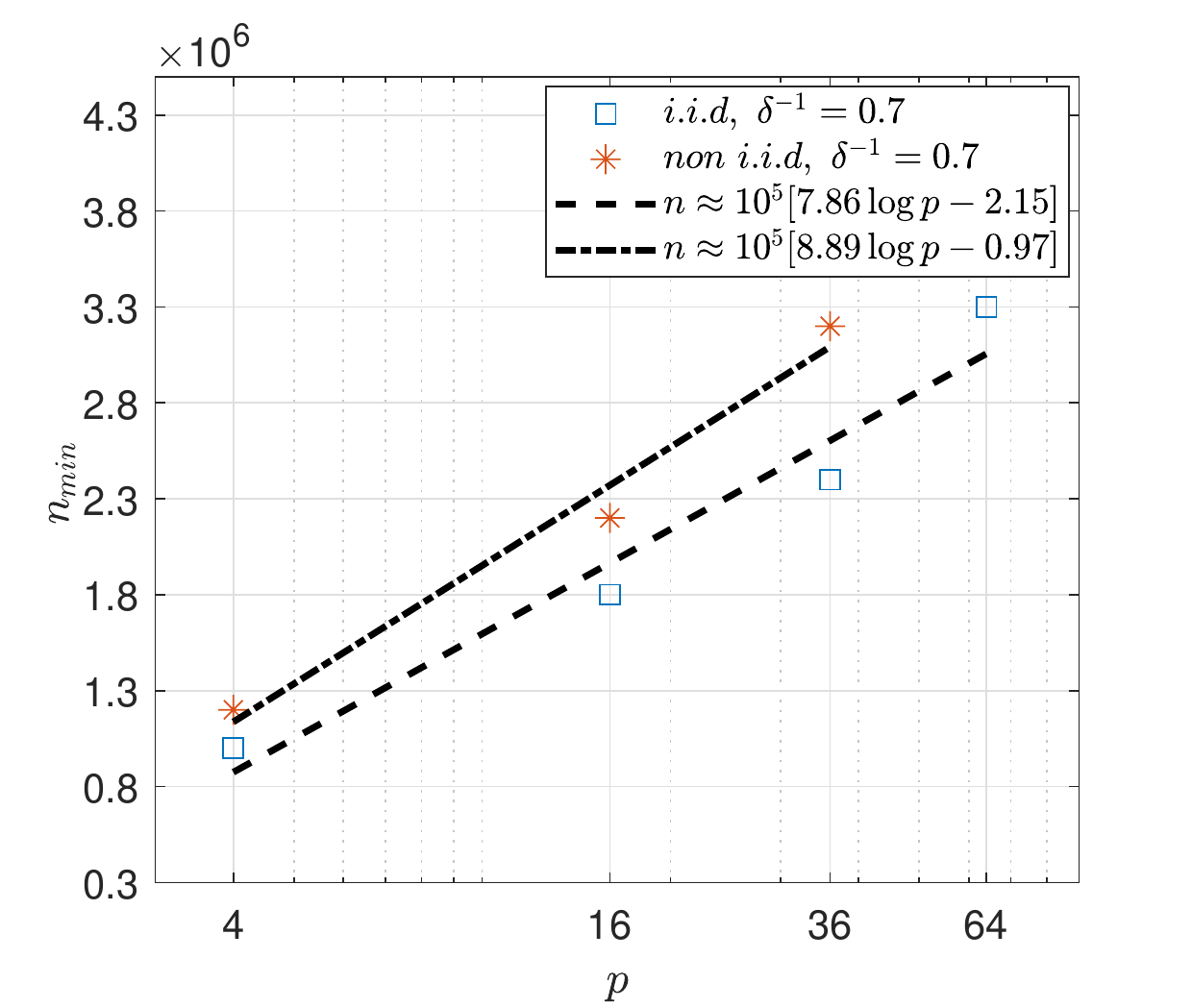}\\
\end{tabular}
\caption{(a) An illustration of $G$ as a $5 \times 5$ grid with $p = 24$. Dependence of $n_{min}$ on $\log p:$ (a) $C = 6.8$, $\delta^{-1} = 0.89$, $U = 1.55$, $L = 0.74$,
$N =\frac{4CU\delta^{-1}}{(1-\delta^{-1})^2}\approx 2900$, (b) $C = 2.8$, $\delta^{-1} = 0.7$, $U = 1.3$, $L = 0.8$, $N =\frac{4CU\delta^{-1}}{(1-\delta^{-1})^2}\approx 115$. Dashed lines in (a) and (b) corresponds to least squares regression fit.
\label{fig:Plot1}}
 \end{center}
 \end{figure}


{\textbf{Numerical Comparison with prior work:} We give empirical comparison with frequency-domain based gLasso-estimator in \citet{jung2015graphical} and unregularized regression in \citet{talukdar2020physics} for a two-dimensional square grid containing 16 nodes. Note that \citet{jung2015graphical} does not lead to correct recovery as Conditional Independence Graph (CIG) doesn't lead to true underlying network. Unlike \citet{talukdar2020physics}, regularization in our algorithm gives improved exact topology recovery in low sample regime. Figure \ref{fig:Jung_comparision} shows the relative error for different values of $n$. The error is computed by averaging over $200$ random trials for each algorithm. Further, for $\epsilon >0$, the fraction of trials with successful topology reconstruction in our experiments is higher than $1-\epsilon.$}
 
\begin{figure}[htb]
 \begin{center}
\includegraphics[width=0.7\columnwidth]{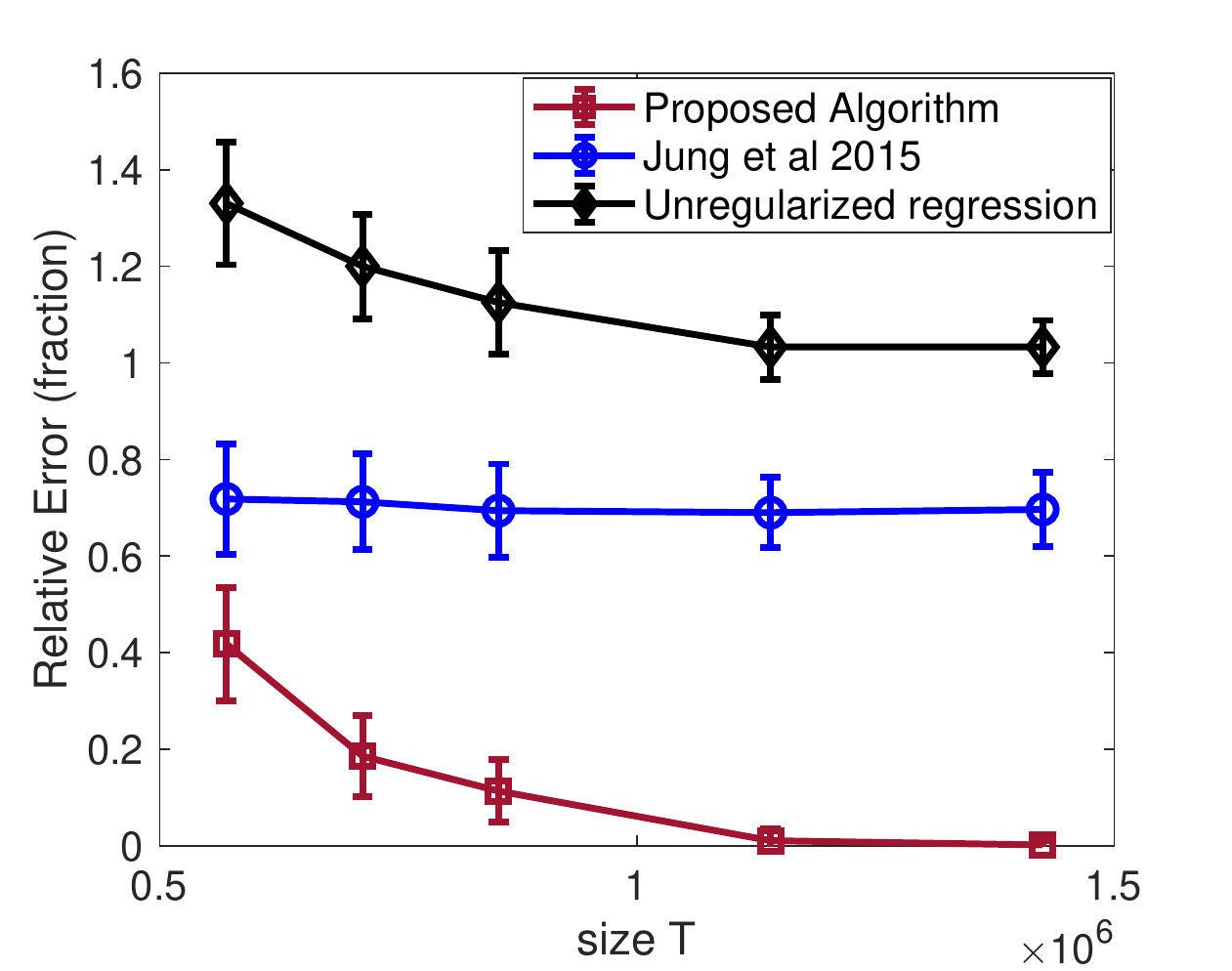} \caption{Reconstruction of exact topology (Proposed Algorithm) vs CIG (\citep{jung2015graphical}).
\label{fig:Jung_comparision}}
\end{center}
\end{figure}

 \section{EXTENSIONS AND PATH FORWARD}
\label{sec:conclusions}
In this article, we presented a regularized Wiener filter estimator to learn the structure of a discrete-time networked LDS. We analyzed the sample complexity of our estimator and showed that it linearly depends the logarithm of the number of nodes $p$ in two cases, one where trajectories of nodal states are collected in independent observation windows of equal length, and another where the trajectories pertain to a single continuous observation window. 

While we discuss our method for first-order discrete-time LDS, our estimator can be extended to learning related networks as highlighted next. 

\textbf{VAR($\tau$) models with correlated inputs:} Lemma \ref{talukdar_lemma1} and our subsequent analysis follows directly if higher-order delays (at the same node) are included in the LDS Eq.~\ref{eqn:LDM_timedomain}. $L,U$ and the sample complexity will need to be changed accordingly.\\
\textbf{Continuous time LDS:} Considering a fixed sampling time $\Delta T$ and a time-discretization function, the continuous time LDS can be converted to a discrete-time LDS with related frequency domain-representation \citep{talukdar2020physics}. The analysis will involve merging the error due to discretization with the finite sample analysis.\\ \textbf{LDS under cyclo-stationary processes:} Cyclo-stationary processes represent a generalization of WSS processes where the statistics such as mean, correlation function are periodic functions of time. As shown in \citet{doddi2019exact}, a lifting operation can be used to represent time-evolution of a cyclo-stationary process as a WSS process with vector-valued states. The remaining analysis of the sample complexity will be similar.\\
\textbf{Directed graphs under correlated inputs:} Note that our estimator uses properties of the inverse power spectral density (see Lemma 2.1 and discussion). Under a strict causality assumption on the linear filters \citep{materassi2012problem, quinn2015directed}, it has been shown that directed edges can be recovered using inverse power spectral density. The framework presented here can thus be extended to efficiently learn a family of directed graphs.

Finally, we plan to analyze the restrictions of our algorithm in learning networked LDS with spatially correlated inputs, and estimating directed networks with non-causal dependencies, where only approximate reconstruction may be possible using passive methods.

\textbf{Non-linear interactions:}
While the theory and validating experiments are conducted for linear dynamics, we claim that the results will follow also for non-linear network dynamics that are approximately linear around an operating point. Such experiments have been described for thermal network of buildings in \citet{talukdar2020physics}. 

\section{Acknowledgements:}
The authors acknowledge support from the Center for Non-Linear Studies (CNLS) and the Information Science and Technology Institute (ISTI) at Los Alamos National Laboratory.

\newpage
\onecolumn
\textbf{Appendix:}
\begin{proof}[\textit{proof of Lemma \ref{lem:Q1Q2_upperbound}}]
For $r \in \{1,\cdots,n \}$, $X^r_i$ ($r^{th}$ entry of $\mathcal{Y}$) and $(X_{\overline{i}}^r)^T$ ($r^{th}$ row of $\mathcal{X}$) are computed using $N$ consecutive samples $\{x((r-1)N), \cdots, x((r-1)N+N-1) \}$, using Eq.~4. For $r,\ c \in \{1,\cdots,n \}$ we have,
 \small
 \begin{align} 
 |\mathbb{E}[Z_R Z_R^T](r,c)| &= |v^T \mathbb{E}\begin{bmatrix} [\mathcal{X}_R(r,:)]^T\mathcal{X}_R(c,:) & -[\mathcal{X}_R(r,:)]^T\mathcal{X}_I(c,:) \\ -[\mathcal{X}_I(r,:)]^T\mathcal{X}_R(c,:) & [\mathcal{X}_I(r,:)]^T\mathcal{X}_I(c,:) \end{bmatrix}v|, \nonumber \\
 \Rightarrow |\mathbb{E}[Z_R Z_R^T](r,c)| &\leq \| \Delta \|_2^2\|\mathbb{E}\begin{bmatrix} [\mathcal{X}_R(r,:)]^T\mathcal{X}_R(c,:) & -[\mathcal{X}_R(r,:)]^T\mathcal{X}_I(c,:) \\ -[\mathcal{X}_I(r,:)]^T\mathcal{X}_R(c,:) & [\mathcal{X}_I(r,:)]^T\mathcal{X}_I(c,:) \end{bmatrix} \|_2.\label{eqn:Q1ij_old}\\
 \text{Similarly,~} |\mathbb{E}[Z_I Z_I^T](r,c)| 
 &\leq \| \Delta \|_2^2\|\mathbb{E}\begin{bmatrix} [\mathcal{X}_I(r,:)]^T\mathcal{X}_I(c,:) & [\mathcal{X}_I(r,:)]^T\mathcal{X}_R(c,:) \\ [\mathcal{X}_R(r,:)]^T\mathcal{X}_I(c,:) & [\mathcal{X}_R(r,:)]^T\mathcal{X}_R(c,:) \end{bmatrix} \|_2.\label{eqn:Q2ij_old}
\end{align}
\normalsize 
Consider $\mathcal{Y}[1] = \frac{1}{\sqrt{N}} \sum_{t=0}^{N-1}x_i(t)e^{-\iota f t}$ and $\mathcal{Y}[2] = \frac{1}{\sqrt{N}} \sum_{s=0}^{N-1}x_i(s+N)e^{-\iota f s}.$ The correlation between $\mathcal{Y}_R[1]$ and $\mathcal{Y}_R[2]$ is given by
\small
  \begin{align*}
    |\mathbb{E}[\mathcal{Y}_R[1]\mathcal{Y}_R[2]]| &= |\mathbb{E}[\frac{1}{N} \sum_{t=0}^{N-1} \sum_{s=0}^{N-1} x_i(t)x_i(s+N)\cos(ft)\cos(fs)]|\\
    &= |\frac{1}{N} \sum_{t=0}^{N-1} \sum_{s=0}^{N-1} \mathbb{E}[x_i(t)x_i(s+N)]\cos(ft)\cos(fs)|\\
    &\leq \frac{1}{N} \sum_{t=0}^{N-1} \sum_{s=0}^{N-1} |R_i(t-s-N) |= \frac{1}{N} \sum_{q=-(N-1)}^{N-1} (N-|q |) |R_i(q-N)|.\\
  \end{align*}
  \normalsize
Expanding in time-domain (see Lemma \ref{lem:Q1Q2_upperbound}'s proof), it can be shown that
 \small
 \begin{align} \label{eqn:RR_correlation}
 & \|\mathbb{E}[\mathcal{X}_R(r,:)^T\mathcal{X}_R(c,:)]\|_2\leq B^{rc}, \text{~~where~~} B^{rc}:= \frac{1}{N} \sum_{q=-(N-1)}^{N-1} (N-|q |) \|R_x(q+(r-c)N)\|_2. \end{align}
 \normalsize 
 Similarly, $\|\mathbb{E}[\mathcal{X}_R(r,:)^T\mathcal{X}_I(c,:)]\|_2$, 
 $\|\mathbb{E}[\mathcal{X}_I(r,:)^T\mathcal{X}_I(c,:)]\|_2$, 
 $\|\mathbb{E}[\mathcal{Y}_R[r]\mathcal{X}_I(c,:)]\|_2$, 
 $\|\mathbb{E}[\mathcal{Y}_R[r]\mathcal{X}_R(c,:)]\|_2$, 
 $\|\mathbb{E}[\mathcal{Y}_R[r]\mathcal{Y}_I[c]]\|_2$ and 
 $\|\mathbb{E}[\mathcal{Y}_R[r]\mathcal{Y}_R[c]]\|_2$ are each upper bounded by $B^{rc}$. From Eqs.~\ref{eqn:Q1ij_old}, ~\ref{eqn:Q2ij_old}, it follows that, $|\mathbb{E}[Z_R Z_R^T](r,c)| \leq \| \Delta \|_2^2\sqrt{8}B^{rc}$ and $|\mathbb{E}[Z_I Z_I^T](r,c)| \leq \| \Delta \|_2^2\sqrt{8}B^{rc}$. Thus, 
\small
 \begin{align} \label{eqn:Q1Q2_bound}
 \|\mathbb{E}[Z_R Z_R^T] \|_2+\|\mathbb{E}[Z_I Z_I^T] \|_2 &\leq \max_{r=1}^n \sum_{c=1}^n|\mathbb{E}[Z_R Z_R^T](r,c)|+\max_{r=1}^n \sum_{c=1}^n|\mathbb{E}[Z_I Z_I^T](r,c)| \nonumber\\
 &\leq 2\max_{r=1}^n \sum_{c=1}^n\ (|\mathbb{E}[Z_R Z_R^T](r,c)|+|\mathbb{E}[Z_I Z_I^T](r,c)|)\nonumber\\
 & = 2[v^T(\Sigma_1+\Sigma_2)v + \max_{r} \sum_{c=1,c\neq r}^n\ (|\mathbb{E}[Z_R Z_R^T](r,c)|+|\mathbb{E}[Z_I Z_I^T](r,c)|)\nonumber\\
 & \leq 2\| \Delta \|_2^2[(\frac{1}{L}+\frac{1}{2U}) + \max_r \sum_{c, c \neq r}2\sqrt{8}B^{rc} ].
 \end{align}
\normalsize
In the remaining, we find an upper bound for $\max_r \sum_{c, c \neq r}B^{rc} $. From Eq.~\ref{eqn:RR_correlation},

\small
\begin{align*}
 B^{rc}& \leq \frac{1}{N}\sum_{q= -(N-1)}^{N-1}(N-|q|)C \delta^{-|q-cN+rN|}, \ \ (\because \|R_x(\tau) \|_2 \leq C\delta^{-|\tau|} \text{ from Eq.~9}),\\
 & = \frac{C}{N}[ \sum_{q= 1}^{N-1}(N-q) (\delta^{-|-q-cN+rN|}+ \delta^{-|q-cN+rN|})]+ C\delta^{-|-cN+rN|},\\
 & = C\delta^{-|r-c|N}[ \sum_{q= 1}^{N-1}(1-\frac{q}{N})(\delta^q+ \delta^{-q})+1]\\
 & = C\delta^{-|r-c|N}[1+S_a+S_b- \frac{S_c}{N} - \frac{S_d}{N}],\\
\text{where},~& S_a= \sum_{q=1}^{N-1}\delta^q = \frac{\delta^N-\delta}{\delta -1},~S_c = \sum_{q=1}^{N-1}q \delta^q = \frac{\delta - \delta^N}{(\delta-1)^2}+ \frac{(N-1)\delta^N}{\delta-1} ,\\
 & S_b= \sum_{q=1}^{N-1}\delta^{-q} = \frac{1-\delta^{-(N-1)}}{\delta -1}, ~S_d = \sum_{q=1}^{N-1}q \delta^{-q} =\frac{\delta^{-1}-\delta^{-N}}{(1-\delta^{-1})^2}-\frac{(N-1)\delta^{-N}}{1-\delta^{-1}}.
\end{align*}
{
\begin{align} \label{eqn:gamma_sumBound1}
\text{Thus,} \max_r \sum_{c=1,c\neq r}^{n} B^{rc} &\leq C[1+S_a+S_b - \frac{S_c}{N} - \frac{S_d}{N}]\sum_{c=1,c\neq r}^{n}\delta^{-|r-c|N} \nonumber\\
 &\leq C[1+S_a+S_b - \frac{S_c}{N} - \frac{S_d}{N}]\sum_{c=1}^{\infty}2\delta^{-cN}\nonumber\\
 &\leq C[1+S_a+S_b - \frac{S_c}{N} - \frac{S_d}{N}][\frac{2\delta^{-N}}{1-\delta^{-N}}],\nonumber\\
 &=C [1+S_a + \frac{S_c}{N}(\delta^{-N}-1)][\frac{2\delta^{-N}}{1-\delta^{-N}}] \ (\because \text{$\delta^N(S_b-\frac{S_d}{N})=\frac{S_c}{N}$}),\nonumber\\
 &\leq C(1+S_a )\frac{2\delta^{-N}}{1-\delta^{-N}} ~~(\because \delta^{-N}-1 \text{ is negative and can be ignored}), \nonumber \\ 
 &\leq C(1+\frac{\delta^{N}-\delta}{\delta-1})\frac{2\delta^{-N}}{1-\delta^{-N}} = C (\frac{\delta^{N}-1}{\delta-1})(\frac{2}{\delta^N-1}) =  \frac{2C}{\delta-1}.
\end{align}}

Substituting Eq.~\ref{eqn:gamma_sumBound1} in Eq.~\ref{eqn:Q1Q2_bound} gives $\|\mathbb{E}[Z_R Z_R^T] \|_2+\|\mathbb{E}[Z_I Z_I^T] \|_2 \leq 2\| \Delta \|_2^2[\frac{1}{L}+\frac{1}{2U} +{4\sqrt{8}\frac{C}{\delta-1}}]$.
\end{proof}

\begin{proof}[\textit{proof of Lemma \ref{lem:C1_bound}}]
Writing real and imaginary parts of $\mathcal{E}_1$ and using inequality of matrix $2$ and $\infty$-norms, we have,
 \small
 \begin{align} \label{eqn:C1_upperbound}
 \|\mathcal{C}_1 \|_2 &\leq \max \left[\max_{r=1}^n \sum_{c=1}^{n} \left( |\mathbb{E}[\mathcal{E}_R\mathcal{E}_R^T](r,c)| + |\mathbb{E}[\mathcal{E}_R\mathcal{E}_I^T](r,c)| \right), \max_{r=1}^n \sum_{c=1}^{n} \left( |\mathbb{E}[\mathcal{E}_I\mathcal{E}_R^T](r,c)| + |\mathbb{E}[\mathcal{E}_I\mathcal{E}_I^T](r,c)|\right) \right],\nonumber \\
 &\leq \max \begin{bmatrix}
 \max_{r=1}^n \sum_{c=1, c \neq r}^{n} \left(|\mathbb{E}[\mathcal{E}_R\mathcal{E}_R^T](r,c)| + |\mathbb{E}[\mathcal{E}_R\mathcal{E}_I^T](r,c)| \right),\\\max_{r=1}^n \sum_{c=1, c \neq r}^{n} \left(|\mathbb{E}[\mathcal{E}_I\mathcal{E}_R^T](r,c)| + |\mathbb{E}[\mathcal{E}_I\mathcal{E}_I^T](r,c)| \right)
 \end{bmatrix} + \frac{3}{2L} \ \ \ \ (\because \text{ Using Eq.~\ref{def:C_bound}}).
 \end{align}
 \normalsize
Split $\mathcal{E}= \mathcal{Y}- \mathcal{X}W_i$ into their real and imaginary parts (subscripted by $R$ and $I$ respectively). Now $\mathcal{E}_R = \begin{bmatrix} \mathcal{Y}_R & -\mathcal{X}_R & \mathcal{X}_I\end{bmatrix}
 \begin{bmatrix} 1 & (W_i)_R & (W_i)_I \end{bmatrix}^T$, and $\mathcal{E}_I = \begin{bmatrix} \mathcal{Y}_I & -\mathcal{X}_I & -\mathcal{X}_R\end{bmatrix}
 \begin{bmatrix} 1 & (W_i)_R & (W_i)_I \end{bmatrix}^T$. For $r,c \in \{1,\cdots,n \}$, $r \neq c$, the correlation between the $r^{th}$ and $c^{th}$ sample is 
\small
\begin{align*}&\mathbb{E}[\mathcal{E}_R[r] \mathcal{E}_R[c]] = \nonumber\\
&\begin{bmatrix} 1 & (W_i^T)_R & (W_i^T)_I \end{bmatrix} \begin{bmatrix} \mathbb{E}(\mathcal{Y}_R[r]\mathcal{Y}_R[c]) & -\mathbb{E}(\mathcal{Y}_R[r]\mathcal{X}_R(c,:)) &\mathbb{E}(\mathcal{Y}_R[r]\mathcal{X}_I(c,:)) \\
-\mathbb{E}(\mathcal{X}_R(r,:)^T\mathcal{Y}_R[c]) &\mathbb{E}(\mathcal{X}_R(r,:)^T\mathcal{X}_R(c,:))
&-\mathbb{E}(\mathcal{X}_R(r,:)^T\mathcal{X}_I(c,:)) \\
\mathbb{E}(\mathcal{X}_I(r,:)^T\mathcal{Y}_R[c]) &-\mathbb{E}(\mathcal{X}_I(r,:)^T\mathcal{X}_R(c,:))
&\mathbb{E}(\mathcal{X}_I(r,:)^T\mathcal{X}_I(c,:)) \\
\end{bmatrix}\begin{bmatrix} 1 \\ (W_i)_R \\ (W_i)_I \end{bmatrix}.
\end{align*} 
\normalsize

From Eq.~\ref{eqn:RR_correlation} and Eq.~\ref{eqn:eq5_Lambda}, it follows that, $|\mathbb{E}[\mathcal{E}_R[r] \mathcal{E}_R[c]]| \leq \frac{U}{L} {\sqrt{27}}B^{rc}$. Similarly,
\begin{align}\label{eqn:relation1}
 &|\mathbb{E}[\mathcal{E}_R[r] \mathcal{E}_I[c]]| \leq \frac{U}{L} {\sqrt{27}}B^{rc}, |\mathbb{E}[\mathcal{E}_I[r] \mathcal{E}_R[c]]| \leq \frac{U}{L} {\sqrt{27}}B^{rc}, |\mathbb{E}[\mathcal{E}_I[r] \mathcal{E}_I[c]]| \leq \frac{U}{L} {\sqrt{27}}B^{rc}.
\end{align}
{
Using the inequalities Eq.~\ref{eqn:relation1} in Eq.~\ref{eqn:C1_upperbound}, we have $\|\mathcal{C}_1 \|_2 \leq \frac{3}{2L}+ 2{\sqrt{27}}\sum_{c=1,c\neq r}^n\frac{U}{L}B^{rc}$. Then, it follows from Eq.~\ref{eqn:gamma_sumBound1}, 
\small
\begin{align} \label{eqn:C1_upperbound2}
\|\mathcal{C}_1 \|_2 &\leq \frac{3}{2L}+ {6\sqrt{3}}\frac{U}{L}\frac{2C}{\delta -1}.
\end{align}
\normalsize}
\end{proof}


\begin{thebibliography}{}

\bibitem[Ascione et~al., 2013]{ascione2013simplified}
Ascione, F., Bianco, N., De~Masi, R.~F., de'Rossi, F., and Vanoli, G.~P.
  (2013).
\newblock Simplified state space representation for evaluating thermal bridges
  in building: Modelling, application and validation of a methodology.
\newblock {\em Applied Thermal Engineering}, 61(2):344--354.

\bibitem[Bach and Jordan, 2004]{bach2004learning}
Bach, F.~R. and Jordan, M.~I. (2004).
\newblock Learning graphical models for stationary time series.
\newblock {\em IEEE transactions on signal processing}, 52(8):2189--2199.

\bibitem[Basu et~al., 2015]{basu2015regularized}
Basu, S., Michailidis, G., et~al. (2015).
\newblock Regularized estimation in sparse high-dimensional time series models.
\newblock {\em The Annals of Statistics}, 43(4):1535--1567.

\bibitem[Bento et~al., 2010]{bento2010learning}
Bento, J., Ibrahimi, M., and Montanari, A. (2010).
\newblock Learning networks of stochastic differential equations.
\newblock {\em arXiv preprint arXiv:1011.0415}.

\bibitem[Dahlhaus, 2000]{dahlhaus2000graphical}
Dahlhaus, R. (2000).
\newblock Graphical interaction models for multivariate time series 1.
\newblock {\em Metrika}, 51(2):157--172.

\bibitem[Dankers et~al., 2015]{dankers2015errors}
Dankers, A., Van~den Hof, P.~M., Bombois, X., and Heuberger, P.~S. (2015).
\newblock Errors-in-variables identification in dynamic networks—consistency
  results for an instrumental variable approach.
\newblock {\em Automatica}, 62:39--50.

\bibitem[Doddi et~al., 2019]{doddi2019exact}
Doddi, H., Talukdar, S., Deka, D., and Salapaka, M. (2019).
\newblock Exact topology learning in a network of cyclostationary processes.
\newblock In {\em 2019 American Control Conference (ACC)}, pages 4968--4973.
  IEEE.

\bibitem[Faradonbeh et~al., 2018]{faradonbeh2018finite}
Faradonbeh, M. K.~S., Tewari, A., and Michailidis, G. (2018).
\newblock Finite time identification in unstable linear systems.
\newblock {\em Automatica}, 96:342--353.

\bibitem[Fattahi et~al., 2019]{fattahi2019learning}
Fattahi, S., Matni, N., and Sojoudi, S. (2019).
\newblock Learning sparse dynamical systems from a single sample trajectory.
\newblock In {\em 2019 IEEE 58th Conference on Decision and Control (CDC)},
  pages 2682--2689. IEEE.

\bibitem[Fattahi and Sojoudi, 2018]{fattahi2018data}
Fattahi, S. and Sojoudi, S. (2018).
\newblock Data-driven sparse system identification.
\newblock In {\em 2018 56th Annual Allerton Conference on Communication,
  Control, and Computing (Allerton)}, pages 462--469. IEEE.

\bibitem[Friedman et~al., 2008]{friedman2008sparse}
Friedman, J., Hastie, T., and Tibshirani, R. (2008).
\newblock Sparse inverse covariance estimation with the graphical lasso.
\newblock {\em Biostatistics}, 9(3):432--441.

\bibitem[Grant and Boyd, 2014]{grant2014cvx}
Grant, M. and Boyd, S. (2014).
\newblock Cvx: Matlab software for disciplined convex programming, version 2.1.

\bibitem[Inchauspe et~al., 2015]{inchauspe2015dynamics}
Inchauspe, J., Ripple, R.~D., and Tr{\"u}ck, S. (2015).
\newblock The dynamics of returns on renewable energy companies: A state-space
  approach.
\newblock {\em Energy Economics}, 48:325--335.

\bibitem[Jung et~al., 2015]{jung2015graphical}
Jung, A., Hannak, G., and Goertz, N. (2015).
\newblock Graphical lasso based model selection for time series.
\newblock {\em IEEE Signal Processing Letters}, 22(10):1781--1785.

\bibitem[Koh et~al., 2009]{koh2009using}
Koh, C., Wu, F.-X., Selvaraj, G., and Kusalik, A.~J. (2009).
\newblock Using a state-space model and location analysis to infer time-delayed
  regulatory networks.
\newblock {\em EURASIP Journal on Bioinformatics and Systems Biology},
  2009:1--14.

\bibitem[Kroutikova et~al., 2007]{kroutikova2007state}
Kroutikova, N., Hernandez-Aramburo, C.~A., and Green, T.~C. (2007).
\newblock State-space model of grid-connected inverters under current control
  mode.
\newblock {\em IET Electric Power Applications}, 1(3):329--338.

\bibitem[Ledoux and Talagrand, 2013]{ledoux2013probability}
Ledoux, M. and Talagrand, M. (2013).
\newblock {\em Probability in Banach Spaces: isoperimetry and processes}.
\newblock Springer Science \& Business Media.

\bibitem[Loh et~al., 2012]{loh2012high}
Loh, P.-L., Wainwright, M.~J., et~al. (2012).
\newblock High-dimensional regression with noisy and missing data: Provable
  guarantees with nonconvexity.
\newblock {\em The Annals of Statistics}, 40(3):1637--1664.

\bibitem[Massart, 2000]{massart2000some}
Massart, P. (2000).
\newblock Some applications of concentration inequalities to statistics.
\newblock In {\em Annales de la Facult{\'e} des sciences de Toulouse:
  Math{\'e}matiques}, volume~9, pages 245--303.

\bibitem[Materassi and Innocenti, 2010]{materassi2010topological}
Materassi, D. and Innocenti, G. (2010).
\newblock Topological identification in networks of dynamical systems.
\newblock {\em IEEE Transactions on Automatic Control}, 55(8):1860--1871.

\bibitem[Materassi and Salapaka, 2012]{materassi2012problem}
Materassi, D. and Salapaka, M.~V. (2012).
\newblock On the problem of reconstructing an unknown topology via locality
  properties of the wiener filter.
\newblock {\em IEEE transactions on automatic control}, 57(7):1765--1777.

\bibitem[Meinshausen and B{\"u}hlmann, 2006]{meinshausen2006high}
Meinshausen, N. and B{\"u}hlmann, P. (2006).
\newblock High-dimensional graphs and variable selection with the lasso.
\newblock {\em The annals of statistics}, 34(3):1436--1462.

\bibitem[Negahban et~al., 2012]{negahban2012unified}
Negahban, S.~N., Ravikumar, P., Wainwright, M.~J., Yu, B., et~al. (2012).
\newblock A unified framework for high-dimensional analysis of $ m $-estimators
  with decomposable regularizers.
\newblock {\em Statistical science}, 27(4):538--557.

\bibitem[Porreca et~al., 2008]{porreca2008structural}
Porreca, R., Drulhe, S., Jong, H.~d., and Ferrari-Trecate, G. (2008).
\newblock Structural identification of piecewise-linear models of genetic
  regulatory networks.
\newblock {\em Journal of Computational Biology}, 15(10):1365--1380.

\bibitem[Quinn et~al., 2015]{quinn2015directed}
Quinn, C.~J., Kiyavash, N., and Coleman, T.~P. (2015).
\newblock Directed information graphs.
\newblock {\em IEEE Transactions on information theory}, 61(12):6887--6909.

\bibitem[Ravikumar et~al., 2008]{ravikumar2008model}
Ravikumar, P., Raskutti, G., Wainwright, M.~J., and Yu, B. (2008).
\newblock Model selection in gaussian graphical models: High-dimensional
  consistency of l1-regularized mle.
\newblock In {\em NIPS}, pages 1329--1336.

\bibitem[Sandefur, 1990]{sandefur1990discrete}
Sandefur, J.~T. (1990).
\newblock {\em Discrete dynamical systems: Theory and applications}.
\newblock Clarendon Press.

\bibitem[Simchowitz et~al., 2018]{simchowitz2018learning}
Simchowitz, M., Mania, H., Tu, S., Jordan, M.~I., and Recht, B. (2018).
\newblock Learning without mixing: Towards a sharp analysis of linear system
  identification.
\newblock In {\em Conference On Learning Theory}, pages 439--473. PMLR.

\bibitem[Songsiri et~al., 2010]{songsiri2010graphical}
Songsiri, J., Dahl, J., and Vandenberghe, L. (2010).
\newblock Graphical models of autoregressive processes.

\bibitem[Stathopoulos and Karlaftis, 2003]{stathopoulos2003multivariate}
Stathopoulos, A. and Karlaftis, M.~G. (2003).
\newblock A multivariate state space approach for urban traffic flow modeling
  and prediction.
\newblock {\em Transportation Research Part C: Emerging Technologies},
  11(2):121--135.

\bibitem[Talukdar et~al., 2020]{talukdar2020physics}
Talukdar, S., Deka, D., Doddi, H., Materassi, D., Chertkov, M., and Salapaka,
  M.~V. (2020).
\newblock Physics informed topology learning in networks of linear dynamical
  systems.
\newblock {\em Automatica}, 112:108705.

\bibitem[Talukdar et~al., 2015]{talukdar2015reconstruction}
Talukdar, S., Prakash, M., Materassi, D., and Salapaka, M.~V. (2015).
\newblock Reconstruction of networks of cyclostationary processes.
\newblock In {\em 2015 54th IEEE Conference on Decision and Control (CDC)},
  pages 783--788. IEEE.

\bibitem[Tank et~al., 2015]{tank2015bayesian}
Tank, A., Foti, N.~J., and Fox, E.~B. (2015).
\newblock Bayesian structure learning for stationary time series.
\newblock In {\em UAI}.

\bibitem[Tibshirani, 1996]{tibshirani1996regression}
Tibshirani, R. (1996).
\newblock Regression shrinkage and selection via the lasso.
\newblock {\em Journal of the Royal Statistical Society: Series B
  (Methodological)}, 58(1):267--288.

\end{thebibliography}
\end{document}